\newtheorem{theorem}{Theorem}
\newtheorem{lemma}{Lemma}
\newtheorem{assumption}{Assumption}
\newtheorem{definition}{Definition}
\begin{document}

\title{Learnability in Online Kernel Selection with Memory Constraint via Data-dependent Regret Analysis
\thanks{This is a pre-print of an article published in Journal of Computer Science and Technology. The final authenticated version is available at: https://doi.org/10.1007/s11390-024-2896-z}}


\date{}

\author{
Junfan Li, Shizhong Liao$\dag$\\
College of Intelligence and Computing, Tianjin University, Tianjin 300350, China\\
\{junfli,szliao\}@tju.edu.cn\\
$\dag$~Corresponding Author}

\maketitle

\begin{abstract}
  Online kernel selection is a fundamental problem of online kernel methods.
  In this paper,
  we study online kernel selection with memory constraint
  in which the memory of kernel selection and online prediction procedures
  is limited to a fixed budget.
  An essential question is
  what is the intrinsic relationship among online learnability, memory constraint, and data complexity?
  To answer the question,
  it is necessary to show the trade-offs between regret and memory constraint.
  Previous work gives a worst-case lower bound depending on the data size,
  and shows learning is impossible within a small memory constraint.
  In contrast, we present distinct results by offering data-dependent upper bounds
  that rely on two data complexities:
  kernel alignment and the cumulative losses of competitive hypothesis.
  We propose an algorithmic framework
  giving data-dependent upper bounds for two types of loss functions.
  For the hinge loss function,
  our algorithm achieves an expected upper bound depending on kernel alignment.
  For smooth loss functions,
  our algorithm achieves a high-probability upper bound
  depending on the cumulative losses of competitive hypothesis.
  We also prove a matching lower bound for smooth loss functions.
  Our results show that if the two data complexities are sub-linear,
  then learning is possible within a small memory constraint.
  Our algorithmic framework depends on a new buffer maintaining framework
  and a reduction from online kernel selection to prediction with expert advice.
  Finally,
  we empirically verify the prediction performance of our algorithms on benchmark datasets.
\end{abstract}


%

\section{Introduction}

    Online kernel selection (OKS) aims to dynamically select a kernel function
    or a reproducing kernel Hilbert space (RKHS)
    for online kernel learning algorithms.
    Compared to offline kernel selection,
    OKS poses more computational challenges for kernel selection algorithms,
    as both the kernel selection procedure and prediction need to occur in real-time.
    We can formulate it as a sequential decision problem.
    Let $\mathcal{K}=\{\kappa_i\}^K_{i=1}$ contain $K$ base kernel functions
    and $\mathcal{H}_i$ be a dense subset of the RKHS induced by $\kappa_i$.
    At round $t$,
    an adversary selects an instances ${\bm x}_t\in\mathbb{R}^d$.
    Then a learner chooses a hypothesis $f_t\in\cup^K_{i=1}\mathcal{H}_i$ and makes a prediction.
    The adversary gives the true output $y_t$,
    and the learner suffers a loss $\ell(f_t({\bm x}_t),y_t)$.
    We use the regret to measure the performance of the learner
    which is defined as
    $$
        \forall \kappa_i\in\mathcal{K},\quad\forall f\in \mathcal{H}_i,\quad\mathrm{Reg}(f)=
        \sum^T_{t=1}\ell(f_{t}({\bm x}_t),y_t) -\sum^T_{t=1}\ell(f({\bm x}_t),y_t).
    $$
    An efficient algorithm should guarantee $\mathrm{Reg}(f)=o(T)$.
    For convex loss functions,
    many algorithms \cite{Sahoo2014Online,Foster2017Parameter,Liao2021High}
     achieve (or imply)
    $$
        \mathrm{Reg}(f)=\tilde{O}\left(\left(\Vert f\Vert^\alpha_{\mathcal{H}_i}+1\right)\sqrt{T\ln{K}}\right),\alpha\in\{1,2\}.
    $$
    The algorithms adapt to the optimal,
    yet unknown hypothesis space within a small information-theoretical cost.

    A major challenge in OKS is the so called curse of dimensionality,
    that is, algorithms must store the previous $t-1$ instances at the $t$-th round.
    The $O(t)$ memory cost is prohibitive for large-scale online learning problems.
    To address this issue,
    we limit the memory of algorithms to a budget of $\mathcal{R}$ quanta.
    For convex loss functions,
    the worst-case regret is
    $\Theta\left(\max\left\{\sqrt{T},\frac{T}{\sqrt{\mathcal{R}}}\right\}\right)$
    \cite{Li2022Worst}
    which proves a trade-off between memory constraint and regret.
    To be specific, achieving a $O(T^{\alpha})$ regret
    requires $\mathcal{R}=\Omega\left(T^{2(1-\alpha)}\right)$, $\alpha\in\left[\frac{1}{2},1\right)$.
    Neither a constant nor a $\Theta(\ln{T})$ memory cost can guarantee sub-linear regret,
    thereby learning is impossible in any $\mathcal{H}_i$.
    We use the regret to define the learnability of a hypothesis space
    (see Definition \ref{JCST:def:learnability}).
    However,
    empirical results have shown that a small memory cost is enough to achieve good prediction performance
    \cite{Dekel2008The,Zhao2012Fast,Zhang2018Online}.
    It seems that learning is possible in the optimal hypothesis space.
    To fill the gap between learnability and regret,
    it is necessary to establish data-dependent regret bounds.
    To this end,
    we focus on the following two questions:
    \begin{enumerate}[\textbf{Q}1]
      \item What are the new trade-offs between memory constraints and regret,
            that is,
            how do certain data-dependent regret bounds depend on $\mathcal{R}$, $T$, and $K$?
      \item Sub-linear regret implies that some hypothesis spaces are learnable.
            Is it possible to achieve sub-linear regret
            within a $O(\ln{T})$ memory cost?
    \end{enumerate}

    In this paper,
    we will answer the two questions affirmatively.
    We first propose an algorithmic framework.
    Then we apply the algorithmic framework to two types of loss functions
    and achieve two kinds of data-dependent regret bounds.
    To maintain the memory constraint,
    we reduce it to the budget that limits the size of the buffer.
    Our algorithmic framework will use the buffer to store a subset of the observed examples.
    The main results are summarized as follows.
    \begin{enumerate}[1)]
      \item For the hinge loss function,
            our algorithm enjoys an expected kernel alignment regret bound as follows
            (see Theorem \ref{thm:JCST2022:regret-bound:M-OMD-H} for detailed result),
            \begin{equation}
            \label{eq:JCST2022:simplified_kernel_alignment_bound}
                \forall\kappa_i\in\mathcal{K},\forall f\in\mathbb{H}_i,\quad\mathbb{E}[\mathrm{Reg}(f)]=
                \tilde{O}\left(\sqrt{L_{T}(f)\ln{K}}+
                \frac{\sqrt{\mathcal{R}}}{\sqrt{K}}+
                \frac{\sqrt{K}}{\sqrt{\mathcal{R}}}\mathcal{A}_{T,\kappa_i}\right),
            \end{equation}
            where $\mathcal{A}_{T,\kappa_i}$ is called kernel alignment,
            $\mathbb{H}_i\subseteq \mathcal{H}_i$ and
            \begin{equation}
            \label{eq:JCST2023:cumulative_loss}
                L_{T}(f)=\sum^T_{t=1}\ell(f({\bm x}_t),y_t).
            \end{equation}
      \item For smooth loss functions
            (see Assumption \ref{ass:JCST2022:property_smooth_loss}),
            our algorithm achieves a high-probability small-loss bound
            (see Theorem \ref{thm:JCST2022:small_loss_bound:M-OMD-S} for detailed result).
            $\forall \kappa_i\in\mathcal{K},\forall f\in \mathbb{H}_i$,
            with probability at least $1-\delta$,
            \begin{equation}
            \label{eq:JCST2022:simplified_small-loss_bound}
            \begin{split}
                \mathrm{Reg}(f)=
                \tilde{O}\left(\frac{L_{T}(f)}{\sqrt{\mathcal{R}}}+
                \sqrt{L_{T}(f)\ln{K}}+\sqrt{\mathcal{R}}\right).
            \end{split}
            \end{equation}
      \item For smooth loss functions,
            we prove a lower bound on the regret
            (see Theorem \ref{thm:JCST2022:lower_bound_small_loss} for detailed result).
            Let $K=1$.
            The regret of any algorithm that stores $B$ examples
            satisfies
            $$
                \exists f\in\mathbb{H},\quad
                \mathbb{E}\left[\mathrm{Reg}(f)\right]=\Omega\left(U\cdot\frac{L_T(f)}{\sqrt{B}}\right),
            $$
            in which $U>0$ is a constant and bounds the norm of all hypotheses in $\mathbb{H}$.
            The upper bound in \eqref{eq:JCST2022:simplified_small-loss_bound} is optimal in terms of the dependence on $L_{T}(f)$.
    \end{enumerate}

    The data-dependent bounds in
    \eqref{eq:JCST2022:simplified_kernel_alignment_bound} and
    \eqref{eq:JCST2022:simplified_small-loss_bound}
    improve the bound
    $O\left(\sqrt{T\ln{K}}+\Vert f\Vert^2_{\mathcal{H}_i}\max\{\sqrt{T},\frac{T}{\sqrt{\mathcal{R}}}\}\right)$
    \cite{Li2022Worst},
    given that $\mathcal{A}_{T,\kappa_i}=O(T)$ and $L_T(f)=O(T)$.
    If $\kappa_i$ matches well with the data,
    then we expect $\mathcal{A}_{T,\kappa_i}\ll T$ or $L_T(f)\ll T$.
    In the worst case, i.e.,
    $\mathcal{A}_{T,\kappa_i}=\Theta(T)$ or $L_T(f)=\Theta(T)$,
    our bounds are same with previous result.
    We give new trade-offs between memory constraint and regret
    which answers \textbf{Q}1.
    For online kernel selection,
    we just aim to adapt to the optimal kernel $\kappa^\ast\in\mathcal{K}$.
    If $\kappa^\ast$ matches well with the data,
    then we expect $\mathcal{A}_{T,\kappa^\ast}=o(T)$ and $\min_{f\in\mathcal{H}_{\kappa^\ast}}L_T(f)=o(T)$.
    In this case,
    a $\Theta(\ln{T})$ memory cost is enough to achieve sub-linear regret
    which answers \textbf{Q}2.
    Thus learning is possible in $\mathbb{H}_{\kappa^\ast}$ within a small memory constraint.

    Our algorithmic framework reduces OKS to prediction with expert advice
    and uses the well-known optimistic mirror descent framework
    \cite{Chiang2012Online,Rakhlin2013Online}.
    We also propose a new buffer maintaining framework.

\section{Related Work}

    Previous work has adopted buffer maintaining technique and random features
    to develop algorithms for online kernel selection with memory constraint
    \cite{Sahoo2014Online,Liao2021High,Zhang2018Online,Shen2019Random}.
    However, all of existing results can not answer \textbf{Q}1 and \textbf{Q}2 well.
    The sketch-based online kernel selection algorithm \cite{Zhang2018Online}
    enjoys a $O(B\ln{T})$ regularized regret
    (the loss function incorporates a regularizer) within a buffer of size $B$.
    The regret bound becomes $O(T\ln{T})$ in the case of $B=\Theta(T)$.
    The Raker algorithm \cite{Shen2019Random} achieves a $O\left(\sqrt{T\ln{K}}+\frac{T}{\sqrt{D}}\right)$
    \footnote{The original regret bound is $O(U\sqrt{T\ln{K}}+\epsilon TU)$,
    and holds with probability $1-\Theta\left(\epsilon^{-2}\exp\left(-\frac{D}{4d+8}\epsilon^2\right)\right)$.}
    regret and suffers a space complexity in $O((d+K)D)$,
    where $D$ is the number of random features.
    The upper bound also shows a trade-off in the worst case.
    If $D$ is a constant or $D=\Theta(\ln{T})$,
    then Raker can not achieve a sub-linear regret bound,
    providing a negative answer to \textbf{Q}2.

    Our work is also related to achieving data-dependent regret bounds
    for online learning and especially online kernel learning.
    For online learning,
    various data-dependent regret bounds including
    small-loss bound \cite{Cesa-Bianchi2006Prediction,Lykouris2018Small,Lee2020Bias},
    variance bound \cite{Hazan2009Better,Hazan2010Extracting}
    and path-length bound
    \cite{Chiang2012Online,Steinhardt2014Adaptivity,Wei2018More},
    have been established.
    The adaptive online learning framework \cite{Foster2015Adaptive}
    can achieve data-dependent and model-dependent regret bounds,
    but can not induce a computationally efficient algorithm.
    For online kernel learning,
    it is harder to achieve data-dependent regret bounds,
    as we must balance the computational cost.
    For loss functions satisfying specific smoothness
    (see Assumption \ref{ass:JCST2022:property_smooth_loss}),
    the OSKL algorithm \cite{Zhang2013Online} achieves a small-loss bound.
    For loss functions with a curvature property,
    the PROS-N-KONS algorithm \cite{Calandriello2017Efficient}
    and the PKAWV algorithm \cite{Jezequel2019Efficient}
    achieve regret bounds depending on the effective dimension of the kernel matrix.
    None of these algorithms took memory constraints into account,
    failing to provide a trade-off between memory constraint and regret.

\section{Problem Setup}

    Let $\mathcal{I}_T:=\{({\bm x}_t,y_t)\}_{t\in[T]}$ be a sequence of examples,
    where ${\bm x}_t\in\mathcal{X}\subseteq\mathbb{R}^d, y_t\in [-1,1]$
    and $[T] := \{1,\ldots,T\}$.
    Let $\kappa(\cdot,\cdot):\mathbb{R}^d \times \mathbb{R}^d \rightarrow \mathbb{R}$
    be a positive semidefinite kernel function
    and $\mathcal{H}_{\kappa}$ be a dense subset of the associated RKHS,
    such that, for any $f\in\mathcal{H}_{\kappa}$,
    \begin{enumerate}[(i)]
      \item $\langle f,\kappa({\bm x},\cdot)\rangle_{\mathcal{H}_{\kappa}}=f({\bm x}),
    \forall {\bm x}\in\mathcal{X}$,
      \item $\mathcal{H}_{\kappa}=\overline{\mathrm{span}(\kappa({\bm x}_t,\cdot)\vert t\in[T])}$.
    \end{enumerate}
    We define $\langle\cdot,\cdot\rangle_{\mathcal{H}_{\kappa}}$ as the inner product in $\mathcal{H}_{\kappa}$,
    which induces the norm $\Vert f\Vert_{\mathcal{H}_{\kappa}}=\sqrt{\langle f,f\rangle_{\mathcal{H}_{\kappa}}}$.
    For simplicity,
    we will omit the subscript $\mathcal{H}_{\kappa}$ in the inner product.
    Let
    $\ell(\cdot,\cdot):\mathbb{R}\times[-1,1]\rightarrow \mathbb{R}$
    be the loss function.
    Denote by
    $$
        \mathcal{B}_{\psi}(f,g)
        =\psi(f)-\psi(g)-\langle\nabla\psi(g),f-g\rangle,~\forall f,g\in \mathcal{H}_{\kappa},
    $$
    the Bregman divergence associated with a strongly convex regularizer
    $\psi(\cdot):\mathcal{H}_{\kappa}\rightarrow \mathbb{R}$.

    Let $\mathcal{K}=\{\kappa_i\}^K_{i=1}$ contain $K$ base kernels.
    If an oracle gives the optimal kernel $\kappa^\ast\in\mathcal{K}$ for $\mathcal{I}_T$,
    then we just run an online kernel learning algorithm in $\mathcal{H}_{\kappa^{\ast}}$.
    Since $\kappa^\ast$ is unknown,
    the learner aims to develop a kernel selection algorithm
    and generate a sequence of hypotheses $\{f_t\}^T_{t=1}$
    that can compete with any $f\in\mathcal{H}_{\kappa^{\ast}}$,
    that is, the learner hopes to guarantee $\mathrm{Reg}(f)=o(T)$.
    For any $f\in\mathcal{H}_i$,
    we have $f=\sum^T_{t=1}a_t\kappa_i({\bm x}_t,\cdot)$.
    Thus storing $f_t$ needs to store $\{({\bm x}_\tau,y_\tau)\}^{t-1}_{\tau=1}$,
    incurring a memory cost in $O(t)$.
    In this paper,
    we consider online kernel selection with memory constraint.
    Next we define the memory constraint
    and reduce it to the size of a buffer.

    \begin{definition}[Memory Budget \cite{Li2022Worst}]
    \label{def:JCST2022:memory_budget}
        A memory budget of $\mathcal{R}$ quanta is the maximal memory
        that any online kernel selection algorithm can use.
    \end{definition}

    \begin{assumption}[\cite{Li2022Worst}]
    \label{ass:JCST2022:reduction}
        $\forall \kappa_i\in\mathcal{K}$,
        there is a constant $\alpha>0$,
        such that any budgeted online kernel leaning algorithm running in $\mathcal{H}_i$
        can maintain a buffer of size $B\leq\alpha\mathcal{R}$
        within $\mathcal{R}$ quanta memory constraint.
        If the space complexity of algorithm is linear with $B$,
        then ``$=$'' holds.
    \end{assumption}

    Assumption \ref{ass:JCST2022:reduction}
    implies that there is no need to assume $\mathcal{R}=\infty$ unless $T=\infty$,
    as the maximal value of $\mathcal{R}$ satisfies $B=T$.
    The budgeted online kernel learning algorithms are such algorithms
    that operate on a subset of the observed examples,
    such as Forgetron \cite{Dekel2008The}, BSGD \cite{Wang2012Breakingecond}
    to name but a few.
    In Assumption \ref{ass:JCST2022:reduction},
    $\alpha$ is independent of kernel function,
    since the memory cost is used to store the examples and the coefficients.

    \begin{assumption}
    \label{ass:JCST2022:bounded_kernel}
        There is a constant $k_1>0$, such that
        $\forall \kappa_i\in\mathcal{K}$ and $\forall {\bm u}\in\mathcal{X}$,
        $\kappa_i({\bm u},{\bm u}) \in[k_1,1]$.
    \end{assumption}

    We will define two types of data complexities and use them to bound the regret.
    The first one is called kernel alignment defined as follows
    $$
        \mathcal{A}_{T,\kappa_i}=\sum^T_{t=1}\kappa_i({\bm x}_t,{\bm x}_t)
    -\frac{1}{T}{\bm Y}^{\mathrm{T}}{\bm K}_i{\bm Y},
    $$
    where ${\bm Y}=(y_1,\ldots,y_T)^{\mathrm{T}}$
    and ${\bm K}_i$ is the kernel matrix induced by $\kappa_i$.
    $\mathcal{A}_{T,\kappa_i}$ quantifies how well ${\bm K}_i$ matches the examples.
    If ${\bm K}_i={\bm Y}{\bm Y}^{\mathrm{T}}$ is the ideal kernel matrix,
    then $\mathcal{A}_{T,\kappa_i}=\Theta(1)$.
    The second one is called small-loss defined by $\min_{f\in\mathbb{H}_i}L_T(f)$
    where $L_T(f)$ follows \eqref{eq:JCST2023:cumulative_loss} and
    $$
        \mathbb{H}_i=\left\{f\in\mathcal{H}_i:\Vert f\Vert_{\mathcal{H}_i}\leq U\right\}.
    $$
    Both $\mathcal{A}_{T,\kappa_i}$ and $L_T(f)$ are independent of algorithms.

    Finally,
    we define the online learnability which is a variant of the learnability defined in
    \cite{Rakhlin2010Online}.
    \begin{definition}[Online Learnability]
    \label{JCST:def:learnability}
        Given $\mathcal{I}_T$, a hypothesis space $\mathcal{H}$ is said to be online learnable
        if $\lim_{T\rightarrow\infty}\frac{\mathrm{Reg}(f)}{T}=0$, $\forall f\in \mathcal{H}$.
    \end{definition}
    The definition is equivalent to $\mathrm{Reg}(f)=o(T)$.
    We can also replace $\mathrm{Reg}(f)$ with $\mathbb{E}[\mathrm{Reg}(f)]$.
    The learnability defined in \cite{Rakhlin2010Online} holds for the entire sample space,
    i.e., the worst-case examples,
    while our definition only considers a fixed $\mathcal{I}_T$.
    In the worst-case,
    the two kinds of learnability are equivalent.
    Given that $\mathcal{I}_T$ may not always be generated in the worst-case,
    our learnability can adapt to the hardness of $\mathcal{I}_T$.

\section{Algorithmic Framework}

    We reduce OKS to prediction with expert advice (PEA)
    where $\kappa_i$ corresponds to the $i$-th expert.
    The main idea of our algorithmic framework is summarized as follows:
    (i) generating $\{f_{t,i}\in\mathcal{H}_i\}^T_{t=1}$ for all $i\in[K]$;
    (ii) aggregating the predictions $\{f_{t,i}({\bm x}_t)\}^K_{i=1}$.
    The challenge is how to control the size of $\{f_{t,i}\}^K_{i=1}$.
    To this end,
    we propose a new buffer maintaining framework
    containing an adaptive sampling and a removing process.

\subsection{Adaptive Sampling}

    For all $i\in[K]$,
    we use the optimistic mirror descent framework (OMD) \cite{Chiang2012Online,Rakhlin2013Online}
    to generate $\{f_{t,i}\}^T_{t=1}$.
    OMD maintains two sequences of hypothesis, i.e.,
    $\{f_{t,i}\}^T_{t=1}$ and $\{f'_{t,i}\}^T_{t=1}$ which are defined as follows,
    \begin{align}
        f_{t,i}=&\mathop{\arg\min}_{f\in\mathcal{H}_i}
        \left\{\left\langle f,\hat{\nabla}_{t,i}\right\rangle+\mathcal{B}_{\psi_i}\left(f,f'_{t-1,i}\right)\right\},
        \label{eq:JCST2022:OMD_first_updating}\\
        f'_{t,i}=&\mathop{\arg\min}_{f\in\mathbb{H}_i}
        \left\{\left\langle f,\nabla_{t,i}\right\rangle+\mathcal{B}_{\psi_i}\left(f,f'_{t-1,i}\right)\right\},
        \label{eq:JCST2022:OMD_second_updating}
    \end{align}
    where $\nabla_{t,i}$ is the (sub)-gradient of $\ell(f_{t,i}({\bm x}_t),y_t)$,
    $\hat{\nabla}_{t,i}$ is an optimistic estimator of $\nabla_{t,i}$ and
    \begin{align*}
        \psi_i(f)=&\frac{1}{2\lambda_i}\Vert f\Vert^2_{\mathcal{H}_i},\\
     \nabla_{t,i}=&\ell'(f_{t,i}({\bm x}_t),y_t)\cdot\kappa_i({\bm x}_t,\cdot).
    \end{align*}
    Note that the hypothesis space in \eqref{eq:JCST2022:OMD_first_updating}
    and \eqref{eq:JCST2022:OMD_second_updating}
    is $\mathcal{H}_i$ and $\mathbb{H}_i$, respectively.
    This trick \cite{Li2023Improved}
    is critical to obtaining regret bounds depending on kernel alignment.
    At the beginning of round $t$,
    we execute \eqref{eq:JCST2022:OMD_first_updating} and compute $f_{t,i}({\bm x}_t)$.
    The key is to control the size of $f'_{t,i}$.
    By Assumption \ref{ass:JCST2022:reduction},
    we construct a buffer denoted by $S_i$ that stores the examples constructing $f'_{t,i}$.
    Thus we just need to limit the size of $S_i$.
    To this end,
    we propose an adaptive sampling scheme.
    Let $b_{t,i}$ be a Bernoulli random variable satisfying
    \begin{equation}
    \label{eq:JCST2022:sampling_probability}
        \mathbb{P}[b_{t,i}=1]
        =\frac{\Vert \nabla_{t,i}-\hat{\nabla}_{t,i}\Vert^{\nu}_{\mathcal{H}_i}}{Z_t},\quad \nu\in\{1,2\},
    \end{equation}
    where $Z_t$ is a normalizing constant.
    We further define
    \begin{align*}
        \mathrm{con} (a(i)):=&
        \left\Vert\kappa_{i}({\bm x}_{i(s_t)},\cdot)-\kappa_i({\bm x}_t,\cdot)\right\Vert_{\mathcal{H}_i}
        \leq\gamma_{t,i},\\
        {\bm x}_{i(s_t)}=&\mathop{\arg\min}_{{\bm x}_\tau\in S_i}
        \left\Vert\kappa_{i}({\bm x}_{\tau},\cdot)-\kappa_i({\bm x}_t,\cdot)\right\Vert_{\mathcal{H}_i}.
    \end{align*}
    $\mathrm{con} (a(i))$ means that if there is an instance in $S_i$
    that is similar with ${\bm x}_t$,
    then we can use it as a proxy of ${\bm x}_t$.
    In this way, $({\bm x}_t,y_t)$ will not be added into $S_i$.

    If $\nabla_{t,i}= 0$, then $S_i$ keeps unchanged
    and we execute \eqref{eq:JCST2022:OMD_second_updating}.
    If $\nabla_{t,i}\neq 0$ and $\mathrm{con} (a(i))$,
    then $S_i$ keeps unchanged and
    we replace \eqref{eq:JCST2022:OMD_second_updating}
    with \eqref{eq:JCST2022:OMD_second_updating_2}.
    \begin{equation}
    \label{eq:JCST2022:OMD_second_updating_2}
        f'_{t,i}
        =\mathop{\arg\min}_{f\in\mathbb{H}_i}\left\{\left\langle f,\nabla_{i(s_t),i}\right\rangle+\mathcal{B}_{\psi_{i}}\left(f,f'_{t-1,i}\right)\right\}.
    \end{equation}
    If $\nabla_{t,i}\neq 0,\neg \mathrm{con} (a(i))$,
    then we replace \eqref{eq:JCST2022:OMD_second_updating}
    with \eqref{eq:JCST2022:OMD_second_updating_3}.
    \begin{equation}
    \label{eq:JCST2022:OMD_second_updating_3}
        f'_{t,i}=\mathop{\arg\min}_{f\in\mathbb{H}_i}
        \left\{\left\langle f,\tilde{\nabla}_{t,i}\right\rangle+\mathcal{B}_{\psi_{i}}\left(f,f'_{t-1,i}\right)\right\}.
    \end{equation}
    In this case,
    if $b_{t,i}=1$,
    then we add $({\bm x}_t,y_t)$ into $S_i$.
    In the above two equations, we define
    \begin{align*}
         \nabla_{i(s_t),i}=&\ell'(f_{t,i}({\bm x}_t),y_t)\cdot\kappa_i({\bm x}_{i(s_t)},\cdot),\\
         \tilde{\nabla}_{t,i}=&
         \frac{\nabla_{t,i}-\hat{\nabla}_{t,i}}{\mathbb{P}[b_{t,i}=1]}\cdot\mathbb{I}_{b_{t,i}=1}+\hat{\nabla}_{t,i}.
    \end{align*}

\subsection{Removing a Half of Examples}

    We allocate a memory budget $\mathcal{R}_i$ for $f'_{t,i}$
    and require $\sum^K_{i=1}\mathcal{R}_i\leq \mathcal{R}$.
    By Assumption \ref{ass:JCST2022:reduction},
    we just need to limit $\vert S_i\vert\leq \alpha\mathcal{R}_i$.
    At any round $t$,
    if $\vert S_i\vert=\alpha\mathcal{R}_i$ and $b_{t,i}=1$,
    then we remove a half of examples from $S_i$ \cite{Li2023Improved}.
    Let $S_i=\{({\bm x}_{r_j},y_{r_j})\}^{\alpha\mathcal{R}_i}_{j=1}$.
    We rewrite $f'_{t-1,i}$ as follows,
    \begin{align*}
           f'_{t-1,i}=&f'_{t-1,i}(1)+f'_{t-1,i}(2),\\
        f'_{t-1,i}(1)=&\sum^{\alpha\mathcal{R}_i/2}_{j=1}\beta_{r_j}\kappa_i({\bm x}_{r_j},\cdot),\\
        f'_{t-1,i}(2)
        =&\sum^{\alpha\mathcal{R}_i}_{j=\alpha\mathcal{R}_i/2+1}\beta_{r_j}\kappa_i({\bm x}_{r_j},\cdot).
    \end{align*}
    Let $o_t\in\{1,2\}$ and $q_t\in\{1,2\}\setminus \{o_t\}$.
    We will remove $f'_{t-1,i}(o_t)$ from $f'_{t-1,i}$, that is,
    the examples constructing $f'_{t-1,i}(o_t)$ are removed from $S_i$.
    We further project $f'_{t-1,i}(q_t)$ onto $\mathbb{H}_i$ by
    $$
        \bar{f}'_{t-1,i}(q_t)=\mathop{\arg\min}_{f\in\mathbb{H}_i}
        \left\Vert f- f'_{t-1,i}(q_t)\right\Vert^2_{\mathcal{H}}.
    $$
    Then we execute the second mirror descent and
    redefine \eqref{eq:JCST2022:OMD_second_updating_3} as follows
    \begin{equation}
    \label{eq:JCST2022:OMD_second_updating_4}
        f'_{t,i}=\mathop{\arg\min}_{f\in\mathbb{H}_i}
        \left\{\left\langle f,\tilde{\nabla}_{t,i}\right\rangle+\mathcal{B}_{\psi_{i}}\left(f,\bar{f}'_{t-1,i}(q_t)\right)\right\}.
    \end{equation}

    Our buffer maintaining framework always ensures $\vert S_i\vert\leq \alpha\mathcal{R}_i$.
    A natural approach is the restart technique
    which removes all of the examples in $S_i$, i.e., resetting $S_i=\emptyset$.
    We will prove that removing a half of examples can achieve better regret bounds.

\subsection{Reduction to PEA}

    Similar to previous online multi-kernel learning algorithms \cite{Sahoo2014Online,Shen2019Random,Jin2010Online},
    we use PEA framework  \cite{Cesa-Bianchi2006Prediction} to
    aggregate $K$ predictions.
    Let $\Delta_{K-1}$ be the $K-1$ dimensional simplex.
    At each round $t$,
    we maintain a probability distribution ${\bm p}_t\in\Delta_{K-1}$ over $\{f_{t,i}\}^K_{i=1}$.
    and output the prediction $f_{t}({\bm x}_t)=\sum^K_{i=1}p_{t,i}f_{t,i}({\bm x}_t)$
    or $\hat{y}_t=\mathrm{sign}(f_{t}({\bm x}_t)$.
    By the multiplicative-weight update in Subsection 2.1 \cite{Cesa-Bianchi2006Prediction},
    ${\bm p}_{t+1}$ is defined as follows
    \begin{equation}
    \label{eq:JCST2022:updating_sampling_probability}
    \begin{split}
        p_{t+1,i}&= \frac{w_{t+1,i}}{\sum^K_{j=1}w_{t+1,j}},\\
        w_{t+1,i}&=\exp\left(-\eta_{t+1}\sum^t_{\tau=1}c_{\tau,i}\right),\\
        \eta_{t+1}=&\frac{\sqrt{2\ln{K}}}{\sqrt{1+\sum^{t}_{\tau=1}\sum^K_{i=1}p_{\tau,i}c^2_{\tau,i}}},
        \eta_1=\sqrt{2\ln{K}},
    \end{split}
    \end{equation}
    where $c_{t,i}=g(f_{t,i}({\bm x}_t),y_t)$.
    $g(\cdot,\cdot):\mathbb{R}^2\rightarrow \mathbb{R}^{+}\cup\{0\}$ will be defined later.

\section{Applications}

    In this section,
    we apply the proposed algorithmic framework to two types of loss functions
    and derive two kinds of data-dependent regret bounds.

\subsection{the Hinge Loss Function}

    We consider online binary classification tasks.
    Let the loss function be the Hinge loss.
    Thus $\nabla_{t,i}=-y_t\kappa_i({\bm x}_t,\cdot)\cdot\mathbb{I}_{y_tf_{t,i}({\bm x}_t)<1}$.
    We use the ``Reservoir Sampling (RS)'' technique \cite{Hazan2009Better,Vitter1985Random}
    to create $\hat{\nabla}_{t,i}$.
    To this end,
    we create a new buffer $V_t$ with size $M\geq 1$.
    Initializing $\hat{\nabla}_{1,i}=0$, and for $t\geq 2$,
    we define
    $$
        \hat{\nabla}_{t,i}=-\frac{1}{\vert V_t\vert}\sum_{({\bm x},y)\in V_t}y\cdot \kappa_{i}({\bm x},\cdot).
    $$
    RS is defined follows.
    At the end of round $t\geq 1$,
    we add $({\bm x}_t,y_t)$ into $V_t$ with probability $\min\left\{1,\frac{M}{t}\right\}$.
    If $\vert V_t\vert=M$
    and we will add the current example,
    then an old example must be removed from $V_t$ uniformly.
    We create another buffer $S_0$
    that stores all of the examples added into $V_t$.
    It is easy to prove that $\mathbb{E}[\vert S_0\vert] \leq M(1+\ln{T})$.
    Let $\mathcal{R}_0=\frac{M(1+\ln{T})}{\alpha}$
    and $\mathcal{R}_{i}= \frac{\mathcal{R}-\mathcal{R}_0}{K}$
    where we require $\mathcal{R}\geq 2\mathcal{R}_0$.
    It is easy to verify that $\sum^K_{i=0}\mathcal{R}_i=\mathcal{R}$.

    We instantiate the sampling scheme \eqref{eq:JCST2022:sampling_probability} as follows,
    $$
        \nu =2,\quad Z_t=\left\Vert \nabla_{t,i}-\hat{\nabla}_{t,i}\right\Vert^2_{\mathcal{H}_i}+
        \left\Vert\hat{\nabla}_{t,i}\right\Vert^2_{\mathcal{H}_i}.
    $$
    The additional term $\left\Vert\hat{\nabla}_{t,i}\right\Vert^2_{\mathcal{H}_i}$ in $Z_t$
    is important for controlling the times of removing operation,
    while it also increases the regret.
    The trick that uses different hypothesis spaces in \eqref{eq:JCST2022:OMD_first_updating}
    and \eqref{eq:JCST2022:OMD_second_updating},
    gives a negative term in the regret bound.
    The negative term can cancel out the increase on the regret.
    Let $o_t=2$.
    We remove $f'_{t-1,i}(2)$ from $f_{t,i}$.
    Then \eqref{eq:JCST2022:OMD_second_updating_4} becomes
    \begin{equation}
    \label{eq:JCST2022:OMD_second_updating_5}
        f'_{t,i}=\mathop{\arg\min}_{f\in\mathbb{H}_i}
        \left\{\left\langle f,\tilde{\nabla}_{t,i}\right\rangle
        +\mathcal{B}_{\psi_{i}}\left(f,\bar{f}'_{t-1,i}(1)\right)\right\}.
    \end{equation}

    At the $t$-th round,
    our algorithm outputs the prediction $\hat{y}_t=\mathrm{sign}(f_{t}({\bm x}_t))$.
    Let ${\bm p}_1$ be the uniform distribution
    and ${\bm p}_{t+1}$ follow \eqref{eq:JCST2022:updating_sampling_probability}
    where $c_{t,i}=\ell(f_{t,i}({\bm x}_t),y_t)$.
    We name this algorithm M-OMD-H (Memory bounded OMD for the Hinge loss)
    and present the pseudo-code
    in Algorithm \ref{alg:JCST2022:M-OMD-H}.

    \begin{algorithm}[t]
        \caption{\small{M-OMD-H}}
        \footnotesize
        \label{alg:JCST2022:M-OMD-H}
        \begin{algorithmic}[1]
        \REQUIRE{$\{\lambda_i\}^K_{i=1}$, $\{\eta_t\}^T_{t=1}$, $\mathcal{R}$, $U$.}
        \ENSURE{$f'_{0,i}=\hat{\nabla}_{1,i}=0$, $S_i=V=\emptyset$.}
        \FOR{$t=1,2,\ldots,T$}
            \STATE Receive ${\bm x}_t$\;
            \FOR{$i=1,\ldots,K$}
                \STATE Find ${\bm x}_{i(s_t)}$
                \STATE Compute $\hat{\nabla}_{t,i}=\frac{-1}{\vert V\vert}\sum_{({\bm x},y)\in V}
                y\kappa_i({\bm x},\cdot)$
                \STATE Compute $f_{t,i}({\bm x}_t)$ according to \eqref{eq:JCST2022:OMD_first_updating}
            \ENDFOR
            \STATE Output $\hat{y}_t=\mathrm{sign}(f_{t}({\bm x}_t))$ and receive $y_t$\;
            \FOR{$i = 1,\ldots,K$}
                \IF{$y_tf_{t,i}({\bm x}_t)<1$}
                    \IF{$\mathrm{con} (a(i))$}
                        \STATE Update $f'_{t,i}$ following \eqref{eq:JCST2022:OMD_second_updating_2}
                    \ELSE
                        \STATE Sample $b_{t,i}$
                        \IF{$b_{t,i}=1$ and $\vert S_i\vert\leq \alpha\mathcal{R}_i-1$}
                            \STATE Update $f'_{t,i}$ following \eqref{eq:JCST2022:OMD_second_updating_3}
                        \ENDIF
                        \IF{$b_{t,i}=1$~$\mathrm{and}$~$\vert S_i\vert= \alpha\mathcal{R}_i$}
                            \STATE Compute $\bar{f}'_{t-1,i}(1)$
                            \STATE Update $f'_{t,i}$ following \eqref{eq:JCST2022:OMD_second_updating_5}
                            \STATE Remove the latest $\frac{\alpha\mathcal{R}_i}{2}$ examples from $S_i$
                        \ENDIF
                        \STATE Update $S_i=S_i\cup \{({\bm x}_t,y_t):b_{t,i}=1\}$
                    \ENDIF
                \ENDIF
            \STATE Compute ${\bm p}_{t+1}$ by \eqref{eq:JCST2022:updating_sampling_probability}
            \STATE Update reservoir $V$ and $S_0$
            \ENDFOR
        \ENDFOR
        \end{algorithmic}
    \end{algorithm}

    \begin{lemma}
    \label{lem:JCST2022:number_of_epoch:hinge_loss}
        Let $M> 1$ and $\alpha\mathcal{R}:=B\geq 2M(1+\ln{T})$.
        For any $\mathcal{I}_T$,
        the expected times that M-OMD-H executes removing operation on $S_i$ are
        $\left\lceil\frac{4K\tilde{\mathcal{A}}_{T,\kappa_i}}{Bk_1}\right\rceil$
        at most, in which
        $$
            \tilde{\mathcal{A}}_{T,\kappa_i}
            =1+\sum^{T}_{t=2}\left\Vert y_t\kappa_{i}({\bm x}_t,\cdot)
            -\frac{\sum_{({\bm x},y)\in V_t}y\kappa_{i}({\bm x},\cdot)}{\vert V_t\vert}\right\Vert^2_{\mathcal{H}_i}.
        $$
    \end{lemma}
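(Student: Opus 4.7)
The plan is to control $N$, the number of times M-OMD-H invokes the removing operation on $S_i$, in two steps. First, a pigeonhole-style accounting bounds $N$ by the total number of successful sampling events $b_{t,i}=1$. Second, the specific form of the sampling probability, together with Assumption \ref{ass:JCST2022:bounded_kernel}, bounds the expected number of such events by $\tilde{\mathcal{A}}_{T,\kappa_i}$.

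For the accounting step, I would track the buffer size $|S_i|$. By Algorithm \ref{alg:JCST2022:M-OMD-H}, $S_i$ grows by one through each successful sampling event, and whenever $|S_i|$ would exceed $\alpha\mathcal{R}_i$ the algorithm first halves the buffer to $\alpha\mathcal{R}_i/2$. Hence between two consecutive removes at least $\alpha\mathcal{R}_i/2$ successful samplings must occur. Setting
\[
A_T \;=\; \sum_{t=1}^{T}\mathbf{1}_{b_{t,i}=1,\;\nabla_{t,i}\neq 0,\;\neg\mathrm{con}(a(i))},
\]
this gives $N \le 2A_T/(\alpha\mathcal{R}_i)$. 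Using $\mathcal{R}_i=(\mathcal{R}-\mathcal{R}_0)/K$, $\alpha\mathcal{R}_0=M(1+\ln{T})$, and the hypothesis $B=\alpha\mathcal{R}\ge 2M(1+\ln{T})$, one obtains $\alpha\mathcal{R}_i\ge B/(2K)$, so $N \le \lceil 4KA_T/B\rceil$.

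For the expectation step, the sampling probability with $\nu=2$ is
\[
\mathbb{P}[b_{t,i}=1] \;=\; \frac{\Vert\nabla_{t,i}-\hat{\nabla}_{t,i}\Vert^2_{\mathcal{H}_i}}{\Vert\nabla_{t,i}-\hat{\nabla}_{t,i}\Vert^2_{\mathcal{H}_i}+\Vert\hat{\nabla}_{t,i}\Vert^2_{\mathcal{H}_i}}.
\]
On active rounds the hinge loss gives $\nabla_{t,i}=-y_t\kappa_i({\bm x}_t,\cdot)$; by the reproducing property and Assumption \ref{ass:JCST2022:bounded_kernel}, $\Vert\nabla_{t,i}\Vert^2_{\mathcal{H}_i}=\kappa_i({\bm x}_t,{\bm x}_t)\ge k_1$. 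The triangle inequality $\Vert\nabla_{t,i}\Vert^2\le 2(\Vert\nabla_{t,i}-\hat{\nabla}_{t,i}\Vert^2+\Vert\hat{\nabla}_{t,i}\Vert^2)$ then yields the denominator lower bound $Z_t\ge k_1/2$, hence $\mathbb{P}[b_{t,i}=1]\le (2/k_1)\Vert\nabla_{t,i}-\hat{\nabla}_{t,i}\Vert^2_{\mathcal{H}_i}$. Substituting $\hat{\nabla}_{t,i}=-\frac{1}{|V_t|}\sum_{({\bm x},y)\in V_t}y\kappa_i({\bm x},\cdot)$ for $t\ge 2$ and bounding the $t=1$ contribution by $\Vert\nabla_{1,i}\Vert^2_{\mathcal{H}_i}\le 1$, summing over $t$ gives $\mathbb{E}[A_T]\le (2/k_1)\tilde{\mathcal{A}}_{T,\kappa_i}$; combined with the display above this yields the stated bound.

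The main obstacle is the lower bound on the denominator $Z_t$. The term $\Vert\hat{\nabla}_{t,i}\Vert^2_{\mathcal{H}_i}$ added to the normalizer looks like slack that would only inflate the sampling probability, but it is precisely what keeps $Z_t$ bounded away from zero when $\hat{\nabla}_{t,i}$ is close to $\nabla_{t,i}$. Without it, $\mathbb{P}[b_{t,i}=1]$ would collapse to $1$ and $\mathbb{E}[A_T]$ could scale with $T$ rather than with the kernel-alignment surrogate $\tilde{\mathcal{A}}_{T,\kappa_i}$.
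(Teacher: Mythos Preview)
Your proposal is correct and follows essentially the same route as the paper: a counting argument that each removing operation on $S_i$ consumes at least $\alpha\mathcal{R}_i/2$ successful samplings, combined with the bound $\mathbb{P}[b_{t,i}=1]\le (2/k_1)\Vert\nabla_{t,i}-\hat{\nabla}_{t,i}\Vert_{\mathcal{H}_i}^2$ obtained from $Z_t\ge\tfrac12\Vert\nabla_{t,i}\Vert^2_{\mathcal{H}_i}\ge k_1/2$ via the triangle inequality and Assumption~\ref{ass:JCST2022:bounded_kernel}. One bookkeeping point: chaining $N\le 4KA_T/B$ with $\mathbb{E}[A_T]\le(2/k_1)\tilde{\mathcal{A}}_{T,\kappa_i}$ actually yields $8K\tilde{\mathcal{A}}_{T,\kappa_i}/(Bk_1)$, a factor of two off from the stated constant; the paper recovers the $4$ by treating the per-kernel buffer capacity as $B/K$ directly (rather than your more careful $\alpha\mathcal{R}_i\ge B/(2K)$), so the discrepancy is in the paper's own accounting of $\mathcal{R}_0$, not in your argument.
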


    Note that that the times of removing operation depend on the kernel alignment,
    rather than $T$.

    \begin{theorem}
    \label{thm:JCST2022:regret-bound:M-OMD-H}
        Suppose $\mathcal{R}$ satisfies the condition in
        Lemma \ref{lem:JCST2022:number_of_epoch:hinge_loss}.
        Let $B>K$, $U>1$, $\lambda_i=U\frac{\sqrt{K}}{\sqrt{2B}}$,
        $\eta_{t}$ follow \eqref{eq:JCST2022:updating_sampling_probability}
        and
        \begin{equation}
        \label{eq:JCST2022:learning_rate}
            \gamma_{t,i}=\frac{\left\Vert \nabla_{t,i}-\hat{\nabla}_{t,i}\right\Vert^2_{\mathcal{H}_i}}
            {\sqrt{1+\sum_{\tau\leq t} \left\Vert \nabla_{\tau,i}-
            \hat{\nabla}_{\tau,i}\right\Vert^2_{\mathcal{H}_i}\cdot\mathbb{I}_{\nabla_{\tau,i}\neq 0}}}.
        \end{equation}
        The expected regret of M-OMD-H satisfies,
        $$c
            \forall\kappa_i\in\mathcal{K},f\in \mathbb{H}_i,\quad
            \mathbb{E}\left[\mathrm{Reg}(f)\right]
            = O\left(\sqrt{UL_T(f)\ln{K}}+\frac{U\sqrt{B}}{\sqrt{K}}
            +\frac{\sqrt{K}U\mathcal{A}_{T,\kappa_i}\ln{T}}{\sqrt{B}k_1}\right).
        $$
    \end{theorem}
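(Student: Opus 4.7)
The plan is to decompose the regret in the standard two–level way used when reducing online kernel selection to prediction with expert advice, and then bound each piece separately. By convexity of the Hinge loss and the aggregation rule $f_t(\bm x_t)=\sum_i p_{t,i}f_{t,i}(\bm x_t)$, Jensen's inequality yields
$$
  \mathrm{Reg}(f)\le\underbrace{\sum_{t=1}^T\sum_{i=1}^K p_{t,i}c_{t,i}-\sum_{t=1}^T c_{t,i^\ast}}_{\text{PEA regret}}\;+\;\underbrace{\sum_{t=1}^T\ell(f_{t,i^\ast}(\bm x_t),y_t)-\sum_{t=1}^T\ell(f(\bm x_t),y_t)}_{\text{OMD regret inside }\mathcal H_{i^\ast}},
$$
where $i^\ast$ is the index of the kernel competing with $f$. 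For the PEA piece I would invoke the standard second-order Hedge bound associated with the time-varying $\eta_{t+1}$ in \eqref{eq:JCST2022:updating_sampling_probability}, which gives $O(\sqrt{(\sum_t c_{t,i^\ast}^2)\ln K})$. Using $c_{t,i^\ast}\le 1+U$ together with $\sum_t c_{t,i^\ast}\le L_T(f)+\text{OMD-regret}(i^\ast)$, the first term is absorbed into $O(\sqrt{UL_T(f)\ln K})$ after a standard self-bounded rearrangement.

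The core work is bounding the OMD regret inside $\mathcal H_{i^\ast}$. I would first run the textbook optimistic–mirror–descent analysis for the updates \eqref{eq:JCST2022:OMD_first_updating}--\eqref{eq:JCST2022:OMD_second_updating}, which, for $\psi_{i}(f)=\Vert f\Vert^2/(2\lambda_i)$, produces a bound of the form
$$
  \sum_{t=1}^T\langle f_{t,i^\ast}-f,\nabla_{t,i^\ast}\rangle\;\le\;\tfrac{U^2}{2\lambda_{i^\ast}}+\lambda_{i^\ast}\!\sum_{t=1}^T\Vert\nabla_{t,i^\ast}-\hat\nabla_{t,i^\ast}\Vert^2_{\mathcal H_{i^\ast}}-\text{(negative stability term)}.
$$
Then I would handle the three modifications introduced by the buffer maintaining framework. (a) When $\mathrm{con}(a(i^\ast))$ holds, replacing $\nabla_{t,i^\ast}$ by $\nabla_{i(s_t),i^\ast}$ introduces an error controlled by $\gamma_{t,i^\ast}$; the choice of $\gamma_{t,i^\ast}$ in \eqref{eq:JCST2022:learning_rate} is designed so that the cumulative error is dominated by $\lambda_{i^\ast}\sum_t\Vert\nabla_{t,i^\ast}-\hat\nabla_{t,i^\ast}\Vert^2$. (b) When the unbiased surrogate $\tilde\nabla_{t,i^\ast}$ is used, I would pass to conditional expectation using $\mathbb{E}[\tilde\nabla_{t,i^\ast}\mid\mathcal F_{t-1}]=\nabla_{t,i^\ast}$, so the expected regret bound is unchanged; the added variance is cancelled by the negative stability term thanks to the extra $\Vert\hat\nabla_{t,i^\ast}\Vert^2$ inside $Z_t$. (c) Each halving operation resets the Bregman centre from $f'_{t-1,i^\ast}$ to $\bar f'_{t-1,i^\ast}(1)$, and I would charge the lost potential by the standard inequality $\mathcal B_{\psi_{i^\ast}}(f,\bar f'_{t-1,i^\ast}(1))\le U^2/(2\lambda_{i^\ast})$; projection onto $\mathbb H_{i^\ast}$ only decreases the divergence.

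To aggregate, I would multiply the per-halving cost $U^2/(2\lambda_{i^\ast})$ by the expected number of halvings, which Lemma \ref{lem:JCST2022:number_of_epoch:hinge_loss} bounds by $O(K\tilde{\mathcal A}_{T,\kappa_{i^\ast}}/(Bk_1))$, and then translate $\mathbb{E}[\tilde{\mathcal A}_{T,\kappa_{i^\ast}}]$ back to $\mathcal A_{T,\kappa_{i^\ast}}$ through the reservoir sampling analysis (the $\ln T$ factor originates here, via $\mathbb{E}[|S_0|]=O(M\ln T)$ and the variance of the reservoir estimator). Plugging $\lambda_{i^\ast}=U\sqrt{K/(2B)}$ balances $U^2/\lambda_{i^\ast}$ against $\lambda_{i^\ast}\mathcal A_{T,\kappa_{i^\ast}}$ and produces exactly the claimed terms $U\sqrt{B}/\sqrt{K}$ and $\sqrt{K}\,U\mathcal A_{T,\kappa_{i^\ast}}\ln T/(\sqrt{B}k_1)$. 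Combining with the PEA bound finishes the theorem.

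The main obstacle I anticipate is the simultaneous control of (c) and (a): the negative stability term produced by OMD must be large enough to absorb both the approximation error from using the proxy instance $\bm x_{i(s_t)}$ and the variance added by sampling $\tilde\nabla_{t,i^\ast}$, while still leaving the halving cost summable. Getting the bookkeeping right—so that the $\Vert\hat\nabla_{t,i^\ast}\Vert^2$ inflation of $Z_t$ only charges the regret an additive $O(U\sqrt{B/K})$, and $\gamma_{t,i^\ast}$ yields a telescoping series via its $1/\sqrt{1+\sum\|\cdot\|^2}$ normaliser—is the delicate step; everything else is a composition of standard OMD/PEA calculations with Lemma \ref{lem:JCST2022:number_of_epoch:hinge_loss}.
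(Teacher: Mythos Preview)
Your proposal is correct and follows essentially the same route as the paper: the same two-level decomposition into a PEA term and an inner OMD term, the same three-way split of the OMD rounds (proxy rounds governed by $\gamma_{t,i}$, sampled rounds with the unbiased surrogate $\tilde\nabla_{t,i}$, and halving rounds charged at $O(U^2/\lambda_i)$ each), the same use of Lemma~\ref{lem:JCST2022:number_of_epoch:hinge_loss} to count halvings, and the same reservoir-sampling step to pass from $\tilde{\mathcal A}_{T,\kappa_i}$ to $\mathcal A_{T,\kappa_i}$ with a $\ln T$ factor. Two small points to tighten when you write it out: the adaptive Hedge rule in \eqref{eq:JCST2022:updating_sampling_probability} yields a second-order term $\sum_t\langle{\bm p}_t,{\bm c}_t^2\rangle$ rather than $\sum_t c_{t,i^\ast}^2$, so the self-bounding argument must go through $\langle{\bm p}_t,{\bm c}_t\rangle$ first; and in the halving rounds there is an additional cross term $\langle\tilde\nabla_{t,i},f'_{t-1,i}-f'_{t,i}\rangle$ (since $\Xi_1$ is centered at $f'_{t-1,i}$ while $\Xi_2$ is centered at $\bar f'_{t-1,i}(1)$) that contributes an extra $O(U)$ per halving---it is lower order but must be accounted for.
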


    For any competitor $f$ satisfying $\Vert f\Vert_{\mathcal{H}_{i}}<1$,
    it must be $L_T(f)=\Theta(T)$,
    inducing a trivial upper bound.
    To address this issue,
    we must set $U>1$ and exclude such competitors.
    Theorem \ref{thm:JCST2022:regret-bound:M-OMD-H}
    reveals how the regret bound increases with $K$, $\mathcal{R}$, and $\mathcal{A}_{T,\kappa_i}$.
    The larger the memory budget is, the smaller the regret bound will be.
    Theorem \ref{thm:JCST2022:regret-bound:M-OMD-H} gives an answer to \textbf{Q}1.
    If the optimal kernel $\kappa^{\ast}\in\mathcal{K}$ matches well with the examples,
    i.e., $\mathcal{A}_{T,\kappa^{\ast}}=o(T)$,
    then it is possible to achieve a $o(T)$ regret bound
    in the case of $\mathcal{R}=\Theta(\ln{T})$.
    Such a result gives an answer to \textbf{Q}2.
    By Definition \ref{JCST:def:learnability},
    learning is possible in $\mathbb{H}_{\kappa^\ast}$.
    The information-theoretical cost that the optimal kernel is unknown
    is $O\left(\sqrt{UL_{T}(f)\ln{K}}\right)$,
    which is a lower order term.

    Our regret bound can also derive the state-of-the-art results.
    Let $B=\Theta(\mathcal{A}_{T,\kappa_i})$.
    Then we obtain a $O\left(U\sqrt{K\mathcal{A}_{T,\kappa_i}}\right)$ expected bound.
    Previous work proved a $O\left(U\sqrt{K}T^{\frac{1}{4}}\mathcal{A}^{\frac{1}{4}}_{T,\kappa_i}\right)$
    high-probability bound \cite{Liao2021High}.
    If $\mathcal{A}_{T,\kappa_i}=O(T)$,
    then we achieve an expected bound of $O\left(U\sqrt{K}\frac{T}{\sqrt{B}}\right)$,
    which matches the upper bound in \cite{Li2022Worst}.
    If $B=\Theta(T)$,
    then our regret bound matches the upper bounds in
    \cite{Sahoo2014Online,Foster2017Parameter,Shen2019Random}.

    Next, we present an algorithm-dependent bound
    demonstrating that removing half of the examples
    can be more effective than the restart technique.
    We introduction two new notations as follows
    \begin{align*}
        \mathcal{J}_i=&\{t\in [T]:\vert S_i\vert=\alpha\mathcal{R}, b_{t,i}=1\},\\
        \Lambda_{i}=&\sum_{t\in \mathcal{J}_i}
        \left[\left\Vert \bar{f}'_{t-1,i}(1)-f\right\Vert^2_{\mathcal{H}_i}
        -\left\Vert f'_{t-1,i}-f\right\Vert^2_{\mathcal{H}_i}\right].
    \end{align*}
    There must be a constant $\xi_i\in(0,4]$
    such that $\Lambda_i=\xi_i U^2\vert \mathcal{J}_i\vert$.
    Recalling that $\bar{f}'_{t-1,i}(1)$ is the initial hypothesis after a removing operation.
    It is natural that if $\left(\bar{f}'_{t-1,i}(1)-f\right)$ is close to $\left(f'_{t-1,i}-f\right)$,
    then $\xi_i\ll 4$.
    The restart technique sets $\bar{f}'_{t-1,i}(1)=0$
    implying $\Lambda_i\leq U^2\vert \mathcal{J}_i\vert$.
    In the worst case,
    our approach is slightly worse than the restart.
    If $\xi_i$ is sufficiently small,
    then our approach is much better than the restart.

    \begin{theorem}[Algorithm-dependent Bound]
    \label{thm:JCST2022:algorithm_dependent:M-OMD-H}
        Suppose the conditions in Theorem \ref{thm:JCST2022:regret-bound:M-OMD-H} are satisfied.
        For each $i\in[K]$,
        let $\lambda_i=\frac{\sqrt{2}U}{\sqrt{5\tilde{\mathcal{A}}_{T,\kappa_i}}}$.
        If $\xi_i\cdot\vert \mathcal{J}_i\vert\leq 1$ for all $i\in[K]$,
        then the expected regret of M-OMD-H satisfies,
        $$
            \forall\kappa_i\in\mathcal{K},f\in \mathbb{H}_i,\quad
            \mathbb{E}\left[\mathrm{Reg}(f)\right]=
            O\left(\sqrt{L_T(f)\ln{K}}+\frac{UK}{B}\mathcal{A}_{T,\kappa_i}\ln{T}+
            U\sqrt{\mathcal{A}_{T,\kappa_i}\ln{T}}\right).
        $$
    \end{theorem}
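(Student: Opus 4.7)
The plan is to follow the same skeleton used for Theorem \ref{thm:JCST2022:regret-bound:M-OMD-H}, but exploit the hypothesis $\xi_i\cdot|\mathcal{J}_i|\leq 1$ to collapse the cost of the half-purging step so that the tuning $\lambda_i=\sqrt{2}U/\sqrt{5\tilde{\mathcal{A}}_{T,\kappa_i}}$ becomes feasible. Concretely, for the target kernel $\kappa_i$ I would split
$$
\mathrm{Reg}(f) = \sum_{t=1}^T \left(\ell(f_t({\bm x}_t),y_t)-\ell(f_{t,i}({\bm x}_t),y_t)\right) + \sum_{t=1}^T \left(\ell(f_{t,i}({\bm x}_t),y_t)-\ell(f({\bm x}_t),y_t)\right),
$$
and bound the first (PEA) sum via the standard second-order regret of \eqref{eq:JCST2022:updating_sampling_probability}: because $c_{t,i}^2\leq (U+1)c_{t,i}$ and $\sum_t c_{t,i}$ telescopes against $L_T(f)$ plus the OMD regret of expert $i$, this produces the leading $O(\sqrt{L_T(f)\ln K})$ summand.

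The heart of the argument is the expected per-kernel OMD regret. I would apply the optimistic-OMD key inequality to the pair \eqref{eq:JCST2022:OMD_first_updating} / \eqref{eq:JCST2022:OMD_second_updating_5}, producing four contributions: (a) an initialization term bounded by $U^2/(2\lambda_i)$; (b) an optimistic-stability term proportional to $\lambda_i\sum_t\|\nabla_{t,i}-\hat\nabla_{t,i}\|_{\mathcal{H}_i}^2$; (c) an approximation error from the rounds that use the surrogate \eqref{eq:JCST2022:OMD_second_updating_2}, controlled pointwise by $\gamma_{t,i}$ together with the defining inequality of $\mathrm{con}(a(i))$; and (d) a ``removing'' term equal to $\Lambda_i/(2\lambda_i)$ from the half-purges of $S_i$. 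Taking expectation over $b_{t,i}$ converts the inverse-propensity estimator $\tilde\nabla_{t,i}$ back into $\nabla_{t,i}$ and rewrites (b) as $\lambda_i\mathbb{E}[\tilde{\mathcal{A}}_{T,\kappa_i}]$, which the reservoir-sampling analysis inherited from Theorem \ref{thm:JCST2022:regret-bound:M-OMD-H} bounds by $O(\lambda_i\mathcal{A}_{T,\kappa_i}\ln T)$. Combining the per-epoch setup cost in (c) with the number of epochs supplied by Lemma \ref{lem:JCST2022:number_of_epoch:hinge_loss} produces the middle $O(UK\mathcal{A}_{T,\kappa_i}\ln T/B)$ summand.

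The decisive step is term (d). Under the hypothesis, $\Lambda_i=\xi_iU^2|\mathcal{J}_i|\leq U^2$, so the removing contribution is $O(U^2/\lambda_i)$ and is absorbed into (a). This is exactly what fails for the restart strategy, where $\Lambda_i\leq U^2|\mathcal{J}_i|$ would force $\lambda_i$ much larger. With $(\mathrm{a})+(\mathrm{d})=O(U^2/\lambda_i)$ and $(\mathrm{b})=O(\lambda_i\tilde{\mathcal{A}}_{T,\kappa_i})$, the prescribed $\lambda_i=\sqrt{2}U/\sqrt{5\tilde{\mathcal{A}}_{T,\kappa_i}}$ balances the two and gives $O(U\sqrt{\mathcal{A}_{T,\kappa_i}\ln T})$ in expectation, which is the third summand.

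The main obstacle will be the clean handling of three coupled sources of randomness -- the Bernoulli $b_{t,i}$, the reservoir sampling behind $\hat\nabla_{t,i}$, and the conditional-expectation structure linking $\tilde\nabla_{t,i}$ to $\nabla_{t,i}$ -- inside a per-round inequality that cleanly separates ``con'' rounds from ``non-con'' rounds without inflating any of (a)--(d). A secondary technicality is that the prescribed $\lambda_i$ depends on $\tilde{\mathcal{A}}_{T,\kappa_i}$, which is not observable at time zero; I would resolve this by a doubling-trick variant as in \cite{Li2023Improved}, whose logarithmic overhead is already subsumed by the $\ln T$ factor present in the final bound.
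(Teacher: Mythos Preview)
Your plan is essentially the paper's own proof: it reuses the intermediate bound \eqref{eq:JCST2022:lipschitz_loss:algorithm_dependent_complexity} from the proof of Theorem~\ref{thm:JCST2022:regret-bound:M-OMD-H}, replaces the worst-case $(2U^2/\lambda_i)\,|\mathcal{J}_i|$ removing cost by $U^2/(2\lambda_i)$ under the hypothesis $\xi_i|\mathcal{J}_i|\leq 1$, rebalances with $\lambda_i=\sqrt{2}U/\sqrt{5\tilde{\mathcal{A}}_{T,\kappa_i}}$, and then passes to expectation over the reservoir via Lemma~\ref{lem:JCST2022:reservoir_estimator}.

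One bookkeeping correction: the $O(UK\mathcal{A}_{T,\kappa_i}\ln T/B)$ summand does not come from your term~(c). The con-round surrogate error is $\sum_{t\in E_i}\Xi_3\leq 4(U+\lambda_i)\sqrt{\tilde{\mathcal{A}}_{T,\kappa_i}}$, which has no per-epoch structure and is absorbed into the third summand. The middle term instead comes from a residual $2U\,|\mathcal{J}_i|$ in the removal-round analysis (Case~3 in the paper), produced by the cross term $\langle\tilde\nabla_{t,i},\, f'_{t-1,i}-f'_{t,i}\rangle$ that survives even after $\Lambda_i$ is collapsed; combining it with $|\mathcal{J}_i|\leq J\leq 4K\tilde{\mathcal{A}}_{T,\kappa_i}/(Bk_1)$ from Lemma~\ref{lem:JCST2022:number_of_epoch:hinge_loss} yields the claimed $UK\mathcal{A}_{T,\kappa_i}/B$ dependence. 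Finally, the doubling trick is unnecessary here: the theorem is stated for the oracle choice of $\lambda_i$, so you may simply substitute it.
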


    The dominate term in Theorem \ref{thm:JCST2022:algorithm_dependent:M-OMD-H}
    is $\tilde{O}\left(\frac{UK\mathcal{A}_{T,\kappa_i}}{B}\right)$,
    while the dominate term in Theorem \ref{thm:JCST2022:regret-bound:M-OMD-H}
    is $\tilde{O}\left(\frac{UK\mathcal{A}_{T,\kappa_i}}{\sqrt{B}}\right)$.
    The restart technique only enjoys the regret bound given
    in Theorem \ref{thm:JCST2022:regret-bound:M-OMD-H}.
    Thus removing a half of examples can be better than the restart.

\subsection{Smooth Loss Functions}

    We first define the smooth loss functions.

    \begin{assumption}[\cite{Zhang2013Online}]
    \label{ass:JCST2022:property_smooth_loss}
        Let $G_1$ and $G_2$ be positive constants.
        For any $u$ and $y$,
        $\ell(u,y)$ satisfies
        (i) $\vert \ell'(u,y)\vert \leq G_1$,
        (ii) $\vert\ell'(u,y)\vert\leq G_2\ell(u,y)$,
        where $\ell'(u,y)=\frac{\mathrm{d}\,\ell(u,y)}{\mathrm{d}\,u}$.
    \end{assumption}
    The logistic loss function satisfies
    Assumption \ref{ass:JCST2022:property_smooth_loss} with $G_2=1$.
    For the hinge loss function,
    M-OMD-H uniformly allocates the memory budget and
    induces a $O\left(\sqrt{K}\right)$ factor in the regret bound.
    For smooth loss functions,
    we will propose a memory sharing scheme that can avoid the $O\left(\sqrt{K}\right)$ factor.

    Recalling that the memory is used to store the examples and coefficients.
    The main challenge is how to share the examples.
    To this end,
    we only keep a single buffer $S$.
    We first rewrite $f_t({\bm x}_t)$ as follows,
    $$
        f_t({\bm x}_t)
        =\sum^K_{i=1}p_{t,i}f_{t,i}({\bm x}_t)
        =\sum^K_{i=1}p_{t,i}
        \sum_{{\bm x}_{\tau}\in S_i}a_{\tau,i}\kappa_i({\bm x}_{\tau},{\bm x}_t).
    $$
    We just need to ensure $S_i=S$ for all $i\in[K]$.
    For each $f_{t,i}$,
    we define a surrogate gradient
    $\nabla_{t,i}=\ell'(f_{t}({\bm x}_t),y_t)\cdot\kappa_i({\bm x}_t,\cdot)$.
    Note that we use the first derivative $\ell'(f_{t}({\bm x}_t),y_t)$,
    rather than $\ell'(f_{t,i}({\bm x}_t),y_t)$.
    Let $\hat{\nabla}_{t,i}=0$ in \eqref{eq:JCST2022:OMD_first_updating}.
    Then we obtain a single update rule as follows
    $$
        f_{t+1,i}=\mathop{\arg\min}_{f\in\mathbb{H}_i}
        \left\{\langle f,\nabla_{t,i}\rangle+\mathcal{B}_{\psi_i}(f,f_{t,i})\right\}.
    $$
    To ensure $S_i=S$,
    we must change the sampling scheme \eqref{eq:JCST2022:sampling_probability}.
    Let $b_{t,i}=b_t$ and $\mathrm{con} (a(i))=\mathrm{con} (a)$ for all $i\in[K]$.
    We define $\nu=1$ and
    \begin{align*}
        Z_t=&\Vert \nabla_{t,i}\Vert_{\mathcal{H}_i}+G_1\sqrt{\kappa_i({\bm x}_t,{\bm x}_t)},\\
        \mathrm{con} (a):=&\max_{i\in[K]}
        \Vert\kappa_{i}({\bm x}_{i(s_t)},\cdot)-\kappa_i({\bm x}_t,\cdot)\Vert_{\mathcal{H}_i}
        \leq\gamma_t.
    \end{align*}
    If $\mathrm{con} (a)$, then
    we replace \eqref{eq:JCST2022:OMD_second_updating_2}
    with \eqref{eq:JCST2022:smooth_loss:surrogate_mirror_descent}.
    \begin{equation}
    \label{eq:JCST2022:smooth_loss:surrogate_mirror_descent}
    \begin{split}
        f_{t+1,i}
        =&\mathop{\arg\min}_{f\in\mathbb{H}_i}\left\{\left\langle f,\nabla_{i(s_t),i}\right\rangle
        +\mathcal{B}_{\psi_{i}}(f,f_{t,i})\right\},\\
        \nabla_{i(s_t),i}=&\ell'(f_t({\bm x}_t),y_t)\cdot\kappa_i\left({\bm x}_{i(s_t)},\cdot\right).
    \end{split}
    \end{equation}
    Otherwise,
    we replace \eqref{eq:JCST2022:OMD_second_updating_3}
    with \eqref{eq:JCST2022:smooth_loss:mirror_descent}.
    \begin{equation}
    \label{eq:JCST2022:smooth_loss:mirror_descent}
        f_{t+1,i}=\mathop{\arg\min}_{f\in\mathbb{H}_i}
        \left\{\left\langle f,\tilde{\nabla}_{t,i}\right\rangle
        +\mathcal{B}_{\psi_{i}}(f,f_{t,i})\right\}.
    \end{equation}
    If $\vert S\vert=\alpha\mathcal{R}$ and $b_t=1$,
    then let $o_t=1$.
    We remove $f_{t,i}(1)$ from $f_{t,i}$
    and redefine \eqref{eq:JCST2022:OMD_second_updating_4} as follows
    \begin{equation}
    \label{eq:JCST2022:OMD_second_updating_6}
        f_{t+1,i}=\mathop{\arg\min}_{f\in\mathbb{H}_i}
        \left\{\left\langle f,\tilde{\nabla}_{t,i}\right\rangle
        +\mathcal{B}_{\psi_{i}}\left(f,\bar{f}_{t,i}(2)\right)\right\}.
    \end{equation}

    Let ${\bm p}_1$ be the uniform distribution
    and ${\bm p}_{t+1}$ follow \eqref{eq:JCST2022:updating_sampling_probability},
    in which $c_{t,i}$ follows \eqref{eq:JCST2022:smooth_loss:unsigned_criterion}.
    \begin{definition}
        At each round $t$,
        $\forall \kappa_i\in\mathcal{K}$,
        let
        \begin{equation}
        \label{eq:JCST2022:smooth_loss:unsigned_criterion}
            c_{t,i}=
            \left\{
            \begin{array}{ll}
                \ell'(f_{t}({\bm x}_t),y_t)\cdot
            \left(f_{t,i}({\bm x}_t)-\min_{j\in[K]}f_{t,j}({\bm x}_t)\right),&
            ~\mathrm{if}~\ell'(f_t)>0,\\
                \ell'(f_{t}({\bm x}_t),y_t)\cdot
            \left(f_{t,i}({\bm x}_t)-\max_{j\in[K]}f_{t,j}({\bm x}_t)\right),&~\mathrm{otherwise}.
            \end{array}
            \right.
        \end{equation}
    \end{definition}

    We name this algorithm M-OMD-S (Memory bounded OMD for Smooth loss function)
    and present the pseudo-code
    in Algorithm \ref{alg:JCST2022:M-OMD-S}.

    \begin{algorithm}[H]
        \caption{\small{M-OMD-S}}
        \footnotesize
        \label{alg:JCST2022:M-OMD-S}
        \begin{algorithmic}[1]
        \REQUIRE{$\{\lambda_i\}^K_{i=1}$, $\{\eta_t\}^T_{t=1}$, $\mathcal{R}$, $U$.}
        \ENSURE{$S=\emptyset$, $f_{1,i}=0, i\in[K]$.}
        \FOR{$t=1,2,\ldots,T$}
            \STATE Receive ${\bm x}_t$
            \FOR{$i=1,\ldots,K$}
                \STATE Find ${\bm x}_{i(s_t)}$
                \STATE Compute $f_{t,i}({\bm x}_t)$
            \ENDFOR
            \STATE Output $f_t({\bm x}_t)$ or $\hat{y}_t=\mathrm{sign}(f_t({\bm x}_t))$
            and receive $y_t$
            \IF{$\mathrm{con} (a)$}
                \FOR{$i = 1,\ldots,K$}
                    \STATE
                    Compute $f_{t+1,i}$ following \eqref{eq:JCST2022:smooth_loss:surrogate_mirror_descent}
                \ENDFOR
            \ELSE
                \STATE Sample $b_{t}$
                \IF{$b_t=1$ and $\vert S\vert\leq \alpha\mathcal{R}-1$}
                    \STATE $\forall i\in[K]$,
                update $f_{t+1,i}$ following \eqref{eq:JCST2022:smooth_loss:mirror_descent}
                \ENDIF
                \IF{$b_t=1$~$\mathrm{and}$~$\vert S\vert= \alpha\mathcal{R}$}
                    \STATE Remove the first $\frac{1}{2}\alpha\mathcal{R}$ support vectors from $S$
                    \STATE $\forall i\in[K]$,
                    update $f_{t+1,i}$ following \eqref{eq:JCST2022:OMD_second_updating_6}
                \ENDIF
                    \STATE Update $S=S\cup\{({\bm x}_t,y_t):b_t=1\}$
            \ENDIF
                \STATE Compute ${\bm p}_{t+1}$ by \eqref{eq:JCST2022:updating_sampling_probability}
        \ENDFOR
        \end{algorithmic}
    \end{algorithm}

    \begin{lemma}
    \label{lem:JCST2022:number_epoches_smooth_loss_functions}
        Suppose $\ell(\cdot,\cdot)$ satisfies
        Assumption \ref{ass:JCST2022:property_smooth_loss}.
        Let $\delta\in(0,1)$ and $\alpha\mathcal{R}:=B\geq21\ln\frac{1}{\delta}$.
        For any $\mathcal{I}_T$,
        with probability at least $1-\frac{T}{B}\Theta(\lceil\ln{T}\rceil)\delta$,
        the times that M-OMD-S executes removing operation on $S$ are
        $\left\lceil\frac{4G_2\hat{L}_{1:T}}{(B-\frac{4}{3}\ln\frac{1}{\delta})G_1}\right\rceil$ at most,
        in which
        $$
            \hat{L}_{1:T}=\sum^T_{t=1}\ell(f_t({\bm x}_t),y_t).
        $$
    \end{lemma}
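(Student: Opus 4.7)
The plan is to reduce the count $M$ of removal operations to the number of additions $N_T := \sum_{t=1}^T b_t$ and then control $N_T$ by a martingale concentration argument. Inspecting Algorithm~\ref{alg:JCST2022:M-OMD-S}, a removal happens precisely when $|S| = B$ and $b_t = 1$; it halves $S$, and the current example is then inserted, so consecutive removals are separated by exactly $B/2$ additions (with $B$ preceding the first). Thus $M\cdot B/2 \leq N_T$, and if $\mathcal{E}_k$ denotes the $k$-th epoch between consecutive removals, then $\sum_{t \in \mathcal{E}_k} b_t = B/2$ (or $B$ for $k=1$).

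First I would upper bound the per-round probability $p_t := \mathbb{P}[b_t = 1 \mid \mathcal{F}_{t-1}]$. With $\nu = 1$, $\hat{\nabla}_{t,i} = 0$, and $Z_t = \|\nabla_{t,i}\|_{\mathcal{H}_i} + G_1\sqrt{\kappa_i({\bm x}_t,{\bm x}_t)}$, the common factor $\sqrt{\kappa_i({\bm x}_t,{\bm x}_t)}$ cancels, so that $p_t$ is in fact independent of $i$, and
\[
p_t = \frac{|\ell'(f_t({\bm x}_t), y_t)|}{|\ell'(f_t({\bm x}_t), y_t)| + G_1} \leq \frac{|\ell'(f_t({\bm x}_t), y_t)|}{G_1} \leq \frac{G_2}{G_1}\,\ell(f_t({\bm x}_t), y_t),
\]
where the last step uses Assumption~\ref{ass:JCST2022:property_smooth_loss}. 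Summing gives $\sum_{t=1}^T p_t \leq (G_2/G_1)\hat{L}_{1:T}$.

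Next I would derive a per-epoch lower bound on $\sum_{t \in \mathcal{E}_k} p_t$. The natural tool is Freedman's inequality applied to the bounded martingale differences $\{b_t - p_t\}$ with conditional variance at most $p_t$. The difficulty is that $\mathcal{E}_k$ is a random interval defined by stopping times, and the variance proxy $\sum_{t \in \mathcal{E}_k} p_t$ is itself random, so a fixed-interval Bernstein bound does not apply directly. I would handle this by a dyadic peeling union bound over $\sum_{t \in \mathcal{E}_k} p_t \in [2^{j-1}, 2^j]$ for $j = 0, 1, \ldots, \lceil \log_2 T\rceil$, applying Freedman at each scale; this peeling is what produces the $\Theta(\lceil \ln T\rceil)$ factor in the failure probability. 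Combining the resulting inequality with $\sqrt{2 s l} \leq s + l$ and the hypothesis $B \geq 21 \ln(1/\delta)$ (which controls the Bernstein correction as a suitably small fraction of $B/4$) leads to the key bound
\[
\sum_{t \in \mathcal{E}_k} p_t \;\geq\; \frac{B}{4} \;-\; \frac{1}{3}\ln\frac{1}{\delta}.
\]

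Finally, a union bound over the $M \leq 2T/B$ epochs and the $\Theta(\lceil\ln T\rceil)$ peeling levels gives the overall failure probability $\frac{T}{B}\Theta(\lceil\ln T\rceil)\delta$. On the good event, summing the per-epoch lower bounds and comparing to $\sum_t p_t \leq (G_2/G_1)\hat{L}_{1:T}$ yields
\[
M\left(\frac{B}{4} - \frac{1}{3}\ln\frac{1}{\delta}\right) \;\leq\; \frac{G_2}{G_1}\,\hat{L}_{1:T},
\]
which rearranges to the claimed $M \leq \lceil 4G_2\hat{L}_{1:T}/((B - \tfrac{4}{3}\ln(1/\delta))\,G_1)\rceil$. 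The hard part is the random epoch length: the dyadic peeling that uniformizes Freedman over scales of the random variance proxy is both the source of the $\lceil\ln T\rceil$ factor and the mechanism delivering the sharp constant $4/3$ in the denominator.
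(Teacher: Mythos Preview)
Your proposal is correct and follows essentially the same route as the paper: the paper likewise partitions time into epochs between removals, bounds $p_t\le (G_2/G_1)\ell(f_t(\bm x_t),y_t)$ via Assumption~\ref{ass:JCST2022:property_smooth_loss}, and applies a peeled Bernstein/Freedman inequality (Lemma~\ref{lem:JCST2022:improved:Bernstein_ineq_for martingales}) per epoch with a union bound over the at most $O(T/B)$ epochs. The only cosmetic difference is that the paper keeps the $2\sqrt{(G_2/G_1)\hat L_{T_r}\ln(1/\delta)}$ terms, sums them via Cauchy--Schwarz to $2\sqrt{J\,(G_2/G_1)\hat L_{1:T}\ln(1/\delta)}$, and then solves the resulting quadratic in $\sqrt J$, whereas you absorb the square root per epoch via $2\sqrt{sl}\le s+l$ before summing; both lead to the same bound under $B\ge 21\ln(1/\delta)$.
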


    It is important that
    the times of removing operation depend on the cumulative losses of the algorithm,
    rather than $T$.

    \begin{theorem}
    \label{thm:JCST2022:small_loss_bound:M-OMD-S}
        Suppose $\mathcal{R}$ and $\ell(\cdot,\cdot)$ satisfy
        Lemma \ref{lem:JCST2022:number_epoches_smooth_loss_functions}.
        Let $\delta\in(0,1)$, $K\leq d$, $\eta_{t}$ follow \eqref{eq:JCST2022:updating_sampling_probability},
        \begin{align}
        \gamma_{t}
        =&\frac{\sqrt{2\ln{K}}}{\sqrt{1+\sum^{t}_{\tau=1}\vert\ell'(f_\tau({\bm x}_\tau),y_\tau)\vert}},
        \label{eq:JCST2022:learning_rate:M-OMD-S}\\
        U\leq&\frac{1}{8G_2}\sqrt{B-\frac{4}{3}\ln\frac{1}{\delta}},\nonumber
        \end{align}
        and $\lambda_i=\frac{2U}{G_1\sqrt{B}}$ for all $i\in[K]$.
        With probability at least $1-\frac{3T}{\alpha\mathcal{R}}\Theta(\lceil\ln{T}\rceil)\delta$,
        the regret of M-OMD-S satisfies
        \begin{align*}
            \forall \kappa_i\in\mathcal{K},\forall f\in \mathbb{H}_i,&\quad
            \mathrm{Reg}(f)\leq\\
            &\frac{24UG_2L_{T}(f)}{\sqrt{B-\frac{4}{3}\ln\frac{1}{\delta}}}
            +UG_1\sqrt{B}
            +\frac{100U^2G_2G_1\ln\frac{1}{\delta}}{(1-\gamma)^2}+
            \frac{10U}{(1-\gamma)^{\frac{3}{2}}}\sqrt{G_2G_1\ln\frac{1}{\delta}}
            \sqrt{L_T(f)+\frac{UG_1\sqrt{B}}{4}},
        \end{align*}
        where $\gamma=\frac{6UG_2}{\sqrt{B-\frac{4}{3}\ln\frac{1}{\delta}}}$.
    \end{theorem}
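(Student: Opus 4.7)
The plan is to split $\mathrm{Reg}(f)$ into a PEA part plus a per-expert OMD part, bound each in expectation, upgrade both to high probability via a Freedman-type martingale inequality, and finally use a self-bounding argument to convert the algorithm's cumulative loss $\hat L_{1:T}$ into $L_T(f)$. Fix $f\in\mathbb{H}_{i^\ast}$. By convexity of $\ell$ at $f_t(\bm x_t)$ and $f_t=\sum_i p_{t,i}f_{t,i}$,
$$
\ell(f_t(\bm x_t),y_t)-\ell(f(\bm x_t),y_t)\le \sum_i p_{t,i}\,\ell'(f_t)\bigl(f_{t,i}(\bm x_t)-f_{t,i^\ast}(\bm x_t)\bigr)+\ell'(f_t)\bigl(f_{t,i^\ast}(\bm x_t)-f(\bm x_t)\bigr).
$$
By the definition \eqref{eq:JCST2022:smooth_loss:unsigned_criterion}, the min/max reference inside $c_{t,i}$ cancels between $c_{t,i}$ and $c_{t,i^\ast}$, so the first sum equals $\sum_i p_{t,i}(c_{t,i}-c_{t,i^\ast})\ge 0$. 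The second term equals $\langle f_{t,i^\ast}-f,\nabla_{t,i^\ast}\rangle_{\mathcal{H}_{i^\ast}}$, which is the OMD regret at round $t$ for expert $i^\ast$.

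For the PEA part, I apply the standard second-order analysis of exponentially-weighted forecasting with the adaptive $\eta_t$ in \eqref{eq:JCST2022:updating_sampling_probability}, obtaining $\sum_t(\sum_i p_{t,i}c_{t,i}-c_{t,i^\ast})=O\bigl(\sqrt{\ln K\sum_t\sum_i p_{t,i}c_{t,i}^2}\bigr)$. Since $\|f_{t,i}\|_{\mathcal{H}_i}\le U$, $|c_{t,i}|\le 2UG_1$, and by Assumption \ref{ass:JCST2022:property_smooth_loss} also $|c_{t,i}|\le 2UG_2\ell(f_t(\bm x_t),y_t)$, so $\sum_t\sum_i p_{t,i}c_{t,i}^2\le 4U^2 G_1 G_2\hat L_{1:T}$, giving a PEA contribution of $O(U\sqrt{G_1G_2\hat L_{1:T}\ln K})$. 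For the OMD part, the standard mirror-descent telescoping with $\psi_{i^\ast}(\cdot)=\|\cdot\|^2/(2\lambda_{i^\ast})$ and the importance-weighted $\tilde\nabla_{t,i^\ast}$ yields
$$
\sum_t\langle f_{t,i^\ast}-f,\tilde\nabla_{t,i^\ast}\rangle\le\frac{U^2}{\lambda_{i^\ast}}+\frac{\lambda_{i^\ast}}{2}\sum_t\|\tilde\nabla_{t,i^\ast}\|^2+N_T\cdot\frac{U^2}{\lambda_{i^\ast}},
$$
where $N_T$ is the number of half-removal events: each removal can enlarge the Bregman potential by at most $U^2/\lambda_{i^\ast}$ because $\|\bar f_{t,i^\ast}(2)-f\|\le 2U$. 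Lemma \ref{lem:JCST2022:number_epoches_smooth_loss_functions} gives $N_T\le 4G_2\hat L_{1:T}/\bigl((B-\tfrac43\ln\tfrac1\delta)G_1\bigr)$ with high probability, and with $\lambda_{i^\ast}=2U/(G_1\sqrt B)$ this contribution is $O(UG_2\hat L_{1:T}/\sqrt{B-\tfrac43\ln\tfrac1\delta})$, which is precisely the leading $24UG_2L_T(f)/\sqrt{B-\tfrac43\ln(1/\delta)}$ term once $\hat L_{1:T}$ is replaced by $L_T(f)$.

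Next I pass from $\tilde\nabla$ to $\nabla$ and from expectation to high probability. Using $\mathbb{E}_{b_t}[\tilde\nabla_{t,i^\ast}]=\nabla_{t,i^\ast}$, the differences $\langle f_{t,i^\ast}-f,\nabla_{t,i^\ast}-\tilde\nabla_{t,i^\ast}\rangle$ and $\|\tilde\nabla_{t,i^\ast}\|^2-\mathbb{E}\|\tilde\nabla_{t,i^\ast}\|^2$ form martingale-difference sequences whose conditional second moments are controlled by $|\ell'(f_t)|\le G_2\ell(f_t)$. Applying Freedman's inequality at level $\delta$ then bounds $\sum_t\|\tilde\nabla_{t,i^\ast}\|^2$ by $O(G_1\sum_t|\ell'(f_t)|+G_1^2\ln(1/\delta))=O(G_1G_2\hat L_{1:T}+G_1^2\ln(1/\delta))$ with probability $1-\delta$, absorbing a union bound of order $T/B\cdot\lceil\ln T\rceil$ across the removal epochs (the factor appearing in the theorem's confidence). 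Collecting all the pieces gives an inequality of the form
$$
\mathrm{Reg}(f)\le\gamma\,\hat L_{1:T}+c_1\sqrt{\hat L_{1:T}}+c_2,\qquad \gamma=\frac{6UG_2}{\sqrt{B-\tfrac43\ln(1/\delta)}},
$$
with explicit $c_1,c_2$ matching the theorem. Since $\hat L_{1:T}\le\mathrm{Reg}(f)+L_T(f)$, the condition $U\le\tfrac1{8G_2}\sqrt{B-\tfrac43\ln(1/\delta)}$ ensures $\gamma<1$; solving the resulting quadratic in $\sqrt{\mathrm{Reg}(f)+L_T(f)}$ via AM-GM isolates $\mathrm{Reg}(f)$ on the left-hand side and yields the stated bound.

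The main obstacle is the interlocking randomness: $N_T$, the $b_t$'s, and $\hat L_{1:T}$ are all random and mutually dependent, and each of the three quantities needs to be controlled at the same confidence level while keeping the dependence on $L_T(f)$ (not $L_T(f)+\mathrm{Reg}(f)$) linear in the dominant term. Making this work requires (i) invoking Lemma \ref{lem:JCST2022:number_epoches_smooth_loss_functions} on the \emph{same} event on which Freedman's inequality is applied, so that the bound on $N_T$ can be freely substituted into the mirror-descent drift term; (ii) writing the PEA and OMD bounds jointly in terms of $\hat L_{1:T}$ rather than separate algorithm-loss quantities; and (iii) ensuring the coefficient of the self-bounding term $\gamma$ stays strictly below $1$, which drives the precise form of the upper limit imposed on $U$.
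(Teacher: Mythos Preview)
Your plan is essentially the paper's proof: the same convexity-based decomposition into a PEA term $\mathcal{T}_1$ and a per-expert OMD term $\mathcal{T}_2$, the same adaptive exponential-weights bound on $\mathcal{T}_1$ yielding $O(U\sqrt{G_1G_2\hat L_{1:T}\ln K})$, the same mirror-descent telescoping on $\mathcal{T}_2$ with the removal cost controlled through Lemma~\ref{lem:JCST2022:number_epoches_smooth_loss_functions}, the same Bernstein/Freedman concentration to pass from $\tilde\nabla$ to $\nabla$, and the same self-bounding step to turn $\hat L_{1:T}$ into $L_T(f)$ under the constraint $\gamma<1$.

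One piece you have glossed over: the algorithm does not use $\tilde\nabla_{t,i}$ on every round. When $\mathrm{con}(a)$ holds it uses the deterministic proxy gradient $\nabla_{i(s_t),i}=\ell'(f_t({\bm x}_t),y_t)\kappa_i({\bm x}_{i(s_t)},\cdot)$, and the OMD analysis must then carry an extra bias term $\sum_{t\in T_1}\langle f_{t,i^\ast}-f,\nabla_{t,i^\ast}-\nabla_{i^\ast(s_t),i^\ast}\rangle$. This is exactly where the threshold $\gamma_t$ from the theorem statement enters: with that choice one gets $\sum_{t\in T_1}2U|\ell'(f_t)|\gamma_t\le 4U\sqrt{G_2\hat L_{1:T}}$, which merges into the $\sqrt{\hat L_{1:T}}$ terms you already have. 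The paper splits $\mathcal{T}_2$ into three cases ($T_1$, $\bar T_1$, $\mathcal{J}$) precisely to account for this, and the per-removal drift also carries an additive $4UG_1$ term from $\langle f_{t,i}-\bar f_{t,i}(2),\tilde\nabla_{t,i}\rangle$ that you omitted; both are lower order but needed to reach the stated constants.
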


    In the case of $B\ll T$,
    the dominant term in the upper bound is $O\left(\frac{L_T(f)}{\sqrt{B}}\right)$.
    Previous work proved a $\Omega\left(\frac{T}{B}\right)$ lower bound \cite{Zhang2013Online}.
    Thus our result proves a new trade-off between memory constraint and regret
    which gives an answer to \textbf{Q}1.
    If $\min_{f\in\mathbb{H}_{\kappa^\ast}}L_{T}(f)=o(T)$,
    then a $\Theta(\ln{T})$ memory budget is enough to
    achieve a sub-linear regret which answers \textbf{Q}2.
    By Definition \ref{JCST:def:learnability},
    we also conclude that $\mathbb{H}_{\kappa^\ast}$ is online learnable.
    The penalty term for not knowing the optimal kernel is $O\left(U\sqrt{L_T(f)\ln{K}}\right)$
    which only depends on $O\left(\sqrt{\ln{K}}\right)$.
    In this case,
    online kernel selection is not much harder than online kernel learning.

    Theorem \ref{thm:JCST2022:small_loss_bound:M-OMD-S}
    can also recover some state-of-the-art results.
    Let $B=\Theta(T)$.
    We obtain a $O\left(U\sqrt{T\ln{K}}\right)$ regret bound
    which is same with the regret bound in \cite{Sahoo2014Online,Foster2017Parameter,Shen2019Random}.
    If $K=1$ and $B=\Theta(L_T(f))$,
    then we obtain a $O\left(U\sqrt{L_T(f)}\right)$ regret bound,
    matching the regret bound in \cite{Zhang2013Online}.
    Next we prove that M-OMD-S is optimal.
    \begin{theorem}[Lower Bound]
    \label{thm:JCST2022:lower_bound_small_loss}
        Let $\ell(u,y)=\ln(1+\exp(-yu))$ which satisfies
        Assumption \ref{ass:JCST2022:property_smooth_loss}
        and $\kappa({\bm x},{\bm v})=
        \langle {\bm x},{\bm v}\rangle^p$
        where $p>0$ is a constant.
        Let $U<\frac{\sqrt{3B}}{5}$
        and $\mathbb{H}=\{f\in\mathcal{H}_\kappa:\Vert f\Vert_{\mathcal{H}}\leq U\}$.
        There exists a $\mathcal{I}_T$
        such that the regret of any online kernel algorithms storing $B$ examples satisfies
        $$
            \exists f\in\mathbb{H},\quad\mathbb{E}\left[\mathrm{Reg}(f)\right]=\Omega\left(U\cdot\frac{L_T(f)}{\sqrt{B}}\right).
        $$
        The expectation is w.r.t. the randomness of algorithm and environment.
    \end{theorem}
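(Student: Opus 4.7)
I would prove the lower bound by a direct adversarial construction that exploits the orthonormality of kernel features. Since $\kappa({\bm x},{\bm v})=\langle{\bm x},{\bm v}\rangle^p$, taking ${\bm z}_i={\bm e}_i$ to be the $i$-th standard basis vector yields $\kappa({\bm z}_i,{\bm z}_j)=\delta_{ij}^p=\delta_{ij}$, so $\{\kappa({\bm z}_i,\cdot)\}$ is an orthonormal system in $\mathcal{H}_\kappa$. By hypothesis the algorithm stores at most $B$ examples, so its prediction $f_t({\bm x})$ is a linear combination of $B$ such basis functions; whenever ${\bm x}={\bm z}_j$ with $j$ not among the stored indices, $f_t({\bm z}_j)=0$ exactly.

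\textbf{Hard sequence and competitor.} Take $N=B+1$. At each round $t$ the adversary sets $y_t=1$ and plays ${\bm x}_t={\bm z}_{j_t}$, where $j_t\in[N]$ is any index absent from the algorithm's current buffer (such an index always exists since the buffer has size at most $B<N$). As competitor I pick
$$
f^\star:=\frac{U}{\sqrt{N}}\sum_{i=1}^{N}\kappa({\bm z}_i,\cdot),
$$
which by orthonormality has $\|f^\star\|_{\mathcal{H}}=U$, so $f^\star\in\mathbb{H}$. Moreover $y_tf^\star({\bm x}_t)=U/\sqrt{N}$ for every $t$, hence the per-round loss of $f^\star$ is $\ln(1+e^{-U/\sqrt{N}})$ and $L_T(f^\star)=T\ln(1+e^{-U/\sqrt{N}})=\Theta(T)$ under $U<\sqrt{3B}/5$.

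\textbf{Regret calculation.} By construction $f_t({\bm x}_t)=0$ in every round, so the algorithm's per-round loss is exactly $\ln 2$ and the per-round regret is $\ln 2-\ln(1+e^{-U/\sqrt{N}})$. Taylor-expanding $z\mapsto\ln 2-\ln(1+e^{-z})$ at $0$ gives $z/2-z^2/8+O(z^3)$; the constraint $U<\sqrt{3B}/5$ forces $z=U/\sqrt{N}<1$, so the linear term dominates and each round contributes regret at least $cU/\sqrt{B}$ for an explicit absolute constant $c>0$. Summing over $T$ and using $L_T(f^\star)=\Theta(T)$ gives $\mathbb{E}[\mathrm{Reg}(f^\star)]\ge c\,UT/\sqrt{B}=\Omega(UL_T(f^\star)/\sqrt{B})$, which is the claimed bound.

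\textbf{Main obstacle.} The construction above is an adaptive adversary against a deterministic algorithm, whereas the stated expectation is also over the algorithm's internal randomness. The cleanest workaround is to argue conditionally on a realization of that randomness: for every realization the buffer still has size at most $B$, hence some index $j_t$ is missing and the pointwise per-round loss $\ln 2$ still holds; taking expectation then preserves the bound. An alternative is a Yao-style oblivious adversary that also randomizes the labels and indices, but extracting the sharp $\sqrt{B}$ dependence (instead of the weaker $U^2T/B$ that a naive missed-class count produces when the algorithm is allowed to drive its loss on represented classes to zero) requires more careful bookkeeping---which is precisely where the regime $U<\sqrt{3B}/5$ seems tailored to make the constants come out cleanly.
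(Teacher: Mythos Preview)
Your core construction---orthonormal features $\kappa({\bm e}_i,\cdot)$ and a competitor that spreads mass $U/\sqrt{N}$ uniformly over $N=\Theta(B)$ of them---is exactly the paper's, and your regret calculation is sound. The substantive difference is the adversary model. You use an \emph{adaptive} adversary that, knowing the current buffer, always plays an unrepresented index; this forces $f_t({\bm x}_t)=0$ every round and keeps the argument clean, with no need for any norm bound on the algorithm's iterates. The paper instead proves the \emph{oblivious} version that the phrase ``expectation w.r.t.\ the randomness of \dots environment'' suggests: it takes $N=3B$ (not $B{+}1$), plays ${\bm e}_1,\dots,{\bm e}_{3B}$ in the first $3B$ rounds, and for $t>3B$ draws $({\bm x}_t,y_t)$ uniformly from those $3B$ pairs. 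Now the algorithm hits its buffer with probability $1/3$, so the paper additionally imposes $\|f_t\|_{\mathcal H}\le U$ and uses a Lagrangian argument to show the best attainable coefficient on a hit is $U/\sqrt{B}$; comparing $\ln(1+e^{-U/\sqrt{B}})$ against the competitor's $\ln(1+e^{-U/\sqrt{3B}})$ yields the $\Theta(U/\sqrt{B})$ per-round gap. This is precisely the ``Yao-style oblivious adversary with more careful bookkeeping'' you flagged but did not carry out.

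What each route buys: your adaptive construction is shorter and does not rely on the implicit hypothesis $\|f_t\|\le U$, which the paper's proof asserts but the theorem statement does not include. The paper's oblivious construction is the stronger lower bound (a fixed random $\mathcal{I}_T$ independent of the algorithm) and matches the theorem's wording. Your conditional workaround---fix the algorithm's coins, then pick $j_t$---is still the adaptive model, not the oblivious one; if the oblivious bound is the target, the $N=3B$ argument with the norm constraint is what is actually needed.
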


    Our lower bound improves the previous
    lower bound $\Omega\left(\frac{T}{B}\right)$ \cite{Zhang2013Online}.
    Next we show an algorithm-dependent upper bound.
    Let
    \begin{align*}
        \mathcal{J}=&\left\{t\in [T]:\vert S\vert=\alpha\mathcal{R}, b_t=1\right\},\\
        \Lambda_i=&\sum_{t\in \mathcal{J}}
        \left[\left\Vert f_{t,i}(2)-f\right\Vert^2_{\mathcal{H}_i}
        -\left\Vert f_{t,i}-f\right\Vert^2_{\mathcal{H}_i}\right].
    \end{align*}
    There must be a constant $\xi_i\in(0,4]$
    such that $\Lambda_i=\xi_i U^2\vert \mathcal{J}\vert$.
    It is natural that if $(f_{t,i}(2)-f)$ is close to $(f_{t,i}-f)$,
    then $\xi_i\ll 4$.
    The restart technique sets $f_{t,i}(2)=0$ implying
    $\Lambda_i\leq U^2\vert \mathcal{J}\vert$.

    \begin{theorem}[Algorithm-dependent Bound]
    \label{thm:JCST2022:algorithm_dependent:M-OMD-S}
        Suppose the conditions in Theorem \ref{thm:JCST2022:small_loss_bound:M-OMD-S}
        are satisfied.
        Let $\delta\in(0,1)$ and $U\leq\frac{B-\frac{4}{3}\ln\frac{1}{\delta}}{32G_2}$.
        If
        $\xi_i<\frac{1}{\vert\mathcal{J}\vert}$ for all $i\in[K]$,
        then with probability at least $1-\frac{3T}{\alpha\mathcal{R}}\Theta(\lceil\ln{T}\rceil)\delta$,
        the regret of M-OMD-S satisfies,
        \begin{align*}
            \forall \kappa_i\in\mathcal{K},\forall f\in \mathbb{H}_i,\quad\mathrm{Reg}(f)\leq
            \frac{32UG_2L_{T}(f)}{B-\frac{4}{3}\ln\frac{1}{\delta}}
            +
            \frac{144U^2G_2G_1\ln\frac{1}{\delta}}{(1-\gamma)^2}+
            \frac{12U}{(1-\gamma)^{\frac{3}{2}}}\sqrt{G_2G_1L_T(f)\ln\frac{1}{\delta}},
        \end{align*}
        where $\gamma=\frac{16UG_2}{B-\frac{4}{3}\ln\frac{1}{\delta}}$.
    \end{theorem}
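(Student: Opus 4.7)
The plan is to follow the architecture of the proof of Theorem~\ref{thm:JCST2022:small_loss_bound:M-OMD-S} step by step, swapping in the stronger hypothesis $\xi_i<1/|\mathcal{J}|$ at exactly the point where the restart discrepancy enters. First I would use the reduction to PEA: write $\mathrm{Reg}(f)=\bigl[\sum_t\ell(f_t({\bm x}_t),y_t)-\sum_t\ell(f_{t,i}({\bm x}_t),y_t)\bigr]+\bigl[\sum_t\ell(f_{t,i}({\bm x}_t),y_t)-\sum_t\ell(f({\bm x}_t),y_t)\bigr]$. The first bracket is handled exactly as in Theorem~\ref{thm:JCST2022:small_loss_bound:M-OMD-S} through the unsigned cost $c_{t,i}$ and the self-tuning $\eta_t$, producing the $\tilde O(U\sqrt{G_1G_2\hat L_{1:T}\ln K})$ piece that eventually folds into the $\sqrt{G_2G_1L_T(f)\ln(1/\delta)}$ summand after the self-bounding step.

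Second, I would analyze the $i$-th expert's OMD by applying the standard optimistic-MD inequality to the updates \eqref{eq:JCST2022:smooth_loss:surrogate_mirror_descent}--\eqref{eq:JCST2022:OMD_second_updating_6}. This produces four contributions: the telescoping divergence $U^2/(2\lambda_i)$, the variance term $(\lambda_i/2)\sum_t\|\tilde\nabla_{t,i}\|^2$, the con$(a)$ bias controlled by $\sum_t G_1\gamma_t\|f_{t,i}-f\|$, and the restart discrepancy $\Lambda_i/(2\lambda_i)=\xi_iU^2|\mathcal{J}|/(2\lambda_i)$. This last term is where the present theorem diverges from Theorem~\ref{thm:JCST2022:small_loss_bound:M-OMD-S}: substituting the assumption $\xi_i<1/|\mathcal{J}|$ collapses it to $O(U^2/\lambda_i)$, \emph{removing} the dependence on $|\mathcal{J}|$ (and hence on $\hat L_{1:T}/B$) that produced the $UG_1\sqrt{B}$ term in the weaker theorem. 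The restart contribution thus becomes a constant in $T$, which is why the $UG_1\sqrt{B}$ summand is replaced by the pure $U^2G_2G_1\ln(1/\delta)$ summand in the present bound.

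Third, I would control the sampling randomness through Freedman's inequality applied to the martingale $\sum_t\langle\tilde\nabla_{t,i}-\nabla_{t,i},f-f_{t,i}\rangle$. Using $Z_t=\|\nabla_{t,i}\|+G_1\sqrt{\kappa_i({\bm x}_t,{\bm x}_t)}$, the range of $\tilde\nabla_{t,i}$ on $\{b_t=1\}$ is $Z_t\le 2G_1$, and the conditional variance satisfies $\mathbb{E}[\|\tilde\nabla_{t,i}\|^2]\le Z_t\|\nabla_{t,i}\|\le 2G_1|\ell'(f_t({\bm x}_t),y_t)|$. Summing and invoking Assumption~\ref{ass:JCST2022:property_smooth_loss} gives a variance proxy of $O(U^2G_1G_2\hat L_{1:T})$, from which Freedman delivers a $O(U\sqrt{G_1G_2\hat L_{1:T}\ln(1/\delta)}+UG_1\ln(1/\delta))$ deviation; the same technique controls the fluctuation of $\sum_t\|\tilde\nabla_{t,i}\|^2$ about its expectation. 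Combined with Lemma~\ref{lem:JCST2022:number_epoches_smooth_loss_functions}, which bounds $|\mathcal{J}|\le\lceil 4G_2\hat L_{1:T}/((B-\tfrac{4}{3}\ln\frac{1}{\delta})G_1)\rceil$ with high probability, I obtain a bound that is linear in $\hat L_{1:T}$ with coefficient $O(UG_2/(B-\tfrac{4}{3}\ln\frac{1}{\delta}))$ (this is where the $L_T/B$ scaling is born), plus the lower-order Freedman terms above.

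Finally, I would apply the self-bounding step: use $\hat L_{1:T}=L_T(f)+\mathrm{Reg}(f)$, substitute everywhere $\hat L_{1:T}$ appears on the right-hand side, and solve the resulting linear-plus-square-root inequality for $\mathrm{Reg}(f)$. The linear coefficient collected on the left is exactly $\gamma=16UG_2/(B-\tfrac{4}{3}\ln\frac{1}{\delta})$, kept below $1$ by the new condition $U\le(B-\tfrac{4}{3}\ln\frac{1}{\delta})/(32G_2)$; dividing by $1-\gamma$ generates the $(1-\gamma)^{-2}$ and $(1-\gamma)^{-3/2}$ factors after a standard AM--GM split of the square-root term. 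The main obstacle, I expect, will be the careful accounting needed to ensure that all three high-probability events (Lemma~\ref{lem:JCST2022:number_epoches_smooth_loss_functions}, the Freedman bound on $\sum_t\langle\tilde\nabla_{t,i}-\nabla_{t,i},\cdot\rangle$, and the Freedman bound on $\sum_t\|\tilde\nabla_{t,i}\|^2$) are intersected with failure probability $\frac{3T}{\alpha\mathcal{R}}\Theta(\lceil\ln T\rceil)\delta$, while verifying that the constants aggregate to exactly $32UG_2$ and $144U^2G_1G_2$ --- and that the self-bounding inversion does not dominate the improved $L_T(f)/B$ term that the new $\xi_i$ assumption was supposed to unlock.
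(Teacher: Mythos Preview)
Your high-level plan is close to the paper's, but there is a genuine gap: you never re-tune $\lambda_i$, and without that step the argument cannot produce the stated $L_T(f)/B$ scaling or eliminate the $UG_1\sqrt{B}$ term.

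You write that the hypothesis $\xi_i<1/|\mathcal{J}|$ ``collapses [the restart discrepancy] to $O(U^2/\lambda_i)$, \emph{removing} the dependence on $|\mathcal{J}|$ \dots\ that produced the $UG_1\sqrt{B}$ term in the weaker theorem.'' This misreads where the $UG_1\sqrt{B}$ term came from. In the proof of Theorem~\ref{thm:JCST2022:small_loss_bound:M-OMD-S}, the $UG_1\sqrt{B}/4$ summand is the \emph{telescoping} divergence $U^2/(2\lambda_i)$ evaluated at $\lambda_i=2U/(G_1\sqrt{B})$; the restart term $\tfrac{2U^2}{\lambda_i}|\mathcal{J}|$ instead feeds the $O(UG_2\hat L_{1:T}/\sqrt{B})$ piece. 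So after you apply $\xi_i|\mathcal{J}|\le 1$, you still have $U^2/(2\lambda_i)=UG_1\sqrt{B}/4$ from telescoping, and the variance term $\lambda_i\,\Xi_{2,2}\sim \lambda_i G_1G_2\hat L_{1:T}=2UG_2\hat L_{1:T}/\sqrt{B}$ still carries only a $1/\sqrt{B}$ coefficient. The linear coefficient you would collect on the left in the self-bounding step is then $\Theta(UG_2/\sqrt{B})$, not the $\gamma=16UG_2/(B-\tfrac{4}{3}\ln\tfrac{1}{\delta})$ you claim.

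The missing idea, which the paper executes explicitly, is to \emph{abandon} the choice $\lambda_i=2U/(G_1\sqrt{B})$ and instead set $\lambda_i=U/\sqrt{G_1G_2\hat L_{1:T}}$. The point of the $\xi_i$ hypothesis is precisely that it makes this re-tuning possible: once the restart contribution is $O(U^2/\lambda_i)$ rather than $O(U^2|\mathcal{J}|/\lambda_i)$, one can balance $U^2/\lambda_i$ against $\lambda_i G_1G_2\hat L_{1:T}$ to make both equal $U\sqrt{G_1G_2\hat L_{1:T}}$, free of any $B$-dependence. The only surviving $B$-dependent term is then the residual $4UG_1|\mathcal{J}|$ (which your proposal does not isolate), and this is bounded via Lemma~\ref{lem:JCST2022:number_epoches_smooth_loss_functions} by $16UG_2\hat L_{1:T}/(B-\tfrac{4}{3}\ln\tfrac{1}{\delta})$, yielding exactly the $\gamma$ in the statement. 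With that correction, the rest of your outline (Freedman bounds, union over the three events, self-bounding inversion) matches the paper.
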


    The dominate term in Theorem \ref{thm:JCST2022:algorithm_dependent:M-OMD-S}
    is $O\left(\frac{UG_2L_{T}(f)}{B}\right)$,
    while the dominate term in Theorem \ref{thm:JCST2022:small_loss_bound:M-OMD-S}
    is $O\left(\frac{UG_2L_{T}(f)}{\sqrt{B}}\right)$.
    The restart technique only enjoys the regret bound given in
    Theorem \ref{thm:JCST2022:small_loss_bound:M-OMD-S}.
    Thus removing a half of examples can be better than the restart technique.

\section{Experiments}

    In this section,
    we will verify the following two goals,
    \begin{enumerate}[\textbf{G}1]
      \item Learning is possible in the optimal hypothesis space within a small memory budget.\\
       We aim to verify the two data-dependent complexities
       are smaller than the worst-case complexity $T$, that is,
      $\min_{\kappa_i,i\in[K]}\mathcal{A}_{T,\kappa_i}\ll T$ and
      $\min_{f\in\mathbb{H}_i,i\in[K]}L_T(f)\ll T$.
      \item Our algorithms perform better than baseline algorothms within a same memory budget.
    \end{enumerate}

\subsection{Experimental Setting}

    We implement M-OMD-S with the logistic loss function.
    Both M-OMD-H and M-OMD-S are suitable for online binary classification tasks.
    We adopt the Gaussian kernel
    $\kappa({\bm x},{\bm v})=\exp(-\frac{\Vert{\bm x}-{\bm v}\Vert^2_2}{2\sigma^2})$,
    and choose eight classification datasets.
    The information of the datasets is shown in Table \ref{tab:JCST2022:Datasets}.
    The w8a, the ijcnn1, and the cod-rna datasets are download from the LIBSVM website
    \footnote{\url{https://www.csie.ntu.edu.tw/~cjlin/libsvmtools/datasets/}, Jul. 2024.}.
    The other datasets are downloaded from UCI machine learning repository
    \footnote{\url{http://archive.ics.uci.edu/ml/datasets.php}, Jul. 2024.}.
    All algorithms are implemented in R on a Windows machine with
    2.8 GHz Core(TM) i7-1165G7 CPU.
    We execute each experiment ten times with random permutation of all datasets.

    \begin{table*}[!htp]
      \centering
      \renewcommand\arraystretch{1.3}
      \caption{Datasets Used in Experiments}
      \label{tab:JCST2022:Datasets}
      {\footnotesize
      \begin{tabular}{lrrlrrlrr}
        \hline
        Datasets    &\#sample &\# Feature &Datasets    &\#sample &\# Feature &Datasets    &\#sample &\# Feature \\
        \hline
        mushrooms   & 8,124   & 112 & phishing & 11,055   & 68  & magic04   & 19,020 & 10  \\
        a9a         & 48,842  & 123 & w8a      & 49,749   & 300 & SUSY      & 50,000 & 18  \\
        ijcnn1      & 141,691 & 22  & cod-rna  &271,617   & 8   & \multicolumn{3}{c}{-}   \\
        \hline
      \end{tabular}
      }
    \end{table*}

    We compare our algorithms with two baseline algorithms.
    \begin{itemize}
      \item LKMBooks: OKS with memory constraint via random adding examples \cite{Li2022Worst}.
      \item Raker: online multi-kernel learning using random features \cite{Shen2019Random}.
    \end{itemize}
    Our algorithms improve the regret bounds of Raker and LKMBooks,
    and we expect that our algorithms also perform better.
    Although there are other online kernel selection algorithm,
    such as OKS \cite{Yang2012Online},
    and online multi-kernel learning algorithms,
    such as OMKR \cite{Sahoo2014Online} and OMKC \cite{Hoi2013Online},
    we do not compare with these algorithms,
    since they do not limit the memory budget.

    We select $K=5$ Gaussian kernel function by setting $\sigma=2^{[-2:2:6]}$.
    For the baseline algorithms,
    we tune the hyper-parameters following original papers.
    To be specific,
    for LKMBooks,
    there are three hyper-parameters $\nu\in(0,1)$, $\lambda$ and $\eta$.
    Both $\lambda$ and $\eta$ are learning rates.
    We reset $\lambda=\frac{10}{\sqrt{1-\nu}T}\sqrt{(1+\nu)B}$ for improving performance
    and tune $\nu\in(\frac{1}{2},\frac{1}{3},\frac{1}{4})$.
    For Raker,
    there is a learning rate $\eta$ and a regularization parameter $\lambda$.
    We tune $\eta=\frac{10^{-3:1:3}}{\sqrt{T}}$ and $\lambda \in\{0.05,0.005,0.0005\}$.
    For M-OMD-H and M-OMD-S,
    we can set an aggressive value of $U=\sqrt{B}$
    following Theorem \ref{thm:JCST2022:algorithm_dependent:M-OMD-H}
    and Theorem \ref{thm:JCST2022:algorithm_dependent:M-OMD-S}.
    We tune the learning rate $\lambda_i=c\frac{U}{\sqrt{B}}$ where $c\in\{2,1,0.5\}$.
    The value of $\eta_t$ follows \eqref{eq:JCST2022:updating_sampling_probability}.
    The value of $\gamma_t$ follows \eqref{eq:JCST2022:learning_rate}
    and \eqref{eq:JCST2022:learning_rate:M-OMD-S}.
    For M-OMD-H, we set $M=10$.

    We always set $D=400$ in Raker where $D$ is the number of random features adopted by Raker
    and plays a similar role with $B$.
    For LKMBooks and M-OMD-S,
    we always set the budget $B=400$.
    Note that M-OMD-H uniformly allocates the memory budget over $K$ hypotheses and the reservoir.
    Thus we separately set $B=100$ and $B=400$ for M-OMD-H,
    denoted as M-OMD-H $(B=100)$ and M-OMD-H $(B=400)$.
    Raker, LKMBooks, M-OMD-S, and M-OMD-H $(B=100)$ use a same memory budget,
    while M-OMD-H $(B=400)$ uses a larger memory budget.

\subsection{Experimental Results}

    We separately use the hinge loss function and the logistic loss function.

\subsubsection{Hinge Loss Function}

    We first compare the AMR of all algorithms.
    Table \ref{tab:JCST2022:experimental_results:hinge_loss} shows the results.
    As a whole,
    M-OMD-H $(B=400)$ performs best on most of datasets,
    while M-OMD-H $(B=100)$ performs worst.
    The reason is that M-OMD-H uniformly allocates the memory budget over all kernels
    thereby inducing a $O\left(\sqrt{K}\right)$ factor on the regret bound.
    M-OMD-H $(B=400)$ uses a larger memory budget, thereby performing better.
    The experimental results do not completely coincide with \textbf{G}2.
    It is left for future work to study whether we can avoid allocating the memory budget.

    Table \ref{tab:JCST2022:experimental_results:hinge_loss} also
    reports the average running time.
    It is natural that M-OMD-H $(B=400)$
    requires the longest running time,
    since it uses a larger memory budget.
    The running time of the other three algorithms is similar.

    Next we analyze the value of $\mathcal{A}_{T,\kappa^{\ast}}$.
    Computing $\mathcal{A}_{T,\kappa^{\ast}}$ requires $O(T^2)$ time.
    For simplicity,
    we will define a proxy of $\mathcal{A}_{T,\kappa^{\ast}}$.
    Let
    $$
        \hat{\mathcal{A}}_{T,i}:=\sum^T_{\tau=1} \left\Vert \nabla_{\tau,i}-
            \hat{\nabla}_{\tau,i}\right\Vert^2_{\mathcal{H}_{i}}\cdot\mathbb{I}_{\nabla_{\tau,i}\neq 0}.
    $$
    To be specific,
    we use $\min_{i\in[K]}\hat{\mathcal{A}}_{T,i}$ as a proxy of $\mathcal{A}_{T,\kappa^{\ast}}$.
    In fact,
    our regret bound depends on $\hat{\mathcal{A}}_{T,i}$
    which satisfies $\hat{\mathcal{A}}_{T,i}=\tilde{O}(\mathcal{A}_{T,\kappa_i})$.
    To obtain a precise estimation of $\mathcal{A}_{T,\kappa_i}$,
    we again run M-OMD-H with $M=30$ and $B=400$.

    Table \ref{tab:JCST2022:experimental_results:hinge_loss}
    shows the results.
    We find that $\mathcal{A}_{T,\kappa^{\ast}}<T$ on all datasets.
    Especially,
    $\mathcal{A}_{T,\kappa^{\ast}}\ll T$ on the mushrooms, the phishing, and the w8a datasets.
    In the optimal hypothesis space,
    a small memory budget is enough to guarantee a small regret
    and thus learning is possible.
    The experimental results verify \textbf{G}1.

    \begin{table*}[!htb]
      \centering
      \renewcommand\arraystretch{1.3}
      \caption{
      Experimental Results Using the Hinge Loss Function}
      \label{tab:JCST2022:experimental_results:hinge_loss}
      {\footnotesize
      \begin{tabular}{lrrrrrrrr}
        \hline
        \multirow{2}{*}{Algorithm}&\multicolumn{4}{c}{mushrooms, $T=8124$} &\multicolumn{4}{c}{phishing, $T=11055$}\\
        \cline{2-9}&AMR &$B\vert D$ &Time (s) & $\mathcal{A}_{T,\kappa^{\ast}}$
                   &AMR &$B\vert D$ &Time (s) & $\mathcal{A}_{T,\kappa^{\ast}}$  \\
        \hline
        Raker \cite{Shen2019Random}
                       & 11.97 $\pm$ 0.93  & 400   & 1.57 & - & 9.13  $\pm$ 0.28  & 400   & 2.06 & - \\
        LKMBooks \cite{Li2022Worst}
                       & 6.04  $\pm$ 0.62  & 400   & 1.64 & - & 12.81 $\pm$ 0.54  & 400   & 2.21 & - \\
        M-OMD-H        & 8.09  $\pm$ 2.51  & 100   & 5.62 & - & 19.41 $\pm$ 3.78  & 100   & 5.05 & - \\
        M-OMD-H        & \textbf{0.69}  $\pm$ \textbf{0.07}  & 400   & 10.97 & 220
                       & \textbf{8.87}  $\pm$ \textbf{0.40}  & 400   & 10.74 & 2849 \\
        \hline
        \multirow{2}{*}{Algorithm}&\multicolumn{4}{c}{magic04, $T=19020$} &\multicolumn{4}{c}{a9a, $T=48842$}\\
        \cline{2-9}&AMR &$B\vert D$ &Time (s) & $\mathcal{A}_{T,\kappa^{\ast}}$
                   &AMR &$B\vert D$ &Time (s) & $\mathcal{A}_{T,\kappa^{\ast}}$  \\
        \hline
        Raker \cite{Shen2019Random}
                     & 32.14 $\pm$ 0.43  & 400   & 3.25   & - & 23.47 $\pm$ 0.23  & 400   & 9.45  & - \\
        LKMBooks \cite{Li2022Worst}
                     & 23.66 $\pm$ 0.51  & 400   & 1.31   & - & 21.29 $\pm$ 1.36  & 400   & 10.15 & - \\
        M-OMD-H        & 29.08 $\pm$ 3.78  & 100   & 5.50   & - & 24.84 $\pm$ 1.43  & 100   & 31.39 & - \\
        M-OMD-H        & \textbf{22.75} $\pm$ \textbf{0.57}  & 400   & 6.09  & 10422
                       & \textbf{19.88} $\pm$ \textbf{1.44}  & 400   & 49.29 & 19393 \\
        \hline
        \multirow{2}{*}{Algorithm}&\multicolumn{4}{c}{w8a, $T=49749$} &\multicolumn{4}{c}{SUSY, $T=50000$}\\
        \cline{2-9}&AMR &$B\vert D$ &Time (s) & $\mathcal{A}_{T,\kappa^{\ast}}$
                   &AMR &$B\vert D$ &Time (s) & $\mathcal{A}_{T,\kappa^{\ast}}$  \\
        \hline
        Raker \cite{Shen2019Random}
                     & 2.98 $\pm$ 0.00  & 400   & 11.62 & - & 29.62 $\pm$ 0.46  & 400   & 8.88 & - \\
        LKMBooks \cite{Li2022Worst}
                     & 2.97 $\pm$ 0.00  & 400   & 23.52 & - & 29.37 $\pm$ 0.87  & 400   & 5.47 & - \\
        M-OMD-H        & 2.99 $\pm$ 0.10  & 100   & 43.20 & - & 36.21 $\pm$ 2.08  & 100   & 16.29 & - \\
        M-OMD-H        & \textbf{2.65}  $\pm$ \textbf{0.12}  & 400   & 84.80 & 4200
                       & \textbf{27.76} $\pm$ \textbf{1.04}  & 400   & 29.45 & 28438 \\
        \hline
        \multirow{2}{*}{Algorithm}&\multicolumn{4}{c}{ijcnn1, $T=141691$} &\multicolumn{4}{c}{con-rna, $T=271617$}\\
        \cline{2-9}&AMR &$B\vert D$ &Time (s) & $\mathcal{A}_{T,\kappa^{\ast}}$
                   &AMR &$B\vert D$ &Time (s) & $\mathcal{A}_{T,\kappa^{\ast}}$  \\
        \hline
        Raker \cite{Shen2019Random}
        & 9.49 $\pm$ 0.08   & 400   & 23.54 & - & \textbf{11.90} $\pm$ \textbf{0.31}  & 400   & 53.32  & - \\
        LKMBooks \cite{Li2022Worst}
                       & 9.58 $\pm$ 0.01   & 400   & 18.85 & - & 12.57 $\pm$ 0.26  & 400   & 19.17  & - \\
        M-OMD-H        & 9.98 $\pm$ 0.29   & 100   & 33.15 & - & 14.59 $\pm$ 1.47  & 100   & 70.33  & - \\
        M-OMD-H        & 9.57 $\pm$ 0.43   & 400   & 54.81 & 33626 & 12.46 $\pm$ 0.54  & 400   & 130.09 & 62259 \\
        \hline
      \end{tabular}
      }
    \end{table*}

\subsubsection{Logistic Loss Function}

    We first compare the AMR of all algorithms.
    Table \ref{tab:JCST2022:experimental_results:smooth_loss} reports the results.
    As a whole,
    M-OMD-S performs best on most of datasets.
    On the phishing, the SUSY, and the cod-rna datasets,
    Raker performs better than M-OMD-S.
    We have found that the current regret analysis of Raker is not tight
    and can be improved by utilizing
    Assumption \ref{ass:JCST2022:property_smooth_loss}.
    Thus it is reasonable that Raker performs similar with M-OMD-S.
    Besides,
    M-OMD-S performs better than LKMBooks on all datasets.
    The experimental results coincide with the theoretical analysis
    that is, M-OMD-S enjoys a better regret bound than LKMBooks.
    The experimental results verify \textbf{G}2.
    It is natural that the running time of all algorithms is similar
    as they enjoy the same time complexity.

    Finally, we analyze the value of $L^\ast$.
    For simplicity,
    we use the cumulative losses of M-OMD-S, $\hat{L}_{1:T}$, as a proxy of $L^\ast$.
    Note that $L^\ast \leq \hat{L}_{1:T}$.
    Table \ref{tab:JCST2022:experimental_results:smooth_loss}
    shows that $L^\ast<T$ on all datasets.
    Besides,
    on the mushrooms dataset, phishing dataset, and w8a dataset,
    $L^\ast \ll T$.
    In the optimal hypothesis space,
    a small memory budget is enough to guarantee a small regret
    and thus learning is possible.
    The experimental results verify \textbf{G}1.

    \begin{table*}[!htp]
      \centering
      \renewcommand\arraystretch{1.3}
      \caption{
      Experimental Results Using the Logistic Loss Function}
      \label{tab:JCST2022:experimental_results:smooth_loss}
      {\footnotesize
      \begin{tabular}{lrrrrrrrr}
        \hline
        \multirow{2}{*}{Algorithm}&\multicolumn{4}{c}{mushrooms, $T=8124$} &\multicolumn{4}{c}{phishing, $T=11055$}\\
        \cline{2-9}&AMR &$B\vert D$       &Time (s) & $L^\ast$ &AMR &$B\vert D$       &Time (s) & $L^\ast$ \\
        \hline
        Raker \cite{Shen2019Random}
        & 11.47 $\pm$ 0.75  & 400   & 1.64  & - & \textbf{9.07}  $\pm$ \textbf{0.44}  & 400   & 2.05   & -\\
        LKMBooks \cite{Li2022Worst}
            & 10.01 $\pm$ 1.44  & 400   & 2.12  & - & 15.86 $\pm$ 1.28  & 400   & 1.94  & -\\
        M-OMD-S        & \textbf{2.70}  $\pm$ \textbf{0.32}  & 400   & 1.74   & 160
                       & 10.38 $\pm$ 0.31  & 400   & 3.55   & 3012\\
        \hline
        \multirow{2}{*}{Algorithm}&\multicolumn{4}{c}{magic04, $T=19020$} &\multicolumn{4}{c}{a9a, $T=48842$}\\
        \cline{2-9}&AMR &$B\vert D$       &Time (s) & $L^\ast$ &AMR &$B\vert D$       &Time (s)  & $L^\ast$ \\
        \hline
        Raker \cite{Shen2019Random}
                     & 30.74 $\pm$ 1.11  & 400   & 3.56   & - & 23.13 $\pm$ 0.17  & 400   & 9.90   & -\\
        LKMBooks \cite{Li2022Worst}
                     & 25.72 $\pm$ 0.64  & 400   & 1.30   & - & 22.38 $\pm$ 1.29  & 400   & 10.98  & -\\
        M-OMD-S        & \textbf{23.97} $\pm$ \textbf{0.23}  & 400   & 1.81    & 10160
                       & \textbf{19.43} $\pm$ \textbf{0.16}  & 400   & 10.41   & 20194\\
        \hline
        \multirow{2}{*}{Algorithm}&\multicolumn{4}{c}{w8a, $T=49749$} &\multicolumn{4}{c}{SUSY, $T=50000$}\\
        \cline{2-9}&AMR &$B\vert D$       &Time (s) & $L^\ast$ &AMR &$B\vert D$       &Time (s)  & $L^\ast$ \\
        \hline
        Raker \cite{Shen2019Random}
                & 2.97  $\pm$ 0.00  & 400   & 12.06 & - & \textbf{28.62} $\pm$ \textbf{0.44}  & 400   & 9.02  & -\\
        LKMBooks \cite{Li2022Worst}
                & 2.98  $\pm$ 0.01  & 400   & 6.10  & - & 30.72 $\pm$ 1.33  & 400   & 6.45  & 0\\
        M-OMD-S        & \textbf{2.83} $\pm$ \textbf{0.03}  & 400   & 21.44   & 5116
                       & 29.03 $\pm$ 0.16  & 400   & 8.24  & 28582\\
        \hline
        \multirow{2}{*}{Algorithm}&\multicolumn{4}{c}{ijcnn1, $T=141691$} &\multicolumn{4}{c}{con-rna, $T=271617$}\\
        \cline{2-9}&AMR &$B\vert D$ &Time (s)  &$L^\ast$ &AMR &$B\vert D$   &Time (s)  & $L^\ast$ \\
        \hline
        Raker \cite{Shen2019Random}
              & 9.53  $\pm$ 0.03  & 400   & 24.94 & - & \textbf{12.27} $\pm$ \textbf{0.57}  & 400   & 52.24  & -\\
        LKMBooks \cite{Li2022Worst}
                       & 9.57  $\pm$ 0.00  & 400   & 16.95 & - & 13.91 $\pm$ 1.31  & 400   & 25.55  & -\\
        M-OMD-S        & \textbf{9.42} $\pm$ \textbf{0.05}  & 400   & 19.33   & 39208
                       & 13.09 $\pm$ 0.04 & 400   & 31.20   & 94326\\
        \hline
      \end{tabular}
      }
    \end{table*}

\section{Conclusion}

    Learnability is an essential problem in online kernel selection with memory constraint.
    Our work gave positive results on learnability through data-dependent regret analysis,
    in contrast to previous negative results obtained from worst-case regret analysis.
    We characterized the regret bounds via the kernel alignment and the cumulative losses,
    and gave new trade-offs between regret and memory constraint.
    If there is a kernel function matches well with the data,
    i.e., the kernel alignment and the cumulative losses is sub-linear,
    then sub-linear regret can be achieved within a $\Theta(\ln{T})$ memory constraint
    and the corresponding hypothesis space is learnable.
    Data-dependent regret analysis provides a new perspective
    for studying the learnability in online kernel selection with memory constraint.

    Deploying machine learning models on computational devices
    with limited computational resources is necessary for practical applications.
    Our algorithms limit the memory budget to $O(\ln{T})$,
    and can naturally be deployed on computational devices with limited memory.
    Even if the deployed machine learning models are not kernel classifiers,
    our work can guide how to allocate the memory resources properly.
    For instance,
    (i)
    the optimal value of memory budget depends on the hardness of problems;
    (ii) our second algorithm provides the idea of memory sharing.

    There are some important work for future study.
    Firstly,
    it is interesting to study other data complexities beyond kernel alignment and small-loss,
    such as the effective dimension of the kernel matrix.
    Secondly,
    it is important to investigate whether
    we can eliminate the $O\left(\sqrt{K}\right)$ factor in the regret bound of M-OMD-H.

%

\bibliographystyle{IEEEtran}
\bibliography{JCST2022}

\newpage
\appendix
\setcounter{equation}{0}
\renewcommand\theequation{A\arabic{equation}}
\setcounter{theorem}{0}
\setcounter{lemma}{0}
\renewcommand\thesection{A.\arabic{section}}
\newtheorem{lemmaAppend}{Lemma}[section]

\section*{Appendix}

 In this appendix,
 we give the detailed proofs of the main theorems and lemmas.

\section{Proof of Lemma \ref{lem:JCST2022:number_of_epoch:hinge_loss}}
\label{sec:JCST2022:proof_Lemma_1}

    \begin{proof}
        We just analyze $S_i$ for a fixed $i\in[K]$.
        Let the times of removing operation be $J$.
        Denote by $B=\alpha\mathcal{R}$,
        $\mathcal{J}=\{t_r,r\in[J]\}$,
        $T_r=\{t_{r-1}+1,\ldots,t_{r}\}$ and $t_0=0$.
        For any $t\in T_r$,
        if $\nabla_{t,i}\neq 0$, $\neg \mathrm{con}(a(i))$
        and $b_{t,i}=1$,
        then $({\bm x}_t,y_t)$ will be added into $S_i$.
        For simplicity, we define a new notation $\nu_{t,i}$ as follows,
        $$
            \nu_{t,i}=\mathbb{I}_{y_tf_{t,i}({\bm x}_t)<1}\cdot\mathbb{I}_{\neg\mathrm{con}(a(i))}
        \cdot b_{t,i}.
        $$
        At the end of the $t_r$-th round,
        the following equation can be derived,
        $$
            \vert S_{i}\vert
            =\vert S_{i}(t_{r-1}+1)\vert+\sum^{t_{r}}_{t=t_{r-1}+1}\nu_{t,i}= \frac{B}{K},
        $$
        where $\vert S_{i}(t_{r-1}+1)\vert$ is defined the initial size of $S_i$.

        Let $s_r=t_{r-1}+1$.
        Assuming that there is no budget.
        We will present an expected bound on $\sum^{\bar{t}}_{t=s_r}\nu_{t,i}$
        for any $\bar{t}>s_r$.
        In the first epoch, $s_1=1$ and $\vert S_i(s_1)\vert=0$.
        Taking expectation w.r.t. $b_{t,i}$ gives
        \begin{align*}
            \mathbb{E}\left[\sum^{\bar{t}}_{t=s_1}\nu_{t,i}\right]
            =&\sum^{\bar{t}}_{t=s_1}
            \frac{\Vert \nabla_{t,i}-\hat{\nabla}_{t,i}\Vert^2_{\mathcal{H}_i}\cdot\mathbb{I}_{\nabla_{t,i}\neq 0}}
                    {\Vert \nabla_{t,i}-\hat{\nabla}_{t,i}\Vert^2_{\mathcal{H}_i}+
                    \Vert\hat{\nabla}_{t,i}\Vert^2_{\mathcal{H}_i}}\\
            \leq&\frac{2}{k_1}\underbrace{\left(1+\sum^{\bar{t}}_{t=2}\left\Vert y_t\kappa_{i}({\bm x}_t,\cdot)
            -\frac{\sum_{({\bm x},y)\in V_t}y\kappa_{i}({\bm x},\cdot)}{\vert V_t\vert}\right\Vert^2_{\mathcal{H}_i}
            \right)}
            _{\tilde{\mathcal{A}}_{[s_1,\bar{t}],\kappa_i}}\\
            =&\frac{2}{k_1}\tilde{\mathcal{A}}_{[s_1,\bar{t}],\kappa_i},
        \end{align*}
        where we use the fact $\kappa_i({\bm x}_t,{\bm x}_t)\geq k_1$.
        Let $t_1$ be the minimal $\bar{t}$ such that
        \begin{equation}
        \label{eq:JCST2022:analysis_of_number_of_epoch:first_epoch:hinge_loss}
            \frac{2}{k_1}\tilde{\mathcal{A}}_{[s_1,t_1],\kappa_i}\geq \frac{B}{K}.
        \end{equation}
        The first epoch will end at $t_1$ in expectation.
        We define $\tilde{\mathcal{A}}_{T_1,\kappa_i}:=\tilde{\mathcal{A}}_{[s_1,t_1],\kappa_i}$.

        Next we consider $r\geq 2$.
        It must be $\vert S_i(s_r)\vert=\frac{B}{2K}$.
        Similar to $r=1$,
        we can obtain
        $$
            \mathbb{E}\left[\sum^{\bar{t}}_{t=s_r}\nu_{t,i}\right]
            \leq\frac{2}{k_1}\underbrace{\sum^{\bar{t}}_{t=s_r}\left\Vert y_t\kappa_{i}({\bm x}_t,\cdot)
            -\frac{\sum_{({\bm x},y)\in V_t}y\kappa_{i}({\bm x},\cdot)}{\vert V_t\vert}\right\Vert^2_{\mathcal{H}_i}}
            _{\tilde{\mathcal{A}}_{[s_r,\bar{t}],\kappa_i}}
            =\frac{2}{k_1}\tilde{\mathcal{A}}_{[s_r,\bar{t}],\kappa_i}.
        $$
        Let $t_r$ be the minimal $\bar{t}$ such that
        \begin{equation}
        \label{eq:JCST2022:analysis_of_number_of_epoch:any_epoch:hinge_loss}
            \frac{2}{k_1}\tilde{\mathcal{A}}_{[s_r,\bar{t}],\kappa_i} \geq \frac{B}{2K},
        \end{equation}
        Let $\tilde{\mathcal{A}}_{T_r,\kappa_i}=\tilde{\mathcal{A}}_{[s_r,\bar{t}],\kappa_i}$.
        Combining
        \eqref{eq:JCST2022:analysis_of_number_of_epoch:first_epoch:hinge_loss}
        and \eqref{eq:JCST2022:analysis_of_number_of_epoch:any_epoch:hinge_loss},
        and summing over $r=1,\ldots,J$ yields
        \begin{align*}
            \frac{B}{K}+\frac{B(J-1)}{2K}
            \leq&
            \frac{2}{k_1}\tilde{\mathcal{A}}_{T_1,\kappa_i}+\sum^{J}_{r=2}\frac{2}{k_1}\tilde{\mathcal{A}}_{T_r,\kappa_i}\\
            \leq&\frac{2}{k_1}\underbrace{\sum^{T}_{t=s_1}\left\Vert y_t\kappa_{i}({\bm x}_t,\cdot)
            -\frac{\sum_{({\bm x},y)\in V_t}y\kappa_{i}({\bm x},\cdot)}{\vert V_t\vert}\right\Vert^2_{\mathcal{H}_i}}
            _{\tilde{\mathcal{A}}_{T,\kappa_i}}\\
            \leq& \frac{2}{k_1}\tilde{\mathcal{A}}_{T,\kappa_i}.
        \end{align*}
        Arranging terms gives
        \begin{equation}
        \label{eq:JCST2022:M-OMD-H:J}
            J \leq \frac{4K\tilde{\mathcal{A}}_{T,\kappa_i}}{Bk_1}-1\leq \frac{4K\tilde{\mathcal{A}}_{T,\kappa_i}}{Bk_1}.
        \end{equation}
        Taking expectation w.r.t. the randomness of reservoir sampling gives
        $$
            \mathbb{E}[J]
            \leq \frac{4K}{Bk_1}\cdot\mathbb{E}[\tilde{\mathcal{A}}_{T,\kappa_i}]
            \leq\frac{12K}{Bk_1}\mathcal{A}_{T,\kappa_i}\cdot
            \left(1+\frac{\ln{T}}{M}\right)
            +\frac{32K}{Bk_1},
        $$
        where the last inequality comes from Lemma \ref{lem:JCST2022:reservoir_estimator}.
        Omitting the last constant term concludes the proof.
    \end{proof}
    \begin{lemmaAppend}
    \label{lem:JCST2022:reservoir_estimator}
        The reservoir sampling guarantees
        $$
            \forall i\in[K],\quad\mathbb{E}\left[\tilde{\mathcal{A}}_{T,\kappa_i}\right]
            \leq 3\mathcal{A}_{T,\kappa_i}+8+\frac{3\mathcal{A}_{T,\kappa_i}\ln{T}}{M}.
        $$
    \end{lemmaAppend}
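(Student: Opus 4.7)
The plan is to apply a bias--variance decomposition to $\mathbb{E}\|g_t-\mu_t\|^2_{\mathcal{H}_i}$, where I write $g_t:=y_t\kappa_i({\bm x}_t,\cdot)$ and $\mu_t:=\frac{1}{|V_t|}\sum_{({\bm x},y)\in V_t}y\kappa_i({\bm x},\cdot)$, then sum over $t$ and bound the two pieces by two separate arguments. Standard reservoir sampling guarantees that $V_t$ is a uniform random subset without replacement of size $\min(M,t-1)$ drawn from the past $t-1$ examples, so $\mathbb{E}[\mu_t]=\bar{g}_{t-1}$ with $\bar{g}_{t-1}:=\frac{1}{t-1}\sum_{\tau<t}g_\tau$. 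The Pythagorean identity in $\mathcal{H}_i$ therefore gives
\[
\mathbb{E}\|g_t-\mu_t\|^2_{\mathcal{H}_i} = \|g_t-\bar{g}_{t-1}\|^2_{\mathcal{H}_i} + \mathbb{E}\|\mu_t-\bar{g}_{t-1}\|^2_{\mathcal{H}_i}.
\]

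For the bias sum $\sum_{t=2}^T\|g_t-\bar{g}_{t-1}\|^2_{\mathcal{H}_i}$ I would exploit the classical running--variance recursion. Letting $s_t:=\sum_{\tau\leq t}\|g_\tau-\bar{g}_t\|^2_{\mathcal{H}_i}$, a direct expansion of the update $\bar{g}_t=\bar{g}_{t-1}+\tfrac{1}{t}(g_t-\bar{g}_{t-1})$ yields $s_t-s_{t-1}=\tfrac{t-1}{t}\|g_t-\bar{g}_{t-1}\|^2_{\mathcal{H}_i}$. Telescoping, combined with $s_T\leq\mathcal{A}_{T,\kappa_i}$ (which follows from $y_t^2\leq 1$ applied to $\sum_t\|g_t\|^2_{\mathcal{H}_i}$), gives $\sum_{t=2}^T\frac{t-1}{t}\|g_t-\bar{g}_{t-1}\|^2_{\mathcal{H}_i}\leq\mathcal{A}_{T,\kappa_i}$. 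Because $(t-1)/t\geq 1/2$ for every $t\geq 2$, the bias sum is at most $2\mathcal{A}_{T,\kappa_i}$.

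For the variance sum I would invoke the exact sampling-without-replacement variance formula: when $t-1\leq M$ the reservoir contains every past example and the variance vanishes, while for $t-1>M$,
\[
\mathbb{E}\|\mu_t-\bar{g}_{t-1}\|^2_{\mathcal{H}_i} = \frac{t-1-M}{M(t-1)(t-2)}\,V_{t-1},\qquad V_{t-1}:=\sum_{\tau<t}\|g_\tau-\bar{g}_{t-1}\|^2_{\mathcal{H}_i}.
\]
Since $\bar{g}_{t-1}$ minimizes $c\mapsto\sum_{\tau<t}\|g_\tau-c\|^2_{\mathcal{H}_i}$, plugging $c=\bar{g}_T$ gives $V_{t-1}\leq\sum_{\tau<t}\|g_\tau-\bar{g}_T\|^2_{\mathcal{H}_i}\leq\mathcal{A}_{T,\kappa_i}$. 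Each summand is therefore at most $\mathcal{A}_{T,\kappa_i}/(M(t-2))$, and summing from $t=M+2$ to $T$ contributes at most $\mathcal{A}_{T,\kappa_i}\ln T/M$. Adding the leading ``$+1$'' from the $t=1$ term of $\tilde{\mathcal{A}}_{T,\kappa_i}$ and absorbing the remaining small boundary contributions (e.g.\ $\|g_2-g_1\|^2_{\mathcal{H}_i}\leq 4$ at the $t=2$ endpoint of the telescoping) into an additive constant yields the advertised $3\mathcal{A}_{T,\kappa_i}+8+3\mathcal{A}_{T,\kappa_i}\ln T/M$ bound.

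The main obstacle is obtaining the $1/M$ factor in the variance contribution: it requires both the exact without-replacement variance formula and the observation that the running--window population variance $V_{t-1}$ is always dominated by the full-horizon kernel alignment $\mathcal{A}_{T,\kappa_i}$ through the mean-minimization property. A naive triangle-inequality split of $\|g_t-\mu_t\|^2_{\mathcal{H}_i}$ through a global reference such as $\bar{g}_T$ would degrade the middle ``drift'' term to $\Theta(\mathcal{A}_{T,\kappa_i}\ln T)$ without the crucial $1/M$ saving, so routing the analysis through the \emph{actual} mean $\bar{g}_{t-1}$ of the reservoir distribution is essential.
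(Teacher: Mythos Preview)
Your proof is correct and takes a genuinely different route from the paper. The paper splits $\|y_t\kappa_i({\bm x}_t,\cdot)+\tilde\mu_{t-1,i}\|^2$ via the triangle inequality through the running mean $\mu_{t,i}$ \emph{and} $\mu_{t-1,i}$, picking up a factor $3$ on every term; it then bounds the drift piece crudely by $\|\mu_{t,i}-\mu_{t-1,i}\|^2\le 4/t^2$ and handles the variance piece by citing the Hazan--Kale reservoir lemma $\mathbb{E}\|\hat\nabla_{t,i}-\mu_{t,i}\|^2\le \mathcal{A}_{t,\kappa_i}/(t|V|)$. Your argument instead uses the exact Pythagorean bias--variance split (no constant blow-up), bounds the bias sum through Welford's telescoping identity $s_t-s_{t-1}=\tfrac{t-1}{t}\|g_t-\bar g_{t-1}\|^2$, and controls the variance via the exact without-replacement formula together with the mean-minimization bound $V_{t-1}\le s_T\le \mathcal{A}_{T,\kappa_i}$. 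This is sharper: your calculation in fact delivers $2\mathcal{A}_{T,\kappa_i}+\mathcal{A}_{T,\kappa_i}\ln T/M+O(1)$, comfortably inside the stated $3\mathcal{A}_{T,\kappa_i}+8+3\mathcal{A}_{T,\kappa_i}\ln T/M$. The paper's approach is more modular (it just quotes an existing variance lemma), whereas yours is more self-contained and avoids the extra drift term altogether; both are valid, but your version exposes more clearly \emph{why} the $1/M$ factor appears only on the $\ln T$ contribution.
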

    \begin{proof}
        Let $\mu_{t,i}=-\frac{1}{t}\sum^{t}_{\tau=1}y_{\tau}\kappa_{i}({\bm x}_{\tau},\cdot)$
        and $\tau_0 = M$.
        For $t \leq \tau_0$,
        it can be verified that
        \begin{align*}
            \tilde{\mathcal{A}}_{\tau_0,\kappa_i}
            =&1+\sum^{\tau_0}_{t=2}\left\Vert -y_t\kappa_{i}({\bm x}_t,\cdot)
            -\mu_{t-1,i}\right\Vert^2_{\mathcal{H}_i}\\
            =&1+\sum^{\tau_0}_{t=2}\left\Vert -y_t\kappa_{i}({\bm x}_t,\cdot)
            -\mu_{t,i}+\mu_{t,i}-\mu_{t-1,i}\right\Vert^2_{\mathcal{H}_i}\\
            \leq&1+2\mathcal{A}_{[2:\tau_0],\kappa_i}
            +2\sum^{\tau_0}_{t=2}\left\Vert \mu_{t,i}-\mu_{t-1,i}\right\Vert^2_{\mathcal{H}_i},
        \end{align*}
        where $\mu_{0,i}=0$.
        Let $V_t$ be the reservoir at the beginning of round $t$.
        Next we consider the case $t>\tau_0$.
        \begin{align*}
            \tilde{\mathcal{A}}_{[\tau_0:T],\kappa_i}
            =& \sum^{T}_{t=\tau_0+1}\left\Vert -y_t\kappa_{i}({\bm x}_t,\cdot)
            -\tilde{\mu}_{t-1,i}\right\Vert^2_{\mathcal{H}_i}\\
            \leq& \sum^{T}_{t=\tau_0+1}
            3\left[\left\Vert y_t\kappa_{i}({\bm x}_t,\cdot)+ \mu_{t,i}\right\Vert^2_{\mathcal{H}_i}
            +\left\Vert \mu_{t,i}-{\mu}_{t-1,i}\right\Vert^2_{\mathcal{H}_i}+\left\Vert {\mu}_{t-1,i}-\tilde{\mu}_{t-1,i}\right\Vert^2_{\mathcal{H}_i}\right]\\
            =&3\mathcal{A}_{[\tau_0:T],\kappa_i}
            +3\sum^{T}_{t=\tau_0+1}\left\Vert \mu_{t,i}-{\mu}_{t-1,i}\right\Vert^2_{\mathcal{H}_i}+
            3\sum^{T}_{t=\tau_0+1}\left\Vert {\mu}_{t-1,i}+\frac{1}{\vert V_t\vert}\sum_{({\bm x},y)\in V_t}
            y\kappa_{i}({\bm x},\cdot)\right\Vert^2_{\mathcal{H}_i}.
        \end{align*}
        Taking expectation w.r.t. the reservoir sampling yields
        \begin{align*}
            &\mathbb{E}[\tilde{\mathcal{A}}_{T,\kappa_i}]\\
            =&\tilde{\mathcal{A}}_{\tau_0,\kappa_i}+\mathbb{E}[\tilde{\mathcal{A}}_{[\tau_0:T],\kappa_i}]\\
            \leq& 1+3\mathcal{A}_{T,\kappa_i}
            +3\sum^{T}_{t=2}\left\Vert\mu_{t,i}-{\mu}_{t-1,i}\right\Vert^2_{\mathcal{H}_i}
            +3\sum^{T}_{t=\tau_0+1}\mathbb{E}
            \left[\left\Vert {\mu}_{t-1,i}+\frac{1}{\vert V_t\vert}\sum_{({\bm x},y)\in V_t}
            y\kappa_{i}({\bm x},\cdot)\right\Vert^2_{\mathcal{H}_i}\right]
            \quad\quad\quad\mathrm{by}~\mathrm{Lemma}~\ref{lem:JCST2022:variance_reservoir_estimator}\\
            \leq& 1+3\mathcal{A}_{T,\kappa_i}
            +3\sum^{T}_{t=2}\left\Vert\mu_{t,i}-{\mu}_{t-1,i}\right\Vert^2_{\mathcal{H}_i}
            +\sum^{T}_{t=\tau_0+1}\frac{3\mathcal{A}_{t-1,\kappa_i}}{(t-1)\vert V_t\vert}\\
            \leq& 1+3\mathcal{A}_{T,\kappa_i}+\sum^T_{t=2}\frac{12}{t^2}
            +\frac{3\mathcal{A}_{T,\kappa_i}\ln{T}}{M}\\
            \leq& 1+3\mathcal{A}_{T,\kappa_i}+7+\frac{3\mathcal{A}_{T,\kappa_i}\ln{T}}{M},
        \end{align*}
        where $\vert V_t\vert =M$ for all $t\geq \tau_0$.
    \end{proof}

\section{Proof of Theorem \ref{thm:JCST2022:regret-bound:M-OMD-H}}

    \begin{proof}
     By the convexity of the Hinge loss function,
     we decompose the regret as follows
        \begin{align*}
           \mathrm{Reg}(f)
           =&\sum^{T}_{t=1}\ell\left(\sum^K_{j=1}p_{t,j}f_{t,j}({\bm x}_t),y_t\right)
           -\sum^{T}_{t=1}\ell\left(f({\bm x}_t),y_t\right)\\
           \leq&\sum^{T}_{t=1}\sum^K_{j=1}p_{t,j}\ell\left(f_{t,j}({\bm x}_t),y_t\right)
           -\sum^{T}_{t=1}\ell\left(f({\bm x}_t),y_t\right)\\
            \leq&\underbrace{\sum^{T}_{t=1}\left[\sum^K_{j=1}p_{t,j}\ell(f_{t,j}({\bm x}_t),y_t)
            -\ell(f_{t,i}({\bm x}_t),y_t)\right]}_{\mathcal{T}_1}+
            \underbrace{\sum_{t\in E_{T,i}}\left[\ell(f_{t,i}({\bm x}_t),y_t)
            -\ell(f({\bm x}_t),y_t)\right]}_{\mathcal{T}_2},
        \end{align*}
        where $E_{T,i}=\{t\in[T],\nabla_{t,i}\neq 0\}$.

\subsection{Analyzing $\mathcal{T}_1$}

    The following analysis is same with the proof of Theorem 3.1 in \cite{Bubeck2012Regret}.
    Let $c_{t,i}:=\ell(f_{t,i}({\bm x}_t),y_t)$.
    The updating of probability is as follows,
    $$
        p_{t+1,i}=\frac{w_{t+1,i}}{\sum^K_{j=1}w_{t+1,j}},\quad
        w_{t+1,i}=\exp\left(-\eta_{t+1}\sum^t_{\tau=1}c_{\tau,i}\right).
    $$
    Similar to the analysis of Exp3 \cite{Bubeck2012Regret},
    we define a potential function $\Gamma_t(\eta_t)$ as follows,
    \begin{align*}
        \Gamma_t(\eta_t):=\frac{1}{\eta_t}\ln\sum^K_{i=1}p_{t,i}\exp(-\eta_tc_{t,i})
        \leq-\sum^K_{i=1}p_{t,i}c_{t,i}+\frac{1}{2}\eta_t\sum^K_{i=1}p_{t,i}c^2_{t,i},
    \end{align*}
    where we use the following two inequalities
    $$
        \ln{x}\leq x-1,\forall x >0,\quad
    \exp(-x)\leq 1-x+\frac{x^2}{2},\forall x\geq 0.
    $$
    Summing over $t\in[T]$ yields
    \begin{equation}
    \label{eq:AISTATS2022_supp:time_variant_learning_rate:upper_bound}
        \sum^T_{t=1}\Gamma_t(\eta_t)\leq -\sum^T_{t=1}\langle{\bm p}_t,{\bm c}_t\rangle
        +\sum^T_{t=1}\sum^K_{i=1}\frac{\eta_t}{2}p_{t,i}c^2_{t,i}.
    \end{equation}
    On the other hand,
    by the definition of $p_{t,i}$,
    we have
    \begin{align*}
        \Gamma_t(\eta_t)
        =&\frac{1}{\eta_t}\ln
        \frac{\sum^K_{i=1}\exp\left(-\eta_{t}\sum^{t-1}_{\tau=1}c_{\tau,i}\right)\exp(-\eta_tc_{t,i})}
        {\sum^K_{j=1}\exp\left(-\eta_{t}\sum^{t-1}_{\tau=1}c_{\tau,j}\right)}\\
        =&\frac{1}{\eta_t}\ln
        \frac{\frac{1}{K}\sum^K_{i=1}\exp\left(-\eta_{t}\sum^{t}_{\tau=1}c_{\tau,i}\right)}
        {\frac{1}{K}\sum^K_{j=1}\exp\left(-\eta_{t}\sum^{t-1}_{\tau=1}c_{\tau,j}\right)}\\
        =&\bar{\Gamma}_t(\eta_t)-\bar{\Gamma}_{t-1}(\eta_t),
    \end{align*}
    where
    $
        \bar{\Gamma}_t(\eta)
        =\frac{1}{\eta}\ln\frac{1}{K}\sum^K_{j=1}\exp\left(-\eta\sum^{t}_{\tau=1}c_{\tau,j}\right).
    $\\
    Without loss of generality,
    let $\bar{\Gamma}_0(\eta)=0$.
    Summing over $t=1,\ldots,T$ yields
    $$
        \sum^T_{t=1}\Gamma_t(\eta_t)
        =\bar{\Gamma}_T(\eta_T)-\bar{\Gamma}_0(\eta_1)
        +\sum^{T-1}_{t=1}\left[\bar{\Gamma}_t(\eta_t)-\bar{\Gamma}_{t}(\eta_{t+1})\right],
    $$
    where $\bar{\Gamma}_T(\eta_T)\geq\frac{1}{\eta_T}\ln\frac{1}{K}-\sum^{T}_{\tau=1}c_{\tau,i}$.
    Combining with the upper bound \eqref{eq:AISTATS2022_supp:time_variant_learning_rate:upper_bound},
    we obtain
    $$
        \sum^T_{t=1}\langle{\bm p}_t,{\bm c}_t\rangle-\sum^{T}_{\tau=1}c_{\tau,i}
        \leq\frac{1}{\eta_T}\ln{K}+\sum^{T-1}_{t=1}\left[\bar{\Gamma}_{t}(\eta_{t+1})-\bar{\Gamma}_t(\eta_t)\right]
        +\sum^T_{t=1}\sum^K_{i=1}\frac{\eta_t}{2}p_{t,i}c^2_{t,i}.
    $$
    For simplicity,
    let $\bar{C}_{t,j}:=\sum^{t}_{\tau=1}c_{\tau,j}$.
    The first derivative of $\bar{\Gamma}_t(\eta)$ w.r.t. $\eta$ is as follows
    \begin{align*}
        \frac{\mathrm{d}\,\bar{\Gamma}_t(\eta)}{\mathrm{d}\,\eta}
        =&\frac{-\ln
        \sum^K_{j=1}\frac{\exp\left(-\eta \bar{C}_{t,j}\right)}{K}}{\eta^2}
        -\frac{\frac{1}{K}\sum^K_{j=1}\bar{C}_{t,j}\exp\left(-\eta \bar{C}_{t,j}\right)}
        {\frac{\eta}{K}\sum^K_{j=1}\exp\left(-\eta \bar{C}_{t,j}\right)}\\
        =&\frac{1}{\eta^2}\mathrm{KL}(\tilde{p}_t,\frac{1}{K}) \\
        \geq& 0
    \end{align*}
    where $\tilde{p}_{t,j}=\frac{\exp\left(-\eta \bar{C}_{t,j}\right)}{\sum^K_{i=1}
        \exp\left(-\eta \bar{C}_{t,i}\right)}$.
    Since $\eta_{t+1}\leq \eta_t$,
    we have $\bar{\Gamma}_{t}(\eta_{t+1})\leq\bar{\Gamma}_t(\eta_t)$.
    Combining all results, we have
    \begin{align}
        &\sum^T_{t=1}\langle{\bm p}_t,{\bm c}_t\rangle-\sum^{T}_{\tau=1}c_{\tau,i}\nonumber\\
        \leq&\frac{\ln{K}}{\eta_T}-\frac{\ln{K}}{\eta_1}+\sum^T_{t=1}\sum^K_{i=1}\frac{\eta_t}{2}p_{t,i}c^2_{t,i}
        \nonumber\\
        \leq&
        \frac{\sqrt{\ln{K}}}{\sqrt{2}}\cdot
        \sqrt{1+\sum^{T-1}_{\tau=1}\langle{\bm p}_\tau,{\bm c}^2_\tau\rangle}-\frac{\sqrt{\ln{K}}}{\sqrt{2}}+
        \sqrt{\ln{K}}
        \left(\sqrt{2\sum^{T}_{\tau=1}\langle{\bm p}_\tau,{\bm c}^2_\tau\rangle}
        +\frac{\max_{t,j}c_{t,j}}{\sqrt{2}}\right)
        \quad\quad\quad\mathrm{by}~\mathrm{Lemma}~\ref{lem:JCST2022:analysis_second_moment_TV}\nonumber\\
        \lesssim& \frac{3}{\sqrt{2}}
        \sqrt{\max_{t,j}c_{t,j}\cdot\sum^{T}_{\tau=1}\langle{\bm p}_\tau,{\bm c}_\tau\rangle\ln{K}}
        \label{eq:JCST2023_supp:initial_regret_PEA}.
    \end{align}
    Solving for $\sum^T_{t=1}\langle {\bm p}_t,{\bm c}_t\rangle$ gives
    \begin{equation}
    \label{eq:JCST2022:kernel_alignment_bound:first_part_of_regret}
        \mathcal{T}_1=\sum^T_{t=1}[\langle{\bm p}_t,{\bm c}_t\rangle-c_{t,i}]
        \leq \frac{3}{\sqrt{2}}\sqrt{\max_{t,j} c_{t,j}\cdot\sum^{T}_{\tau=1}c_{\tau,i}\ln{K}}
        +\frac{9}{2}\max_{t,j} c_{t,j}\cdot\ln{K}.
    \end{equation}

\subsection{Analyzing $\mathcal{T}_2$}

    We decompose $E_{T,i}$ as follows.
        \begin{align*}
            E_i=&\{t\in E_{T,i}:\mathrm{con} (a(i))\},\\
            \mathcal{J}_i=&\{t\in E_{T,i}:\vert S_i\vert=\alpha\mathcal{R}_i, b_{t,i}=1\},\\
            \bar{E}_i=&E_{T,i}\setminus (E_i\cup \mathcal{J}_i).
        \end{align*}
        We separately analyze the regret in $E_i$, $\mathcal{J}_i$ and $\bar{E}_i$.\\
        \textbf{Case 1: regret in} $E_i$\\
        For any $f\in\mathbb{H}_i$, the convexity of loss function gives
        \begin{align*}
           &\ell(f_{t,i}({\bm x}_t),y_t)-\ell(f({\bm x}_t),y_t)\\
            \leq& \langle f_{t,i}-f, \nabla_{t,i}\rangle\\
            =&
            \underbrace{\langle f_{t,i}-f'_{t,i},\hat{\nabla}_{t,i}\rangle}_{\Xi_1}
            +\underbrace{\langle f'_{t,i}-f,\nabla_{i(s_t),i}\rangle}_{\Xi_2}+
            \langle f'_{t,i}-f,\nabla_{t,i}-\nabla_{i(s_t),i}\rangle+
            \langle f_{t,i}-f'_{t,i},\nabla_{t,i}-\hat{\nabla}_{t,i}\rangle\\
            =&\Xi_1+\Xi_2+\langle f_{t,i}-f'_{t,i},\nabla_{i(s_t),i}-\hat{\nabla}_{t,i}\rangle+
            \langle f_{t,i}-f,\nabla_{t,i}-\nabla_{i(s_t),i}\rangle\\
            \leq&\left[\mathcal{B}_{\psi_i}(f,f'_{t-1,i})-\mathcal{B}_{\psi_i}(f,f'_{t,i})\right]
            +\underbrace{\left\Vert f_{t,i}-f\right\Vert\cdot\gamma_{t,i}}_{\Xi_3}+
            \underbrace{\left\langle f_{t,i}-f'_{t,i},\nabla_{i(s_t),i}-\hat{\nabla}_{t,i}\right\rangle
            -\mathcal{B}_{\psi_i}(f'_{t,i},f_{t,i})}_{\Xi_4},
        \end{align*}
        where the standard analysis of OMD \cite{Chiang2012Online} gives
        \begin{align*}
            \Xi_1\leq& \mathcal{B}_{\psi_i}(f'_{t,i},f'_{t-1,i})
            -\mathcal{B}_{\psi_i}(f'_{t,i},f_{t,i})-\mathcal{B}_{\psi_i}(f_{t,i},f'_{t-1,i}),\\
            \Xi_2\leq& \mathcal{B}_{\psi_i}(f,f'_{t-1,i})
            -\mathcal{B}_{\psi_i}(f,f'_{t,i})-\mathcal{B}_{\psi_i}(f'_{t,i},f'_{t-1,i}).
        \end{align*}
        Substituting into $\gamma_{t,i}$ and summing over $t\in E_i$ gives
        \begin{align*}
            \sum_{t\in E_i}\Xi_3
            \leq&\sum_{t\in E_i}
            \frac{ \max_t\Vert f_{t,i}-f\Vert_{\mathcal{H}_i}\cdot
            \left\Vert \nabla_{t,i}-\hat{\nabla}_{t,i}\right\Vert^2_{\mathcal{H}_i}}
            {\sqrt{1+\sum_{\tau\leq t} \left\Vert \nabla_{\tau,i}-
            \hat{\nabla}_{\tau,i}\right\Vert^2_{\mathcal{H}_i}\cdot\mathbb{I}_{\nabla_{\tau,i}\neq 0}}}\\
            \leq&2(U+\lambda_i)\cdot\sum_{t\in E_i}
            \frac{
            \left\Vert \nabla_{t,i}-\hat{\nabla}_{t,i}\right\Vert^2_{\mathcal{H}_i}}
            {\sqrt{1+\sum_{\tau\leq t} \left\Vert \nabla_{\tau,i}-
            \hat{\nabla}_{\tau,i}\right\Vert^2_{\mathcal{H}_i}\cdot\mathbb{I}_{\nabla_{\tau,i}\neq 0}}}\\
            \leq& 4(U+\lambda_i)\sqrt{\tilde{\mathcal{A}}_{T,\kappa_i}},
        \end{align*}
        where $\Vert f_{t,i}\Vert_{\mathcal{H}_i}\leq U+\lambda_i$.\\
        According to Lemma \ref{lemma:JCST2022:property_of_OMD},
        we can obtain
        $$
            \sum_{t\in E_i}\Xi_4\leq\frac{\lambda_i}{2}\sum_{t\in E_i}
            \left\Vert \nabla_{i(s_t),i}-\hat{\nabla}_{t,i}\right\Vert^2_{\mathcal{H}_i}
            \leq2\lambda_i\tilde{\mathcal{A}}_{T,\kappa_i}.
        $$
        \textbf{Case 2: regret in} $\bar{E}_i$\\
        We decompose the instantaneous regret as follows,
        \begin{align*}
            &\langle f_{t,i}-f, \nabla_{t,i}\rangle\\
            =&
            \underbrace{\langle f_{t,i}-f'_{t,i},\hat{\nabla}_{t,i}\rangle}_{\Xi_1}
            +\underbrace{\langle f'_{t,i}-f,\tilde{\nabla}_{t,i}\rangle}_{\Xi_2}+
            \underbrace{\langle f_{t,i}-f'_{t,i},\tilde{\nabla}_{t,i}-\hat{\nabla}_{t,i}\rangle}_{\Xi_3}
            +\left\langle f_{t,i}-f, \nabla_{t,i}-\tilde{\nabla}_{t,i}\right\rangle\nonumber\\
            \leq&\mathcal{B}_{\psi_i}(f,f'_{t-1,i})-\mathcal{B}_{\psi_i}(f,f'_{t,i})+
            \langle f_{t,i}-f, \nabla_{t,i}-\tilde{\nabla}_{t,i}\rangle+
            \Xi_3-\left[\mathcal{B}_{\psi_i}(f'_{t,i},f_{t,i})+\mathcal{B}_{\psi_i}(f_{t,i},f'_{t-1,i})\right]\\
            =&\mathcal{B}_{\psi_i}(f,f'_{t-1,i})-\mathcal{B}_{\psi_i}(f,f'_{t,i})+
            \langle f_{t,i}-f, \nabla_{t,i}-\tilde{\nabla}_{t,i}\rangle+
            \Xi_3-\mathcal{B}_{\psi_i}(f'_{t,i},f_{t,i})-
            \frac{\lambda_i}{2}\Vert \hat{\nabla}_{t,i}\Vert^2_{\mathcal{H}_i}\\
            \leq&\mathcal{B}_{\psi_i}(f,f'_{t-1,i})-\mathcal{B}_{\psi_i}(f,f'_{t,i})+
            \langle f_{t,i}-f, \nabla_{t,i}-\tilde{\nabla}_{t,i}\rangle
            -\frac{\lambda_i}{2}\Vert \tilde{\nabla}_{t,i}-\hat{\nabla}_{t,i}\Vert^2_{\mathcal{H}_i}-
            \frac{\lambda_i}{2}\Vert \hat{\nabla}_{t,i}\Vert^2_{\mathcal{H}_i}\quad
            \mathrm{by}~\mathrm{Lemma}~\ref{lemma:JCST2022:property_of_OMD}\\
            =&\mathcal{B}_{\psi_i}(f,f'_{t-1,i})-\mathcal{B}_{\psi_i}(f,f'_{t,i})+
            \langle f_{t,i}-f, \nabla_{t,i}-\tilde{\nabla}_{t,i}\rangle+
            \frac{\lambda_i}{2}\left(\frac{\Vert \nabla_{t,i}-\hat{\nabla}_{t,i}\Vert^2_{\mathcal{H}_i}}
            {(\mathbb{P}[b_{t,i}=1])^2}\mathbb{I}_{b_{t,i}=1}
            -\Vert \hat{\nabla}_{t,i}\Vert^2_{\mathcal{H}_i}\right),
        \end{align*}
        where $\Xi_1+\Xi_2$ follows the analysis in \textbf{Case 1}.\\
        \textbf{Case 3: regret in} $\mathcal{J}_i$\\
        Recalling that the second mirror updating is
        $$
            f'_{t,i}=\mathop{\arg\min}_{f\in\mathbb{H}_i}
            \left\{\langle f,\tilde{\nabla}_{t,i}\rangle+\mathcal{B}_{\psi_{i}}(f,\bar{f}'_{t-1,i}(1))\right\}.
        $$
        We still decompose the instantaneous regret as follows
        $$
            \langle f_{t,i}-f, \nabla_{t,i}\rangle
            =
            \underbrace{\langle f_{t,i}-f'_{t,i},\hat{\nabla}_{t,i}\rangle}_{\Xi_1}
            +\underbrace{\langle f'_{t,i}-f,\tilde{\nabla}_{t,i}\rangle}_{\Xi_2}+
            \underbrace{\langle f_{t,i}-f'_{t,i},\tilde{\nabla}_{t,i}-\hat{\nabla}_{t,i}\rangle}_{\Xi_3}
            +\left\langle f_{t,i}-f, \nabla_{t,i}-\tilde{\nabla}_{t,i}\right\rangle.
        $$
        We reanalyze $\Xi_1$ and $\Xi_2$ as follows
        \begin{align*}
            \Xi_1&\leq \mathcal{B}_{\psi_i}(f'_{t,i},f'_{t-1,i})
            -\mathcal{B}_{\psi_i}(f'_{t,i},f_{t,i})-\mathcal{B}_{\psi_i}(f_{t,i},f'_{t-1,i}),\\
            \Xi_2&\leq
            \mathcal{B}_{\psi_i}(f,\bar{f}'_{t-1,i}(1))
            -\mathcal{B}_{\psi_i}(f,f'_{t,i})-\mathcal{B}_{\psi_i}(f'_{t,i},\bar{f}'_{t-1,i}(1)).
        \end{align*}
        Then $\Xi_1+\Xi_2+\Xi_3$ can be further bounded as follows,
        \begin{align*}
            \Xi_1+\Xi_2+\Xi_3
            \leq& \mathcal{B}_{\psi_i}(f,f'_{t-1,i})-\mathcal{B}_{\psi_i}(f,f'_{t,i})+
            \left[\mathcal{B}_{\psi_i}(f,\bar{f}'_{t-1,i}(1))-\mathcal{B}_{\psi_i}(f,f'_{t-1,i})\right]+\\
            &\left[\mathcal{B}_{\psi_i}(f'_{t,i},f'_{t-1,i})
            -\mathcal{B}_{\psi_i}(f'_{t,i},\bar{f}'_{t-1,i}(1))\right]-
            \left[\mathcal{B}_{\psi_i}(f'_{t,i},f_{t,i})+\mathcal{B}_{\psi_i}(f_{t,i},f'_{t-1,i})\right]
            +\Xi_3.
        \end{align*}
        By Lemma \ref{lemma:JCST2022:property_of_OMD},
        we analyze the following term
        \begin{align*}
            &\Xi_3-\left[\mathcal{B}_{\psi_i}(f'_{t,i},f_{t,i})+\mathcal{B}_{\psi_i}(f_{t,i},f'_{t-1,i})\right]\\
            \leq&\frac{\lambda_i}{2}\left[\frac{\Vert \nabla_{t,i}-\hat{\nabla}_{t,i}\Vert^2_{\mathcal{H}_i}}
            {(\mathbb{P}[b_{t,i}=1])^2}\mathbb{I}_{b_{t,i}=1}
            -\Vert \hat{\nabla}_{t,i}\Vert^2_{\mathcal{H}_i}\right]-
            \frac{1}{2\lambda_i}\Vert f'_{t-1,i}-f'_{t,i}\Vert^2_{\mathcal{H}_i}
            +\langle f'_{t-1,i}-f'_{t,i},\tilde{\nabla}_{t,i}\rangle.
        \end{align*}
        Substituting into the instantaneous regret gives
        \begin{align*}
            \langle f_{t,i}-f,\nabla_{t,i}\rangle
            \leq&\mathcal{B}_{\psi_i}(f,f'_{t-1,i})-\mathcal{B}_{\psi_i}(f,f'_{t,i})+
            \left\langle f_{t,i}-f, \nabla_{t,i}-\tilde{\nabla}_{t,i}\right\rangle
            +\langle \tilde{\nabla}_{t,i},f'_{t-1,i}-f'_{t,i}\rangle+\\
            &\frac{\Vert \bar{f}'_{t-1,i}(1)-f\Vert^2_{\mathcal{H}_i}
            -\Vert f'_{t-1,i}-f\Vert^2_{\mathcal{H}_i}}{2\lambda_i}+
            \frac{\lambda_i}{2}
            \frac{\Vert \nabla_{t,i}-\hat{\nabla}_{t,i}\Vert^2_{\mathcal{H}_i}}
            {(\mathbb{P}[b_{t,i}=1])^2}\mathbb{I}_{b_{t,i}=1}
            -\frac{\lambda_i}{2}\Vert \hat{\nabla}_{t,i}\Vert^2_{\mathcal{H}_i}.
        \end{align*}
        \textbf{Combining all}\\
        Combining the above three cases, we obtain
        \begin{align*}
            \mathcal{T}_2
            \leq&
            \sum_{t\in E_{T,i}}\left[\mathcal{B}_{\psi_i}(f,f'_{t-1,i})-\mathcal{B}_{\psi_i}(f,f'_{t,i})\right]
            +
            4(U+\lambda_i)\tilde{\mathcal{A}}^{\frac{1}{2}}_{T,\kappa_i}+\sum_{t\in \mathcal{J}_i}
            \left[\langle \tilde{\nabla}_{t,i},f'_{t-1,i}-f'_{t,i}\rangle
            +\frac{2U^2}{\lambda_i}\right]+\\
            &\frac{\lambda_i}{2}
            \sum_{t\in \bar{E}_i\cup\mathcal{J}_i}\left[\frac{\Vert \nabla_{t,i}-\hat{\nabla}_{t,i}\Vert^2_{\mathcal{H}_i}}
            {(\mathbb{P}[b_{t,i}=1])^2}\mathbb{I}_{b_{t,i}=1}
            -\Vert \hat{\nabla}_{t,i}\Vert^2_{\mathcal{H}_i}\right]+
            \sum_{t\in \bar{E}_i\cup\mathcal{J}_i}\left\langle f_{t,i}-f, \nabla_{t,i}-\tilde{\nabla}_{t,i}\right\rangle
            +2\lambda_i\tilde{\mathcal{A}}_{T,\kappa_i}.
        \end{align*}
        Recalling that $\Vert f'_{t,i}\Vert_{\mathcal{H}_i}\leq U$ and $f\leq U$.
        Conditioned on $b_{s_{r},i},\ldots,b_{t-1,i}$,
        taking expectation w.r.t. $b_{t,i}$ gives
        \begin{equation}
        \label{eq:JCST2022:lipschitz_loss:algorithm_dependent_complexity}
            \mathbb{E}\left[\mathcal{T}_2\right]\leq\frac{U^2}{2\lambda_i}
            +\left(2U+\frac{2U^2}{\lambda_i}\right)\cdot J
            +\frac{5\lambda_i}{2}\tilde{\mathcal{A}}_{T,\kappa_i}+
            4(U+\lambda_i)\sqrt{\tilde{\mathcal{A}}_{T,\kappa_i}}.
        \end{equation}
        Let $\lambda_i=\frac{\sqrt{K}U}{2\sqrt{B}}$.
        Assuming that $B \geq K$,
        we have $\lambda_i\leq \frac{U}{2}$.
        Then
        \begin{align*}
            \mathbb{E}\left[\mathcal{T}_2\right]
            =&O\left(\frac{U\sqrt{B}}{\sqrt{K}}
            +\frac{\sqrt{K}U}{\sqrt{B}k_1}\tilde{\mathcal{A}}_{T,\kappa_i}
            +U\sqrt{\tilde{\mathcal{A}}_{T,\kappa_i}}\right)
            \quad\quad\quad\mathrm{by}~\eqref{eq:JCST2022:M-OMD-H:J}\\
            =&O\left(\frac{U\sqrt{B}}{\sqrt{K}}
            +\frac{\sqrt{K}U\mathcal{A}_{T,\kappa_i}\ln{T}}{\sqrt{B}k_1}\right),
            \quad\quad\quad\mathrm{by}~\mathrm{Lemma}~\ref{lem:JCST2022:reservoir_estimator}
        \end{align*}
        where we omit the lower order term.

\subsection{Combining $\mathcal{T}_1$ and $\mathcal{T}_2$}

        Combining $\mathcal{T}_1$ and $\mathcal{T}_2$,
        and taking expectation w.r.t. the randomness of reservoir sampling gives
        \begin{align*}
           &\mathbb{E}\left[\mathrm{Reg}(f)\right]\\
            =&\mathbb{E}\left[\sum^{T}_{t=1}
            \ell(f_t({\bm x}_t),y_t)-\sum^{T}_{t=1}\ell(f_{t,i}({\bm x}_t),y_t)\right]
            +\mathbb{E}\left[\mathcal{T}_2\right]\\
            \leq& \frac{3}{\sqrt{2}}\mathbb{E}
            \left[\sqrt{\max_{t,j}c_{t,j}\cdot\sum^{T}_{t=1}\ell(f_{t,i}({\bm x}_t),y_t)\ln{K}}\right]
            +\frac{9}{2}\max_{t,j} c_{t,j}\cdot\ln{K}
            +\mathbb{E}\left[\mathcal{T}_2\right]
            \quad\quad\quad\mathrm{by}~\eqref{eq:JCST2022:kernel_alignment_bound:first_part_of_regret}\\
            =& \frac{3}{\sqrt{2}}\mathbb{E}
            \left[\sqrt{\max_{t,j}c_{t,j}\cdot
            \left(\sum^{T}_{t=1}\ell(f({\bm x}_t),y_t)+\mathbb{E}\left[\mathcal{T}_2\right]\right)\ln{K}}\right]
            +\frac{9}{2}\max_{t,j} c_{t,j}\cdot\ln{K}
            +\mathbb{E}\left[\mathcal{T}_2\right]\\
            =&
             O\left(\sqrt{\max_{t,j}c_{t,j}\cdot L_T(f)\ln{K}}+\frac{U\sqrt{B}}{\sqrt{K}}
            +\frac{\sqrt{K}U\mathcal{A}_{T,\kappa_i}\ln{T}}{\sqrt{B}k_1}
            +\max_{t,j} c_{t,j}\cdot\ln{K}\right).
        \end{align*}
        For the Hinge loss function, we have $\max_{t,j}c_{t,j}=1+U$.
\end{proof}

\section{Proof of Theorem \ref{thm:JCST2022:algorithm_dependent:M-OMD-H}}

\begin{proof}
    For simplicity, denote by
    $$
        \Lambda_{i}=\sum_{t\in \mathcal{J}_i}
        \left[\left\Vert \bar{f}'_{t-1,i}(1)-f\right\Vert^2_{\mathcal{H}_i}
        -\left\Vert f'_{t-1,i}-f\right\Vert^2_{\mathcal{H}_i}\right].
    $$
    There must be a constant $\xi_i\in(0,4]$ such that
    $\Lambda_i\leq \xi_i U^2 J$.
    We will prove a better regret bound if $\xi_i$ is small enough.
    Recalling that \eqref{eq:JCST2022:M-OMD-H:J} gives an upper bound on $J$.
    If $\xi_i\leq \frac{1}{J}$, then
    we rewrite \eqref{eq:JCST2022:lipschitz_loss:algorithm_dependent_complexity} by
    \begin{align*}
        \mathcal{T}_2
        \leq\frac{U^2}{2\lambda_i}+2UJ+\frac{U^2}{2\lambda_i}
        +\frac{5\lambda_i}{2}\tilde{\mathcal{A}}_{T,\kappa_i}
        +4(U+\lambda_i)\sqrt{\tilde{\mathcal{A}}_{T,\kappa_i}}.
    \end{align*}
    Let $\lambda_i=\frac{\sqrt{2}U}{\sqrt{5\tilde{\mathcal{A}}_{T,\kappa_i}}}$.
    Taking expectation w.r.t. the reservoir sampling
    and using Lemma \ref{lem:JCST2022:reservoir_estimator} gives
    $$
        \mathbb{E}\left[\mathcal{T}_2\right]
        =O\left(\frac{UK}{Bk_1}\mathcal{A}_{T,\kappa_i}\ln{T}
        +U\sqrt{\mathcal{A}_{T,\kappa_i}\ln{T}}\right),
    $$
    where we omit the lower order terms.
    Combining $\mathcal{T}_1$ and $\mathcal{T}_2$ gives
    \begin{align*}
       &\mathbb{E}\left[\mathrm{Reg}(f)\right]\\
        =& \frac{3}{\sqrt{2}}\mathbb{E}
        \left[\sqrt{\max_{t,j}c_{t,j}\cdot
        \left(\sum^{T}_{t=1}\ell(f({\bm x}_t),y_t)+\mathbb{E}\left[\mathcal{T}_2\right]\right)\ln{K}}\right]
        +\frac{9}{2}\max_{t,j} c_{t,j}\cdot\ln{K}
        +\mathbb{E}\left[\mathcal{T}_2\right]\\
        =&
         O\left(\sqrt{\max_{t,j}c_{t,j}\cdot L_T(f)\ln{K}}+\frac{UK}{Bk_1}\mathcal{A}_{T,\kappa_i}\ln{T}
    +U\sqrt{\mathcal{A}_{T,\kappa_i}\ln{T}}
        +\max_{t,j} c_{t,j}\cdot\ln{K}\right),
    \end{align*}
    which concludes the proof.
\end{proof}

\section{Proof of Lemma \ref{lem:JCST2022:number_epoches_smooth_loss_functions}}

    \begin{proof}
        Recalling the definition of $\mathcal{J}$ and $T_r$
        in Section \ref{sec:JCST2022:proof_Lemma_1}.
        For any $ t\in T_r$,
        $({\bm x}_t,y_t)$ will be added into $S$ only if $b_t=1$.
        At the end of the $t_{r}$-th round,
        we have
        $$
            \vert S\vert = \frac{B}{2}\mathbb{I}_{r\neq 1}+\sum^{t_{r}}_{t=t_{r-1}+1}b_{t} = B.
        $$
        We remove $\frac{B}{2}$ examples from $S$ at the end of the $t_r$-th round.

        Assuming that there is no budget.
        For any $t_0>t_{r-1}+1$,
        we will prove an upper bound on $\sum^{t_0}_{t=t_{r-1}+1}b_t$.
        Define a random variable $X_t$ as follows,
        $$
            X_t = b_t-\mathbb{P}[b_t=1],\quad \vert X_t\vert \leq 1.
        $$
        Under the condition of $b_{t_{r-1}+1},\ldots,b_{t-1}$,
        we have $\mathbb{E}_{b_t}[X_t]=0$.
        Thus $X_{t_{r-1}+1},\ldots,X_{t_0}$ form bounded martingale difference.
        Let $\hat{L}_{a:b}:=\sum^b_{t=a}\ell(f_t({\bm x}_t),y_t)$ and $\hat{L}_{1:T}\leq N$.
        The sum of conditional variances satisfies
        $$
            \Sigma^2
            \leq \sum^{t_0}_{t=t_{r-1}+1}\frac{\vert \ell'(f_{t}({\bm x}_t),y_t)\vert}
            {\vert \ell'(f_{t}({\bm x}_t),y_t)\vert+G_1}
            \leq\frac{G_2}{G_1}\hat{L}_{t_{r-1}+1:t_0},
        $$
        where the last inequality comes from
        Assumption \ref{ass:JCST2022:property_smooth_loss}.
        Since $\hat{L}_{t_{r-1}+1:t_0}$ is a random variable,
        Lemma \ref{lem:JCST2022:improved:Bernstein_ineq_for martingales} can give an upper bound on
        $\sum^{t_0}_{t=t_{r-1}+1}b_{t}$
        with probability at least $1-2\lceil\log{N}\rceil\delta$.
        Let $t_{r}$ be the minimal $t_0$ such that
        \begin{align*}
            \frac{G_2}{G_1}\hat{L}_{t_{r-1}+1:t_r}+\frac{2}{3}\ln\frac{1}{\delta}
            +2\sqrt{\frac{G_2}{G_1}\hat{L}_{t_{r-1}+1:t_r}\ln\frac{1}{\delta}}
            \geq \frac{B}{2}\cdot\mathbb{I}_{r\geq 2}+B\cdot\mathbb{I}_{r=1}.
        \end{align*}
        The $r$-th epoch will end at $t_r$.
        Summing over $r\in\{1,\ldots,J\}$,
        with probability at least $1-2J\lceil\log{N}\rceil\delta$,
        \begin{align*}
            \sum^{J}_{r=1}\sum^{t_r}_{t=t_{r-1}+1}b_{t}\leq
            \sum^J_{r=1}
            \left(\frac{G_2}{G_1}\hat{L}_{t_{r-1}+1:t_r}+\frac{2}{3}\ln\frac{1}{\delta}
                +2\sqrt{\frac{G_2}{G_1}\hat{L}_{t_{r-1}+1:t_r}\ln\frac{1}{\delta}}\right),
        \end{align*}
        which is equivalent to
        $$
            \frac{B}{2}+\frac{JB}{2}
            \leq \frac{G_2}{G_1}\hat{L}_{1:T}+\frac{2}{3}J\ln\frac{1}{\delta}
            +2\sqrt{J\frac{G_2}{G_1}\hat{L}_{1:T}\ln\frac{1}{\delta}}.
        $$
        Solving the above inequality yields,
        \begin{align*}
            J\leq& \frac{2G_2}{G_1}\frac{\hat{L}_{1:T}}{B-\frac{4}{3}\ln\frac{1}{\delta}}
            +\frac{16G_2}{G_1}\frac{\hat{L}_{1:T}}{(B-\frac{4}{3}\ln\frac{1}{\delta})^2}\ln\frac{1}{\delta}+
            \frac{4\sqrt{2}}{(B-\frac{4}{3}\ln\frac{1}{\delta})^{\frac{3}{2}}}\frac{G_2}{G_1}
            \hat{L}_{1:T}\sqrt{\ln\frac{1}{\delta}}.
        \end{align*}
        Let $B\geq21\ln\frac{1}{\delta}$.
        Simplifying the above result concludes the proof.
    \end{proof}

\section{Proof of Theorem \ref{thm:JCST2022:small_loss_bound:M-OMD-S}}

    \begin{proof}
        Let ${\bm p}\in\Delta_{K-1}$ satisfy $p_i=1$.
        By the convexity of loss function, we have
        \begin{align*}
           \mathrm{Reg}(f)
            \leq&\sum^{T}_{t=1}\langle\ell'(f_t({\bm x}_t),y_t),f_t({\bm x}_t)-f({\bm x}_t)\rangle\\
            =&\sum^{T}_{t=1}
            \left\langle\ell'(f_t({\bm x}_t),y_t),\sum^K_{i=1}p_{t,i}f_{t,i}({\bm x}_t)-f({\bm x}_t)\right\rangle\\
            =&\sum^{T}_{t=1}
            \left\langle\ell'(f_t({\bm x}_t),y_t),\sum^K_{i=1}p_{t,i}f_{t,i}({\bm x}_t)
            -\sum^K_{i=1}p_if_{t,i}({\bm x}_t)
            +\sum^K_{i=1}p_if_{t,i}({\bm x}_t)-f({\bm x}_t)\right\rangle\\
            =&\underbrace{\sum^{T}_{t=1}\ell'(f_t({\bm x}_t),y_t)
            \sum^K_{i=1}(p_{t,i}-p_i)f_{t,i}({\bm x}_t)}_{\mathcal{T}_{1}}
            +\underbrace{\sum^{T}_{t=1}\langle\nabla_{t,i},f_{t,i}-f\rangle}_{\mathcal{T}_{2}}.
        \end{align*}
        We first analyze $\mathcal{T}_1$.
        We have
        \begin{align*}
            \mathcal{T}_{1}
            =&
            \sum_{t\in T^1}\sum^K_{i=1}(p_{t,i}-p_i)\cdot
            \ell'(f_t({\bm x}_t),y_t)\cdot
            \left(f_{t,i}({\bm x}_t)-\min_{j\in[K]}f_{t,j}({\bm x}_t)\right)+\\
            &\sum_{t\in T^2}\sum^K_{i=1}(p_{t,i}-p_i)\cdot
            \ell'(f_t({\bm x}_t),y_t)\cdot
            \left(f_{t,i}({\bm x}_t)-\max_{j\in[K]}f_{t,j}({\bm x}_t)\right) \\
            =&\sum^{T}_{t=1}\langle {\bm p}_t-{\bm p},{\bm c}_t\rangle
            \quad\quad\quad\mathrm{by}~\eqref{eq:JCST2022:smooth_loss:unsigned_criterion}\\
            \leq&\frac{3}{\sqrt{2}}
                \sqrt{\max_{t,j}c_{t,j}\sum^{T}_{\tau=1}\langle{\bm p}_\tau,{\bm c}_\tau\rangle\ln{K}}
                \quad\quad\quad\mathrm{by}~\eqref{eq:JCST2023_supp:initial_regret_PEA}\\
            \leq&\frac{3}{\sqrt{2}}
                \sqrt{\max_{t,j}c_{t,j}\sum^{T}_{\tau=1}\vert\ell'(f_t({\bm x}_t),y_t)\vert
                \cdot 2U\ln{K}}\\
            \leq&\frac{6}{\sqrt{2}}U\sqrt{G_2G_1\hat{L}_{1:T}\ln{K}}.
            \quad\quad\quad\mathrm{by}~\mathrm{Assumption}~\eqref{ass:JCST2022:property_smooth_loss}
        \end{align*}
        Next we analyze $\mathcal{T}_2$.
        We decompose $[T]$ as follows,
        \begin{align*}
            T_1=&\{t\in[T]:\mathrm{con} (a)\},\\
            \mathcal{J}=&\{t\in [T]:\vert S\vert=\alpha\mathcal{R}, b_t=1\},\\
            \bar{T}_1=&[T]\setminus (T_1\cup \mathcal{J}).
        \end{align*}
        \textbf{Case 1: regret in} $T_1$\\
        We decompose $\langle f_{t,i}-f,\nabla_{t,i}\rangle$ as follows,
        \begin{align*}
            &\langle f_{t,i}-f,\nabla_{t,i}\rangle\\
            =&\langle f_{t+1,i}-f,\nabla_{i(s_t),i}\rangle+
            \langle f_{t+1,i}-f,\nabla_{t,i}-\nabla_{i(s_t),i}\rangle+
            \langle f_{t,i}-f_{t+1,i},\nabla_{i(s_t),i}+\nabla_{t,i}-\nabla_{i(s_t),i}\rangle\\
            \leq&
            \mathcal{B}_{\psi_i}(f,f_{t,i})-\mathcal{B}_{\psi_i}(f,f_{t+1,i})
            -\mathcal{B}_{\psi_i}(f_{t+1,i},f_{t,i})+
            \left\langle f_{t,i}-f_{t+1,i},\nabla_{i(s_t),i}\right\rangle
            +\left\langle f_{t,i}-f,\nabla_{t,i}-\nabla_{i(s_t),i}\right\rangle\\
            \leq&\mathcal{B}_{\psi_i}(f,f_{t,i})-\mathcal{B}_{\psi_i}(f,f_{t+1,i})
            +\frac{\lambda}{2}\Vert \nabla_{i(s_t),i}\Vert^2_{\mathcal{H}_i}+
            \left\langle f_{t,i}-f,\nabla_{t,i}-\nabla_{i(s_t),i}\right\rangle,
        \end{align*}
        where the last inequality comes from Lemma \ref{lemma:JCST2022:property_of_OMD}.
        Next we analyze the third term.
        \begin{align*}
            \sum_{t\in T_1}\langle f_{t,i}-f,\nabla_{t,i}-\nabla_{i(s_t),i}\rangle
            \leq& 2U\cdot\sum_{t\in T_1}
            \frac{\left\vert \ell'(f_t({\bm x}_t),y_t)\right\vert}
            {\sqrt{1+\sum_{\tau\in T_1,\tau\leq t}
            \left\vert\ell'(f_{\tau}({\bm x}_{\tau}),y_{\tau})\right\vert}}\\
            \leq&4U\sqrt{G_2\hat{L}_{1:T}}.
        \end{align*}
        \textbf{Case 2: regret in} $\bar{T}_1$\\
        We use a different decomposition as follows
        \begin{align*}
            &\langle f_{t,i}-f,\nabla_{t,i}\rangle\\
            =&\underbrace{\langle f_{t+1,i}-f,\tilde{\nabla}_{t,i}\rangle}_{\Xi_1}+
            \underbrace{\langle f_{t+1,i}-f,\nabla_{t,i}-\tilde{\nabla}_{t,i}\rangle}_{\Xi_2}
            +\underbrace{\langle f_{t,i}-f_{t+1,i},\nabla_{t,i}\rangle}_{\Xi_3}\\
            \leq&\mathcal{B}_{\psi_i}(f,f_{t,i})
            -\mathcal{B}_{\psi_i}(f,f_{t+1,i})-\mathcal{B}_{\psi_i}(f_{t+1,i},f_{t,i})
            +\langle f_{t+1,i}-f,\nabla_{t,i}-\tilde{\nabla}_{t,i}\rangle+\\
            &\langle f_{t,i}-f_{t+1,i},\tilde{\nabla}_{t,i}\rangle
            +\langle f_{t,i}-f_{t+1,i},\nabla_{t,i}-\tilde{\nabla}_{t,i}\rangle\\
            =&\mathcal{B}_{\psi_i}(f,f_{t,i})-\mathcal{B}_{\psi_i}(f,f_{t+1,i})+
            \left\langle f_{t,i}-f,\nabla_{t,i}-\tilde{\nabla}_{t,i}\right\rangle
            +
            \langle f_{t,i}-f_{t+1,i},\tilde{\nabla}_{t,i}\rangle
            -\mathcal{B}_{\psi_i}(f_{t+1,i},f_{t,i})\\
            \leq&\mathcal{B}_{\psi_i}(f,f_{t,i})-\mathcal{B}_{\psi_i}(f,f_{t+1,i})+
            \left\langle f_{t,i}-f,\nabla_{t,i}-\tilde{\nabla}_{t,i}\right\rangle
            +\frac{\lambda_i}{2}\Vert \tilde{\nabla}_{t,i}\Vert^2_{\mathcal{H}_i}.
        \end{align*}
        \textbf{Case 3: regret in} $\mathcal{J}$\\
        We decompose $\left\langle f_{t,i}-f,\nabla_{t,i}\right\rangle$
        into three terms as
        in \textbf{Case 2}.
        The second mirror updating is
        $$
            f_{t+1,i}=\mathop{\arg\min}_{f\in\mathbb{H}_i}
            \left\{\langle f,\tilde{\nabla}_{t,i}\rangle+\mathcal{B}_{\psi_{i}}(f,\bar{f}_{t,i}(2))\right\}.
        $$
        Similar to the analysis of \textbf{Case 2}, we obtain
        \begin{align*}
            \Xi_1\leq& \mathcal{B}_{\psi_i}(f,f_{t,i})-\mathcal{B}_{\psi_i}(f,f_{t+1,i})
            -\mathcal{B}_{\psi_i}(f_{t+1,i},\bar{f}_{t,i}(2))+
            [\mathcal{B}_{\psi_i}(f,\bar{f}_{t,i}(2))-\mathcal{B}_{\psi_i}(f,f_{t,i})],\\
            \Xi_3=& \langle \bar{f}_{t,i}(2)-f_{t+1,i},\tilde{\nabla}_{t,i}\rangle
            +\langle f_{t,i}-\bar{f}_{t,i}(2),\tilde{\nabla}_{t,i}\rangle+
            \langle f_{t,i}-f_{t+1,i},\nabla_{t,i}-\tilde{\nabla}_{t,i}\rangle.
        \end{align*}
        Combining $\Xi_1$, $\Xi_2$ and $\Xi_3$ gives
        \begin{align}
            &\left\langle f_{t,i}-f,\nabla_{t,i}\right\rangle\nonumber\\
            \leq& \mathcal{B}_{\psi_i}(f,f_{t,i})-\mathcal{B}_{\psi_i}(f,f_{t+1,i})
            +\langle f_{t,i}-f,\nabla_{t,i}-\tilde{\nabla}_{t,i}\rangle+\nonumber\\
            &\underbrace{\mathcal{B}_{\psi_i}(f,\bar{f}_{t,i}(2))-\mathcal{B}_{\psi_i}(f,f_{t,i})
            +\langle f_{t,i}-\bar{f}_{t,i}(2),\tilde{\nabla}_{t,i}\rangle}_{\Xi_4}+
            \langle \bar{f}_{t,i}(2)-f_{t+1,i},\tilde{\nabla}_{t,i}\rangle
            -\mathcal{B}_{\psi_i}(f_{t+1,i},\bar{f}_{t,i}(2))
            \label{eq:appendix_Xi_4}\\
            \leq&\mathcal{B}_{\psi_i}(f,f_{t,i})-\mathcal{B}_{\psi_i}(f,f_{t+1,i})
            +\langle f_{t,i}-f,\nabla_{t,i}-\tilde{\nabla}_{t,i}\rangle+\nonumber\\
            &\frac{2U^2}{\lambda_i}+4UG_1+\langle \bar{f}_{t,i}(2)-f_{t+1,i},\tilde{\nabla}_{t,i}\rangle
            -\mathcal{B}_{\psi_i}(f_{t+1,i},\bar{f}_{t,i}(2))\quad\quad\quad
            \mathrm{by}~\mathrm{Lemma}~\ref{lemma:JCST2022:property_of_OMD}\nonumber\\
            \leq&\mathcal{B}_{\psi_i}(f,f_{t,i})-\mathcal{B}_{\psi_i}(f,f_{t+1,i})
            +\langle f_{t,i}-f,\nabla_{t,i}-\tilde{\nabla}_{t,i}\rangle+
            \frac{2U^2}{\lambda_i}+4UG_1+\frac{\lambda_i}{2}\Vert\tilde{\nabla}_{t,i}\Vert^2_{\mathcal{H}_i}.\nonumber
        \end{align}
        Combining the regret in $T_1$, $\mathcal{J}$ and $\bar{T}_1$ gives
        \begin{align*}
            \mathcal{T}_2\leq&4U\sqrt{G_2\hat{L}_{1:T}}
            +\left(\frac{2U^2}{\lambda_i}+4UG_1\right) \vert \mathcal{J}\vert+
            \underbrace{\sum_{t\in \bar{T}_1\cup \mathcal{J}}
            \langle f_{t,i}-f,\nabla_{t,i}-\tilde{\nabla}_{t,i}\rangle}_{\Xi_{2,1}}+\\
            &\sum^T_{t=1}\left(\mathcal{B}_{\psi_i}(f,f_{t,i})-\mathcal{B}_{\psi_i}(f,f_{t+1,i})\right)
            +\lambda_i\underbrace{\left(\frac{1}{2}\sum_{t\in \bar{T}_1\cup \mathcal{J}}
            \Vert \tilde{\nabla}_{t,i}\Vert^2_{\mathcal{H}_i}
            +\sum_{t\in T_1}\frac{1}{2}\Vert \nabla_{i(s_t),i}\Vert^2_{\mathcal{H}_i}\right)}_{\Xi_{2,2}}\\
            \leq&
            4U\sqrt{G_2\hat{L}_{1:T}}
            +\left(\frac{U^2}{2\lambda_i}+4UG_1\right) \vert \mathcal{J}\vert+
            \Xi_{2,1}+\frac{U^2}{2\lambda_i}+\Xi_{2,2}.
        \end{align*}
        Lemma \ref{lem:JCST2022:the_Second_Moment_of_Gradient_smooth_loss_v1} gives,
        with probability at least $1-\Theta(\lceil\ln{T}\rceil)\delta$,
        \begin{align*}
        \Xi_{2,1}\leq& \frac{4}{3}UG_1\ln\frac{1}{\delta}
            +2U\sqrt{2G_2G_1\hat{L}_{1:T}\ln\frac{1}{\delta}},\\
        \Xi_{2,2}
            \leq&G_1G_2\hat{L}_{1:T}+
            \frac{2}{3}G^2_1\ln\frac{1}{\delta}
            +2\sqrt{G^3_1G_2\hat{L}_{1:T}\ln\frac{1}{\delta}}.
        \end{align*}

        Let $\lambda_i=\frac{2U}{\sqrt{B}G_1}$.
        Using Lemma \ref{lem:JCST2022:number_epoches_smooth_loss_functions}
        and combining $\mathcal{T}_1$ and $\mathcal{T}_2$ gives,
        with probability at least $1-\Theta(\lceil\ln{T}\rceil)\delta$,
        \begin{align*}
            \mathrm{Reg}(f)=&\hat{L}_{1:T}-L_{T}(f)\\
            \leq&\mathcal{T}_1 + \mathcal{T}_2\\
            \leq& 10U\sqrt{G_2G_1\hat{L}_{1:T}\ln\frac{1}{\delta}}+
            \frac{UG_1\sqrt{B}}{4}+\frac{6UG_2\hat{L}_{1:T}}{\sqrt{B-\frac{4}{3}\ln\frac{1}{\delta}}},
        \end{align*}
        where we omit the constant terms and the lower order terms.
        Let $\gamma=\frac{6UG_2}{\sqrt{B-\frac{4}{3}\ln\frac{1}{\delta}}}$
        and $U\leq\frac{1}{8G_2}\sqrt{B-\frac{4}{3}\ln\frac{1}{\delta}}$.
        Then $1-\gamma\geq \frac{1}{4}$.
        Solving for $\hat{L}_{1:T}$ concludes the proof.

        Finally,
        we explain why it must be satisfied that $K\leq d$.
        The space complexity of M-OMD-S is $O(KB+dB+K)$.
        According to Assumption \ref{ass:JCST2022:reduction},
        the coefficient $\alpha$ only depends on $d$.
        If $K\leq d$, then the space complexity of M-OMD-S is $O(dB)$.
        In this case, $B=\Theta(\alpha\mathcal{R})$.
        If $K>d$, then the space complexity is $O(KB)$.
        M-OMD-S must allocate the memory resource over $K$ hypotheses.
        For instance, if $K=d^{\nu}$, $\nu>1$, then $B=\Theta(K^{\frac{1-v}{v}}\alpha\mathcal{R})$.
        Thus the regret bound will increase a factor of order $O(K^{\frac{v-1}{2v}})$.
    \end{proof}

\section{Proof of Theorem \ref{thm:JCST2022:lower_bound_small_loss}}

\begin{proof}
    Let $\kappa({\bm x},{\bm v})=\langle {\bm x},{\bm v}\rangle^p$.
    The adversary first constructs $\mathcal{I}_T$.
    For $1\leq t\leq 3B$,
    let ${\bm x}_t={\bm e}_t$
    where ${\bm e}_t$ is the standard basis vector in $\mathbb{R}^d$.
    Let $y_t=1$ if $t$ is odd. Otherwise, $y_t=-1$.
    For $3B+1\leq t\leq T$,
    let $({\bm x}_{t},y_t)\in \{({\bm x}_{\tau},y_{\tau})\}^{3B}_{\tau=1}$ uniformly.

    We construct a competitor as follows,
    $$
        \bar{f}_{\mathbb{H}}
        =\frac{U}{\sqrt{3B}}\cdot\sum^{3B}_{t=1}y_t\kappa({\bm x}_t,\cdot).
    $$
    It is easy to prove
    \begin{align*}
        L_T(\bar{f}_{\mathbb{H}})=&T\cdot\ln\left(1+\exp\left(-\frac{U}{\sqrt{3B}}\right)\right),\\
        \Vert \bar{f}_{\mathbb{H}}\Vert_{\mathcal{H}}=&U.
    \end{align*}
    Thus $\bar{f}_{\mathbb{H}}\in\mathbb{H}$.\\
    Let $\mathcal{A}$ be an algorithm storing $B$ examples at most.
    At the beginning of round $t$,
    let
    $$
        f_t=\sum_{i\leq B} a^{(t)}_{i}\kappa({\bm x}^{(t)}_i,\cdot)
    $$
    be the hypothesis maintained by $\mathcal{A}$
    where ${\bm x}^{(t)}_i\in\{{\bm x}_1,\ldots,{\bm x}_{t-1}\}$.
    Besides,
    it must be satisfied that
    \begin{equation}
    \label{eq:JCST2022:norm_constraint}
        \Vert f_t\Vert_{\mathcal{H}}=\sqrt{\sum_{i\leq B}\vert a^{(t)}_i\vert^2}\leq U.
    \end{equation}
    It is easy to obtain
    $
        \sum^{3B}_{t=1}\ell(f_t({\bm x}_t),y_t)= 3\ln(2)B.
    $
    For any $t\geq 3B+1$,
    the expected per-round loss is
    $$
        \mathbb{E}\left[\ell(f_t({\bm x}_t),y_t)\right]
        =\frac{2}{3}\ln(2)+\frac{1}{3B}\sum_{i\leq B}\ln(1+\exp(-\vert a^{(t)}_i\vert)).
    $$
    Note that $\vert a^{(t)}_1\vert,\ldots,\vert a^{(t)}_B\vert$ must satisfy \eqref{eq:JCST2022:norm_constraint}.
    By the Lagrangian multiplier method,
    the minimum is obtained at $\vert a^{(t)}_i\vert=\frac{U}{\sqrt{B}}$ for all $i$.
    Then we have
    $$
        \mathbb{E}\left[\ell(f_t({\bm x}_t),y_t)\right]
        \geq\frac{2}{3}\ln(2)+\frac{\ln(1+\exp(-\frac{U}{\sqrt{B}}))}{3}.
    $$
    It can be verified that
    $$
        \forall 0<x\leq 0.2,\quad\ln(1+\exp(-x))\leq \ln(2)-0.45x.
    $$
    Let $B< T$ and $U\leq \frac{1}{5}\sqrt{3B}$.
    The expected regret w.r.t. $\bar{f}_{\mathbb{H}}$ is lower bounded as follows
    \begin{align*}
        \mathbb{E}\left[\mathrm{Reg}(\bar{f}_{\mathbb{H}})\right]
        \geq& 3\ln(2)B+(T-3B)\cdot\frac{\ln(1+\exp(-\frac{U}{\sqrt{B}}))}{3}+
        \frac{2}{3}\ln(2)\cdot(T-3B)-
        T\cdot\ln\left(1+\exp\left(-\frac{U}{\sqrt{3B}}\right)\right)\\
        =&\left(\frac{2}{3}T+B\right)\cdot\left(\ln(2)-\ln\left(1+\exp\left(-\frac{U}{\sqrt{3B}}\right)\right)\right)+
        \frac{1}{3}(T-3B)\ln\frac{1+\exp(-\frac{U}{\sqrt{B}})}{1+\exp(-\frac{U}{\sqrt{3B}})}\\
        \geq&\frac{\sqrt{3}}{10}\cdot \frac{UT}{\sqrt{B}}+\frac{1}{3}\left(\frac{\sqrt{3}}{3}-1\right)\cdot \frac{UT}{\sqrt{B}}.
    \end{align*}
    It can be verified that $L_T(\bar{f}_{\mathbb{H}})=\Theta(T)$.
    Replacing $T$ with $\Theta(L_T(\bar{f}_{\mathbb{H}}))$ concludes the proof.
\end{proof}

\section{Proof of Theorem \ref{thm:JCST2022:algorithm_dependent:M-OMD-S}}

\begin{proof}
    There is a $\xi_i\in(0,4]$ such that
    $$
        \Lambda_i=\sum_{t\in \mathcal{J}}
        \left[\left\Vert \bar{f}_{t,i}(2)-f\right\Vert^2_{\mathcal{H}_i}
        -\Vert f_{t,i}-f\Vert^2_{\mathcal{H}_i}\right]
        \leq \xi_i U^2\vert \mathcal{J}\vert.
    $$
    We can rewrite $\Xi_4$ in \eqref{eq:appendix_Xi_4} as follow,
    $$
        \sum_{t\in\mathcal{J}}
        \Xi_4\leq \left(\frac{\xi_i U^2}{2\lambda_i}+4UG_1\right)\cdot \vert \mathcal{J}\vert.
    $$
    If $\xi_i\leq\frac{1}{\vert\mathcal{J}\vert}$, then
    $\frac{\xi_i U^2}{2\lambda_i}\cdot\vert\mathcal{J}\vert\leq\frac{U^2}{2\lambda_i}$.
    Let $\lambda_i=\frac{U}{\sqrt{G_1G_2\hat{L}_{1:T}}}$.
    In this way,
    we obtain a new upper bound on $\mathcal{T}_2$.
    Combining $\mathcal{T}_1$ and $\mathcal{T}_2$ gives
    \begin{align*}
        \hat{L}_{1:T}-L_{T}(f)
        \leq 12U\sqrt{G_2G_1\hat{L}_{1:T}\ln\frac{1}{\delta}}+
        \frac{16UG_2\hat{L}_{1:T}}{B-\frac{4}{3}\ln\frac{1}{\delta}}+4UG_1\ln\frac{1}{\delta}.
    \end{align*}
    Let $\gamma=16UG_2(B-\frac{4}{3}\ln\frac{1}{\delta})^{-1}$
    and $U<\frac{B-\frac{4}{3}\ln\frac{1}{\delta}}{32G_2}$.
    We have $\gamma\leq\frac{1}{2}$.
    Solving for $\hat{L}_{1:T}$ concludes the proof.
\end{proof}

\section{Auxiliary Lemmas}

    \begin{lemmaAppend}[\cite{Hazan2009Better}]
    \label{lem:JCST2022:variance_reservoir_estimator}
        $\forall t> M$ and $\forall i\in[K]$,
        $\mathbb{E}[\Vert\hat{\nabla}_{t,i}-\mu_{t,i}\Vert^2_{\mathcal{H}_i}]
        \leq\frac{1}{t\vert V\vert}\mathcal{A}_{t,\kappa_i}$.
    \end{lemmaAppend}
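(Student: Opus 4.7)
The plan is to reduce the claim to the classical finite-population variance formula, applied in the RKHS $\mathcal{H}_i$. The crux is that for $t>M$ the reservoir $V_t$ is (by construction) a uniformly random size-$M$ subset of $\{1,\ldots,t\}$ drawn without replacement, so $\hat{\nabla}_{t,i}$ is the sample mean of the random vectors $Y_\tau := -y_\tau \kappa_i(\bm x_\tau,\cdot) \in \mathcal{H}_i$, while $\mu_{t,i} = \tfrac{1}{t}\sum_{\tau=1}^t Y_\tau$ is the population mean. I would start by proving the reservoir invariant by induction on $t$: given that after round $t-1$ each $M$-subset of $\{1,\ldots,t-1\}$ is equally likely, the update rule (accept $(\bm x_t,y_t)$ with probability $M/t$, otherwise leave $V$ unchanged; on acceptance evict a uniformly random element) preserves uniformity over $M$-subsets of $\{1,\ldots,t\}$. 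This makes $\hat{\nabla}_{t,i}$ an unbiased estimator of $\mu_{t,i}$ marginally.

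Next I would compute the second moment. Since the estimator is unbiased, one has
\begin{equation*}
\mathbb{E}\bigl\|\hat{\nabla}_{t,i}-\mu_{t,i}\bigr\|^2_{\mathcal{H}_i}
= \mathbb{E}\bigl\|\hat{\nabla}_{t,i}\bigr\|^2_{\mathcal{H}_i} - \|\mu_{t,i}\|^2_{\mathcal{H}_i}.
\end{equation*}
Writing $\hat{\nabla}_{t,i}=\tfrac{1}{M}\sum_{j=1}^M Y_{\pi(j)}$ for a uniform without-replacement sample $\pi$, I would expand the square and use the two standard identities for sampling without replacement: for any fixed $j$, $\pi(j)$ is uniform on $[t]$, giving $\mathbb{E}\|Y_{\pi(j)}\|^2 = \tfrac{1}{t}\sum_\tau \kappa_i(\bm x_\tau,\bm x_\tau)$; and for $j\neq k$, $(\pi(j),\pi(k))$ is uniform over ordered pairs of distinct indices, giving
\begin{equation*}
\mathbb{E}\langle Y_{\pi(j)},Y_{\pi(k)}\rangle = \frac{1}{t(t-1)}\Bigl(\bigl\|\textstyle\sum_\tau y_\tau\kappa_i(\bm x_\tau,\cdot)\bigr\|^2_{\mathcal{H}_i} - \sum_\tau \kappa_i(\bm x_\tau,\bm x_\tau)\Bigr)
= \frac{t^2\|\mu_{t,i}\|^2_{\mathcal{H}_i} - \sum_\tau \kappa_i(\bm x_\tau,\bm x_\tau)}{t(t-1)}.
\end{equation*}

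Substituting and simplifying (the $M$ diagonal and $M(M-1)$ off-diagonal contributions), the $\|\mu_{t,i}\|^2_{\mathcal{H}_i}$ terms combine and cancel cleanly to give the finite-population-corrected form
\begin{equation*}
\mathbb{E}\bigl\|\hat{\nabla}_{t,i}-\mu_{t,i}\bigr\|^2_{\mathcal{H}_i}
= \frac{t-M}{M(t-1)}\cdot\frac{1}{t}\Bigl(\sum_{\tau=1}^t \kappa_i(\bm x_\tau,\bm x_\tau) - \tfrac{1}{t}\bm Y^{\mathrm T} \bm K_i \bm Y\Bigr)
= \frac{t-M}{M(t-1)}\cdot\frac{\mathcal{A}_{t,\kappa_i}}{t}.
\end{equation*}
Since $(t-M)/(t-1)\le 1$ and $|V|=M$, this is at most $\mathcal{A}_{t,\kappa_i}/(t|V|)$, which is the stated bound. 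The only real obstacle is bookkeeping: keeping the finite-population correction factor straight during the expansion, and being careful about the paper's convention (whether $V_t$ reflects the reservoir before or after the round-$t$ update) so that the summation range in $\mu_{t,i}$ matches the sample space of $\hat{\nabla}_{t,i}$. Once that is pinned down, no analytic tools beyond the Hilbert-space inner product and elementary combinatorics are needed.
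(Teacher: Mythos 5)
Your proposal is correct. Note, however, that the paper does not prove this lemma at all: it is imported verbatim from Hazan and Kale (2009), so there is no in-paper argument to compare against. Your route is the standard one underlying that citation: the reservoir invariant (a uniformly random $M$-subset of $[t]$ without replacement, provable by the induction you sketch), followed by the finite-population variance formula for the sample mean, which yields $\mathbb{E}\Vert\hat{\nabla}_{t,i}-\mu_{t,i}\Vert^2_{\mathcal{H}_i}=\frac{t-M}{M(t-1)}\cdot\frac{1}{t}\bigl(\sum_{\tau\le t}\Vert Y_\tau\Vert^2_{\mathcal{H}_i}-t\Vert\mu_{t,i}\Vert^2_{\mathcal{H}_i}\bigr)$ and then the claimed bound since $\frac{t-M}{t-1}\le 1$. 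Two small points of bookkeeping that you already anticipate and that do check out: the identity $\sum_{\tau\le t}\Vert Y_\tau\Vert^2_{\mathcal{H}_i}-t\Vert\mu_{t,i}\Vert^2_{\mathcal{H}_i}=\mathcal{A}_{t,\kappa_i}$ uses $y_\tau^2=1$ (for $y_\tau\in[-1,1]$ more generally one only gets ``$\le$'', which still suffices since $\kappa_i\ge 0$ on the diagonal); and the index convention must be aligned so that the reservoir used in $\hat{\nabla}_{t,i}$ samples exactly the population over which $\mu_{t,i}$ averages, which is how the lemma is invoked inside the proof of Lemma A.2. With those pinned down, your argument is complete and in fact supplies a self-contained proof the paper leaves to the reference.
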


    \begin{lemmaAppend}
    \label{lem:JCST2022:analysis_second_moment_TV}
        Let $\eta_t$ follow \eqref{eq:JCST2022:updating_sampling_probability}
        and ${\bm p}_1$ be the uniform distribution.
        Then
        $$
            \sum^T_{t=1}\sum^K_{i=1}\frac{\eta_t}{2}p_{t,i}c^2_{t,i}
            \leq \sqrt{2\ln{K}}
            \sqrt{\sum^{T}_{\tau=1}\sum^K_{i=1}p_{\tau,i}c^2_{\tau,i}}
            +\frac{\sqrt{2\ln{K}}}{2}\max_{t,i}c_{t,i}.
        $$
    \end{lemmaAppend}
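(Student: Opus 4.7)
The plan is to substitute the closed form of $\eta_t$ into the left-hand side and reduce the claim to a purely analytic inequality about a non-negative scalar sequence. Writing $a_t := \sum_i p_{t,i} c_{t,i}^2$ and $A_t := 1 + \sum_{\tau \leq t} a_\tau$ (so that $A_0 = 1$), the definition in \eqref{eq:JCST2022:updating_sampling_probability} gives $\eta_t = \sqrt{2\ln K}/\sqrt{A_{t-1}}$, and after dividing through by $\sqrt{2\ln K}/2$ the lemma is equivalent to showing
\[
\sum_{t=1}^{T}\frac{a_t}{\sqrt{A_{t-1}}} \;\leq\; 2\sqrt{A_T - 1} + \max_{t,i}c_{t,i}.
\]

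To attack this I would use the algebraic identity
\[
\frac{a_t}{\sqrt{A_{t-1}}} \;=\; \frac{a_t}{\sqrt{A_t}} + a_t\left(\frac{1}{\sqrt{A_{t-1}}} - \frac{1}{\sqrt{A_t}}\right),
\]
which splits the sum into a dominant telescoping piece and a correction. For the first piece, the classical inequality $x/\sqrt{y} \leq 2(\sqrt{y} - \sqrt{y-x})$ for $0 \leq x \leq y$ telescopes to give $\sum_t a_t/\sqrt{A_t} \leq 2(\sqrt{A_T} - \sqrt{A_0}) = 2(\sqrt{A_T}-1)$, after which concavity of $\sqrt{\cdot}$ yields $\sqrt{A_T} - 1 \leq \sqrt{A_T - 1}$, producing the leading term. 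For the correction piece, the individual factors $1/\sqrt{A_{t-1}} - 1/\sqrt{A_t}$ are non-negative and themselves telescope, so pulling out the worst-case coefficient gives
\[
\sum_t a_t\left(\frac{1}{\sqrt{A_{t-1}}} - \frac{1}{\sqrt{A_t}}\right) \;\leq\; (\max_t a_t)\left(\frac{1}{\sqrt{A_0}} - \frac{1}{\sqrt{A_T}}\right) \;\leq\; \max_t a_t,
\]
and $a_t \leq \max_i c_{t,i}^2$ is immediate from the definition of $a_t$.

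The main obstacle I anticipate is the final identification step: combining the two pieces produces the stated main term together with a residual of order $\max_{t,i} c_{t,i}^2$ rather than $\max_{t,i} c_{t,i}$. Under the normalization typically used in PEA analysis ($c_{t,i} \in [0,1]$) the two coincide via $c^2 \leq c$; to match the statement exactly in the general case one likely needs a finer pairing that splits $[T]$ according to whether $A_{t-1}$ lies below or above a threshold on the order of $\max_{t,i} c_{t,i}^2$, using the crude bound $a_t/\sqrt{A_{t-1}} \leq a_t$ on the initial ``warm-up'' indices (whose total $a_t$-mass telescopes to a small constant, since the first time $A_{t-1}$ exceeds the threshold, $\sum_{\tau < t} a_\tau$ is itself comparable to the threshold) and the shifted-denominator telescoping argument above on the remaining indices where $A_{t-1}$ already dominates single increments.
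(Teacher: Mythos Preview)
Your reduction and two-piece decomposition are correct and in fact somewhat cleaner than the paper's argument. The paper proceeds differently: instead of splitting the denominator as $1/\sqrt{A_{t-1}}=1/\sqrt{A_t}+(1/\sqrt{A_{t-1}}-1/\sqrt{A_t})$, it splits the \emph{numerator} as $\sigma_t=\sigma_{t-1}+(\sigma_t-\sigma_{t-1})$ for $t\geq2$, applies the lagged telescoping bound $\sigma_{t-1}/\sqrt{A_{t-1}}\leq 2(\sqrt{A_{t-1}}-\sqrt{A_{t-2}})$ to the first part, and handles $\sum_{t\geq2}(\sigma_t-\sigma_{t-1})/\sqrt{A_{t-1}}$ by Abel summation. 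Both routes land on the same conclusion
\[
\sum_{t=1}^T\frac{\sigma_t}{\sqrt{A_{t-1}}}\;\leq\;2\sqrt{\sum_{\tau=1}^T\sigma_\tau}+\max_t\sigma_t.
\]
Your decomposition avoids the Abel bookkeeping and the separate treatment of the $t=1$ term, at the cost of needing the extra observation $\sqrt{A_T}-1\leq\sqrt{A_T-1}$; the paper's version keeps the denominator fixed but has to track several boundary terms that eventually cancel.

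On the residual: your diagnosis is correct, but the ``obstacle'' is not yours to fix. The paper's own proof also terminates with $\max_t\sigma_t$ (this is literally the last displayed inequality before ``which concludes the proof''), and $\max_t\sigma_t\leq\max_{t,i}c_{t,i}^2$, not $\max_{t,i}c_{t,i}$. So the lemma as stated carries a typo---it should read $\max_{t,i}c_{t,i}^2$, or else it is silently using $c_{t,i}\in[0,1]$. In the paper's applications this term is a lower-order constant absorbed into the final regret bound, so the distinction is immaterial downstream. You do not need the warm-up/threshold refinement you sketch; your first two steps already reproduce exactly what the paper proves.
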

    \begin{proof}
    Let $\sigma_\tau=\sum^K_{i=1}p_{\tau,i}c^2_{\tau,i}$ and $\sigma_0=1$.
    We decompose the term as follows
    \begin{align*}
        \frac{\sqrt{\ln{K}}}{\sqrt{2}}\cdot\sum^T_{t=1}\frac{\sigma_t}{\sqrt{1+\sum^{t-1}_{\tau=1}\sigma_\tau}}
        =&\frac{\sqrt{\ln{K}}}{\sqrt{2}}\cdot\sum^T_{t=1}\frac{\sigma_t}{\sqrt{\sum^{t-1}_{\tau=0}\sigma_\tau}}\\
        =&\frac{\sqrt{\ln{K}}}{\sqrt{2}}\cdot\left[\sigma_1
        +\sum^T_{t=2}\frac{\sigma_{t-1}}{\sqrt{1+\sum^{t-1}_{\tau=1}\sigma_\tau}}
        +\sum^T_{t=2}\frac{\sigma_t-\sigma_{t-1}}{\sqrt{1+\sum^{t-1}_{\tau=1}\sigma_\tau}}\right].
    \end{align*}
    We analyze the third term.
    \begin{align*}
        \sum^T_{t=2}\frac{\sigma_{t}-\sigma_{t-1}}{\sqrt{1+\sum^{t-1}_{\tau=1}\sigma_\tau}}
        =&\frac{-\sigma_{1}}{\sqrt{1+\sigma_1}}
        +\frac{\sigma_{T}}{\sqrt{1+\sum^{T-1}_{\tau=1}\sigma_\tau}}+
        \sum^{T-1}_{t=2}\sigma_t\left[\frac{1}{\sqrt{1+\sum^{t-1}_{\tau=1}\sigma_\tau}}
        -\frac{1}{\sqrt{1+\sum^{t}_{\tau=1}\sigma_\tau}}\right]\\
        \leq&-\frac{\sigma_{1}}{\sqrt{1+\sigma_1}}
        +\max_{t=1,\ldots,T}\sigma_t\cdot\frac{1}{\sqrt{1+\sigma_1}}.
    \end{align*}
    Now we analyze the second term.
    \begin{align*}
        \sum^T_{t=2}\frac{\sigma_{t-1}}{\sqrt{1+\sum^{t-1}_{\tau=1}\sigma_\tau}}
        =\frac{\sigma_{1}}{\sqrt{1+\sigma_1}}
        +\sum^T_{t=3}\frac{\sigma_{t-1}}{\sqrt{1+\sum^{t-1}_{\tau=1}\sigma_\tau}}.
    \end{align*}
        For any $a> 0$ and $b> 0$,
        we have $2\sqrt{a}\sqrt{b}\leq a+b$.
        Let $a=1+\sum^{t-1}_{\tau=1}\sigma_\tau$ and
            $b=1+\sum^{t-2}_{\tau=1}\sigma_\tau$.
        Then we have
        \begin{align*}
            2\sqrt{1+\sum^{t-1}_{\tau=1}\sigma_\tau}
            \cdot\sqrt{1+\sum^{t-2}_{\tau=1}\sigma_\tau}
            \leq 2\left(1+\sum^{t-1}_{\tau=1}\sigma_\tau\right)-\sigma_{t-1}.
        \end{align*}
        Dividing by $\sqrt{a}$ and rearranging terms yields
        \begin{align*}
            \frac{1}{2}\frac{\sigma_{t-1}}{\sqrt{1+\sum^{t-1}_{\tau=1}\sigma_\tau}}
            \leq \sqrt{1+\sum^{t-1}_{\tau=1}\sigma_\tau}
            -\sqrt{1+\sum^{t-2}_{\tau=1}\sigma_\tau}.
        \end{align*}
        Summing over $t=3,\ldots,T$,
        we obtain
        \begin{align*}
            &\sum^T_{t=3}\frac{\sigma_{t-1}}{\sqrt{1+\sum^{t-1}_{\tau=1}\sigma_\tau}}
            \leq 2\sqrt{1+\sum^{T-1}_{\tau=1}\sigma_\tau}
            -2\sqrt{1+\sigma_1}.
        \end{align*}
        Summing over all results, we have
        \begin{align*}
            \sum^T_{t=1}\frac{\sigma_t}{\sqrt{1+\sum^{t-1}_{\tau=1}\sigma_\tau}}
            \leq &2\sqrt{1+\sum^{T-1}_{\tau=1}\sigma_\tau}
            -2\sqrt{1+\sigma_1}+\max_t\sigma_t\cdot\frac{1}{\sqrt{1+\sigma_1}}+\sigma_1\\
            \leq&2\sqrt{\sum^{T}_{\tau=1}\sigma_\tau}+\max_t\sigma_t,
        \end{align*}
        which concludes the proof.
    \end{proof}

    \begin{lemmaAppend}[Improved Bernstein's inequality \cite{Li2024On}]
    \label{lem:JCST2022:improved:Bernstein_ineq_for martingales}
        Let $X_1,\ldots,X_n$ be a bounded martingale difference sequence w.r.t. the filtration
        $\mathcal{F}=(\mathcal{F}_k)_{1\leq k\leq n}$ and with $\vert X_k\vert\leq a$.
        Let $Z_t=\sum^t_{k=1}X_{k}$ be the associated martingale.
        Denote the sum of the conditional variances by
        $$
            \Sigma^2_n=\sum^n_{k=1}\mathbb{E}\left[X^2_k\vert\mathcal{F}_{k-1}\right]\leq v,
        $$
        where $v\in[0,V]$ is a random variable and $V\geq 2$ is a constant.
        Then for all constants $a>0$,
        with probability at least $1-2\lceil\log{V}\rceil\delta$,
        $$
            \max_{t=1,\ldots,n}Z_t <
            \frac{2a}{3}\ln\frac{1}{\delta}+\sqrt{\frac{2}{V}\ln\frac{1}{\delta}}+2\sqrt{v\ln\frac{1}{\delta}}.
        $$
    \end{lemmaAppend}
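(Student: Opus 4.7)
The plan is to reduce the random-variance setting to the classical martingale Bernstein (Freedman) inequality via a standard \emph{peeling} argument over dyadic levels of the upper bound $v$. First I would invoke the classical Bernstein inequality for martingales: for a bounded martingale difference sequence $(X_k)$ with $|X_k|\leq a$ and a \emph{deterministic} bound $\Sigma_n^2\leq\sigma^2$, one has for every $\delta\in(0,1)$
\[
\Pr\!\left(\max_{t\leq n} Z_t \geq \tfrac{2a}{3}\ln\tfrac{1}{\delta}+\sqrt{2\sigma^2\ln\tfrac{1}{\delta}}\right)\leq \delta.
\]
A version with a stopping time handles the running maximum; equivalently one uses the supermartingale $\exp(\lambda Z_t-\phi(\lambda)\Sigma_t^2)$ with $\phi(\lambda)=\lambda^2/(2(1-a\lambda/3))$ and Doob's inequality. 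The key point is that this classical statement requires $\sigma^2$ to be non-random, whereas here $v$ is random.

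Next I would peel the interval $[0,V]$ containing $v$ into a logarithmic number of dyadic pieces. Let $N=\lceil\log_2 V\rceil$ and set
\[
I_k=\bigl(V/2^{k+1},\,V/2^{k}\bigr]\quad\text{for }k=0,\ldots,N-1,\qquad I_N=[0,V/2^N].
\]
These $N+1$ intervals cover $[0,V]$. On the event $\{\Sigma_n^2\leq V/2^k\}$ the deterministic bound $\sigma^2=V/2^k$ is available, so the classical Bernstein inequality yields, with probability at least $1-\delta$,
\[
\max_{t\leq n} Z_t \cdot \mathbb{I}_{\{\Sigma_n^2\leq V/2^k\}}\;\leq\; \tfrac{2a}{3}\ln\tfrac{1}{\delta}+\sqrt{\tfrac{2V}{2^k}\ln\tfrac{1}{\delta}}.
\]
A union bound over $k=0,1,\ldots,N$ then costs a factor $(N+1)\leq 2\lceil\log V\rceil$ in the probability, giving overall failure probability at most $2\lceil\log V\rceil\,\delta$.

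On the complement event, $v$ lies in some $I_k$. For $k\leq N-1$ we have $v\geq V/2^{k+1}$, hence $V/2^k\leq 2v$ and the variance term is bounded by $2\sqrt{v\ln(1/\delta)}$, matching the third summand in the claim. For $k=N$ the variance budget shrinks to $V/2^N$; calibrating the dyadic grid (either by taking the final interval to correspond to the threshold $V/2^N\approx 1/V$, or by absorbing the short piece into a residual additive term) produces the stated $\sqrt{(2/V)\ln(1/\delta)}$ correction. Summing the three contributions delivers the lemma.

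\textbf{Main obstacle.} The only real subtlety is the last dyadic level. The classical Bernstein bound yields a $\sqrt{V/2^N}$ variance contribution on $I_N$ which must be matched with the $\sqrt{2/V}$ residual appearing in the statement; this forces a careful choice of $N$ (essentially $2^N\asymp V^2$) and correspondingly inflates the union-bound factor by a constant, which is then absorbed into $2\lceil\log V\rceil$. Everything else is a routine application of Freedman's inequality combined with the union bound, and in particular the $\tfrac{2a}{3}\ln(1/\delta)$ term is preserved because it is independent of the variance level used in each stratum.
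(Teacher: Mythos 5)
The paper does not prove this lemma at all: it is imported verbatim from the cited reference \cite{Li2024On}, so there is no in-paper argument to compare against. Your plan --- Freedman's inequality for the running maximum combined with a dyadic peeling of the variance level from $V$ down to $1/V$ and a union bound over the $O(\log V)$ strata --- is the standard and correct route to exactly this statement, and your handling of the two regimes ($V/2^k\le 2v$ on interior strata, the $\sqrt{(2/V)\ln(1/\delta)}$ residual on the bottom stratum) is right. The only loose end is trivial bookkeeping: with $2^N\asymp V^2$ you get $\lceil 2\log_2 V\rceil+1$ strata rather than exactly $2\lceil\log V\rceil$, so the stated failure probability requires either a marginally coarser grid or absorbing the extra stratum into the ceiling; this does not affect the substance of the argument.
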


    \begin{lemmaAppend}
    \label{lem:JCST2022:the_Second_Moment_of_Gradient_smooth_loss_v1}
        With probability at least $1-\Theta(\lceil\ln{T}\rceil)\delta$,
        \begin{align*}
            \sum_{t\in\bar{T}_1\cup \mathcal{J}}\frac{1}
            {(\mathbb{P}[b_t=1])^2}\left\Vert\nabla_{t,i}\right\Vert^2_{\mathcal{H}_i}\cdot\mathbb{I}_{b_t=1}
            \leq& \sum_{t\in\bar{T}_1\cup \mathcal{J}}\frac{\left\Vert\nabla_{t,i}\right\Vert^2_{\mathcal{H}_i}}{\mathbb{P}[b_t=1]}
            +\frac{4G^2_1}{3}\ln\frac{1}{\delta}
            +4\sqrt{G^3_1G_2\hat{L}_{1:T}\ln\frac{1}{\delta}}.
        \end{align*}
    \end{lemmaAppend}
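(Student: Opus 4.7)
}
The plan is to view the inequality as a high-probability bound on a sum of martingale differences and then invoke Lemma \ref{lem:JCST2022:improved:Bernstein_ineq_for martingales}. For every $t\in\bar{T}_1\cup\mathcal{J}$ define
\begin{equation*}
X_t=\frac{\Vert\nabla_{t,i}\Vert^2_{\mathcal{H}_i}}{(\mathbb{P}[b_t=1])^2}\mathbb{I}_{b_t=1}-\frac{\Vert\nabla_{t,i}\Vert^2_{\mathcal{H}_i}}{\mathbb{P}[b_t=1]},
\end{equation*}
and let $\mathcal{F}_{t-1}$ be the $\sigma$-algebra generated by $\{b_\tau\}_{\tau<t}$ and the incoming example up to round $t$. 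Because $\nabla_{t,i}$ and $\mathbb{P}[b_t=1]$ are $\mathcal{F}_{t-1}$-measurable and $\mathbb{E}[\mathbb{I}_{b_t=1}\mid\mathcal{F}_{t-1}]=\mathbb{P}[b_t=1]$, we have $\mathbb{E}[X_t\mid\mathcal{F}_{t-1}]=0$, so $\{X_t\}$ is a bounded martingale difference sequence.

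Next I would control the range and the conditional variance using the explicit form of the sampling scheme. Since $\mathbb{P}[b_t=1]=\Vert\nabla_{t,i}\Vert_{\mathcal{H}_i}/(\Vert\nabla_{t,i}\Vert_{\mathcal{H}_i}+G_1\sqrt{\kappa_i({\bm x}_t,{\bm x}_t)})$, the identity
\begin{equation*}
\frac{\Vert\nabla_{t,i}\Vert^2_{\mathcal{H}_i}}{\mathbb{P}[b_t=1]}=\Vert\nabla_{t,i}\Vert_{\mathcal{H}_i}\bigl(\Vert\nabla_{t,i}\Vert_{\mathcal{H}_i}+G_1\sqrt{\kappa_i({\bm x}_t,{\bm x}_t)}\bigr)
\end{equation*}
together with Assumption \ref{ass:JCST2022:property_smooth_loss}(i) (which yields $\Vert\nabla_{t,i}\Vert_{\mathcal{H}_i}\le G_1\sqrt{\kappa_i({\bm x}_t,{\bm x}_t)}\le G_1$ under Assumption \ref{ass:JCST2022:bounded_kernel}) gives $|X_t|\le 2G_1^2$, hence I may take $a=2G_1^2$.

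For the variance I would compute
\begin{equation*}
\mathbb{E}[X_t^2\mid\mathcal{F}_{t-1}]\le\frac{\Vert\nabla_{t,i}\Vert^4_{\mathcal{H}_i}}{(\mathbb{P}[b_t=1])^3}=\Vert\nabla_{t,i}\Vert_{\mathcal{H}_i}\bigl(\Vert\nabla_{t,i}\Vert_{\mathcal{H}_i}+G_1\sqrt{\kappa_i({\bm x}_t,{\bm x}_t)}\bigr)^3\le 8G_1^3\,\Vert\nabla_{t,i}\Vert_{\mathcal{H}_i}.
\end{equation*}
Now the smoothness property Assumption \ref{ass:JCST2022:property_smooth_loss}(ii) enters crucially: $\Vert\nabla_{t,i}\Vert_{\mathcal{H}_i}=|\ell'(f_t({\bm x}_t),y_t)|\sqrt{\kappa_i({\bm x}_t,{\bm x}_t)}\le G_2\ell(f_t({\bm x}_t),y_t)$, so summing gives $\sum_t\mathbb{E}[X_t^2\mid\mathcal{F}_{t-1}]\le 8G_1^3G_2\hat{L}_{1:T}$. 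This is a random quantity, but it is bounded a priori by a deterministic $V=O(TG_1^2)$, which is exactly the setting handled by the improved Bernstein inequality.

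Finally, applying Lemma \ref{lem:JCST2022:improved:Bernstein_ineq_for martingales} with $a=2G_1^2$ and $v=8G_1^3G_2\hat{L}_{1:T}$ bounds $\sum_{t\in\bar{T}_1\cup\mathcal{J}}X_t$ by $\tfrac{4G_1^2}{3}\ln(1/\delta)+2\sqrt{8G_1^3G_2\hat{L}_{1:T}\ln(1/\delta)}$ up to lower order terms, which yields the claimed inequality after absorbing the $\sqrt{(2/V)\ln(1/\delta)}$ term into the constants. The main technical obstacle is simply packaging the data-dependent bound on the conditional variance into a form amenable to Lemma \ref{lem:JCST2022:improved:Bernstein_ineq_for martingales}: the random nature of $\hat{L}_{1:T}$ is what forced the use of the \emph{improved} Bernstein variant (with the $\lceil\log V\rceil$ union bound over variance levels) rather than the standard version, and the failure probability $\Theta(\lceil\ln T\rceil)\delta$ comes from this union bound.
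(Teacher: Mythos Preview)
Your proposal is correct and follows exactly the paper's approach: define the same martingale difference $X_t$, bound $|X_t|\le 2G_1^2$, bound the sum of conditional variances by a multiple of $G_1^3G_2\hat{L}_{1:T}$, and invoke Lemma~\ref{lem:JCST2022:improved:Bernstein_ineq_for martingales}. The only slip is in the variance constant: you drop the factor $(1-\mathbb{P}[b_t=1])=G_1\sqrt{\kappa_i({\bm x}_t,{\bm x}_t)}/Z_t$ and bound $\mathbb{E}[X_t^2\mid\mathcal{F}_{t-1}]$ by $\Vert\nabla_{t,i}\Vert_{\mathcal{H}_i}Z_t^3\le 8G_1^3\Vert\nabla_{t,i}\Vert_{\mathcal{H}_i}$, whereas keeping that factor gives $\Vert\nabla_{t,i}\Vert_{\mathcal{H}_i}Z_t^2\cdot G_1\sqrt{\kappa_i({\bm x}_t,{\bm x}_t)}\le 4G_1^3\Vert\nabla_{t,i}\Vert_{\mathcal{H}_i}$, which is what yields the stated coefficient $4$ rather than $4\sqrt{2}$ in front of the square-root term.
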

    \begin{proof}
        Define a random variable $X_t$ by
        $$
            X_t=\frac{\Vert\nabla_{t,i}\Vert^2_{\mathcal{H}_i}}
            {(\mathbb{P}[b_t=1])^2}\mathbb{I}_{b_t=1}
            -\frac{\Vert\nabla_{t,i}\Vert^2_{\mathcal{H}_i}}{\mathbb{P}[b_t=1]},\quad
            \vert X_t\vert\leq 2G^2_1.
        $$
        $\{X_{t}\}_{t\in \bar{T}_1}$ forms bounded martingale difference
        w.r.t. $\{b_\tau\}^{t-1}_{\tau=1}$.
        The sum of conditional variances is
        $$
            \Sigma^2\leq \sum_{t\in \bar{T}_1\cup \mathcal{J}}\mathbb{E}[X^2_t]
            \leq 4G^3_1G_2\hat{L}_{1:T}.
        $$
        Using Lemma \ref{lem:JCST2022:improved:Bernstein_ineq_for martingales}
        concludes the proof.
    \end{proof}

    \begin{lemmaAppend}
    \label{lem:JCST2022:proof_small_loss_bound:gradient_variance}
        Let $\Delta_t=\langle f_{t,i}-f,\nabla_{t,i}-\tilde{\nabla}_{t,i}\rangle$.
        With probability at least $1-\Theta(\lceil\ln{T}\rceil)\delta$,
        $$
            \sum_{t\in \bar{T}_1}\Delta_t
            \leq\frac{4UG_1}{3}\ln\frac{1}{\delta}
            +2U\sqrt{2G_2G_1\hat{L}_{1:T}\ln\frac{1}{\delta}}.
        $$
    \end{lemmaAppend}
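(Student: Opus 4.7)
The plan is to apply the improved Bernstein inequality for martingales (Lemma \ref{lem:JCST2022:improved:Bernstein_ineq_for martingales}) to the sequence $\{\Delta_t\}_{t\in \bar{T}_1}$ with respect to the filtration generated by $\{b_\tau\}_{\tau<t}$. The pattern mirrors exactly the proof of Lemma \ref{lem:JCST2022:the_Second_Moment_of_Gradient_smooth_loss_v1}: identify the martingale difference, establish a deterministic range, then bound the sum of conditional variances by a quantity proportional to $\hat{L}_{1:T}$ rather than $T$. That last step is where the smoothness assumption must do its work.

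First I would verify the martingale property. In the smooth loss setting one has $\hat{\nabla}_{t,i}=0$, so $\tilde{\nabla}_{t,i}=\nabla_{t,i}\mathbb{I}_{b_t=1}/\mathbb{P}[b_t=1]$ and hence $\mathbb{E}[\tilde{\nabla}_{t,i}\mid \mathcal{F}_{t-1}]=\nabla_{t,i}$, which gives $\mathbb{E}[\Delta_t\mid \mathcal{F}_{t-1}]=0$. For the range, I would write $\Delta_t=\ell'(f_t({\bm x}_t),y_t)\,(f_{t,i}({\bm x}_t)-f({\bm x}_t))\,(1-\mathbb{I}_{b_t=1}/\mathbb{P}[b_t=1])$. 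The reproducing property plus $\kappa_i({\bm x}_t,{\bm x}_t)\le 1$ yields $|f_{t,i}({\bm x}_t)-f({\bm x}_t)|\le 2U$, while the explicit form of $Z_t=\Vert\nabla_{t,i}\Vert_{\mathcal{H}_i}+G_1\sqrt{\kappa_i({\bm x}_t,{\bm x}_t)}$ guarantees $\Vert\nabla_{t,i}\Vert/\mathbb{P}[b_t=1]\le \Vert\nabla_{t,i}\Vert+G_1\le 2G_1$. Together these give $|\Delta_t|\le a$ with $a$ of order $UG_1$, matching the $\tfrac{4}{3}UG_1\ln\tfrac{1}{\delta}$ term in the target inequality.

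Next I would bound the sum of conditional variances. A direct computation gives $\mathbb{E}[\Vert\tilde{\nabla}_{t,i}-\nabla_{t,i}\Vert^2\mid \mathcal{F}_{t-1}]=\Vert\nabla_{t,i}\Vert^2(1/\mathbb{P}[b_t=1]-1)=G_1\sqrt{\kappa_i({\bm x}_t,{\bm x}_t)}\Vert\nabla_{t,i}\Vert_{\mathcal{H}_i}\le G_1\Vert\nabla_{t,i}\Vert_{\mathcal{H}_i}$. Combining with Cauchy--Schwarz yields $\mathbb{E}[\Delta_t^2\mid \mathcal{F}_{t-1}]\le 4U^2G_1\Vert\nabla_{t,i}\Vert_{\mathcal{H}_i}\le 4U^2G_1\cdot |\ell'(f_t({\bm x}_t),y_t)|$. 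Now invoking Assumption \ref{ass:JCST2022:property_smooth_loss}(ii), $|\ell'|\le G_2\ell$, turns this into $4U^2G_1G_2\,\ell(f_t({\bm x}_t),y_t)$, whose sum across $t\in \bar{T}_1$ is at most $4U^2G_1G_2\hat{L}_{1:T}$. This is the crux: using self-bounding of the gradient by the loss is what converts a potentially $\Theta(T)$ variance sum into $\Theta(\hat{L}_{1:T})$, yielding a small-loss bound rather than a worst-case $\sqrt{T}$ term.

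Finally I would apply Lemma \ref{lem:JCST2022:improved:Bernstein_ineq_for martingales} with the stated $a$ and $v=4U^2G_1G_2\hat{L}_{1:T}$, using a sufficiently large deterministic upper envelope $V$ so that $\lceil\log V\rceil=\Theta(\lceil\ln T\rceil)$. The resulting tail inequality reads $\sum_{t\in\bar{T}_1}\Delta_t\le \tfrac{2a}{3}\ln\tfrac{1}{\delta}+2\sqrt{v\ln\tfrac{1}{\delta}}$ with probability at least $1-\Theta(\lceil\ln T\rceil)\delta$, which after absorbing constants matches the claimed bound. The main obstacle I anticipate is bookkeeping the constants cleanly; the algebra requires a tight variance estimate (via the specific form of $Z_t$) so that the $\sqrt{2}$ rather than $2$ factor inside the square root survives, but the structure is routine once the self-bounding step is in place.
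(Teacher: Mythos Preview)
Your proposal is correct and follows exactly the approach the paper intends: the paper's entire proof reads ``The proof is similar to that of Lemma~\ref{lem:JCST2022:the_Second_Moment_of_Gradient_smooth_loss_v1}'', and you have faithfully carried out that template---verify the martingale-difference structure of $\Delta_t$, bound its range via $\lvert f_{t,i}({\bm x}_t)-f({\bm x}_t)\rvert\le 2U$ and the explicit form of $\mathbb{P}[b_t=1]$, bound the conditional variance using the self-bounding property $\lvert\ell'\rvert\le G_2\ell$, and invoke Lemma~\ref{lem:JCST2022:improved:Bernstein_ineq_for martingales}. The one point worth flagging is the constant: your variance estimate $\Sigma^2\le 4U^2G_1G_2\hat{L}_{1:T}$ yields $4U\sqrt{G_1G_2\hat{L}_{1:T}\ln(1/\delta)}$ rather than the stated $2U\sqrt{2G_1G_2\hat{L}_{1:T}\ln(1/\delta)}$, a $\sqrt{2}$ discrepancy you already anticipated; the paper is loose with constants throughout and this does not affect any downstream result.
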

    \begin{proof}
        The proof is similar to that of Lemma \ref{lem:JCST2022:the_Second_Moment_of_Gradient_smooth_loss_v1}.
    \end{proof}

    \begin{lemmaAppend}
    \label{lemma:JCST2022:property_of_OMD}
        For each $i\in[K]$, let $\psi_i(f)=\frac{1}{2\lambda_i}\Vert f\Vert^2_{\mathcal{H}_i}$.
        Then
        $\mathcal{B}_{\psi_i}(f,g)=\frac{1}{2\lambda_i}\Vert f-g\Vert^2_{\mathcal{H}_i}$ and
        the solutions of \eqref{eq:JCST2022:OMD_first_updating}
        and \eqref{eq:JCST2022:OMD_second_updating}
        are as follows
        \begin{align*}
            f_{t,i}=&f'_{t-1,i}-\lambda_i\hat{\nabla}_{t,i},\\
            f'_{t,i}=&
            \min\left\{1,\frac{U}{\left\Vert f'_{t-1,i}-\lambda_i\nabla_{t,i}\right\Vert_{\mathcal{H}_i}}\right\}
            \cdot\left(f'_{t-1,i}-\lambda_i\nabla_{t,i}\right).
        \end{align*}
        Similarly, we can obtain the solution of \eqref{eq:JCST2022:smooth_loss:mirror_descent}
        $$
            f_{t+1,i}
            =\min\left\{1,\frac{U}{\left\Vert f_{t,i}-\lambda_i\tilde{\nabla}_{t,i}\right\Vert_{\mathcal{H}_i}}\right\}
            \cdot\left(f_{t,i}-\lambda_i\tilde{\nabla}_{t,i}\right).
        $$
        Besides,
        $$
        \langle f_{t,i}-f_{t+1,i},\tilde{\nabla}_{t,i}\rangle
            -\mathcal{B}_{\psi_i}(f_{t+1,i},f_{t,i})
            \leq\frac{\lambda_i}{2}\left\Vert\tilde{\nabla}_{t,i}\right\Vert^2_{\mathcal{H}_i}.
        $$
    \end{lemmaAppend}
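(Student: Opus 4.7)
The plan is to handle the four claims in the lemma in sequence, each via a short direct calculation: (a) the closed form of the Bregman divergence, (b) the unconstrained minimizer giving $f_{t,i}$, (c) the constrained minimizers giving $f'_{t,i}$ and $f_{t+1,i}$, and (d) the strong-convexity inequality. All four reduce to the fact that $\psi_i$ is a (scaled) squared-norm, so its gradient is linear and the Bregman divergence degenerates to a Euclidean-type distance in the RKHS.

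For (a), I apply the definition $\mathcal{B}_{\psi_i}(f,g)=\psi_i(f)-\psi_i(g)-\langle\nabla\psi_i(g),f-g\rangle$ with $\nabla\psi_i(g)=g/\lambda_i$, then expand and collect terms to obtain $\frac{1}{2\lambda_i}\Vert f-g\Vert_{\mathcal{H}_i}^2$. For (b), I write out the objective in \eqref{eq:JCST2022:OMD_first_updating} using this form and complete the square: $\langle f,\hat{\nabla}_{t,i}\rangle+\frac{1}{2\lambda_i}\Vert f-f'_{t-1,i}\Vert_{\mathcal{H}_i}^2 = \frac{1}{2\lambda_i}\Vert f-(f'_{t-1,i}-\lambda_i\hat{\nabla}_{t,i})\Vert_{\mathcal{H}_i}^2 + \mathrm{const}$, which, since the optimization in \eqref{eq:JCST2022:OMD_first_updating} is over the whole space $\mathcal{H}_i$, has unique minimizer $f_{t,i}=f'_{t-1,i}-\lambda_i\hat{\nabla}_{t,i}$.

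For (c), I apply the same completion-of-square trick to \eqref{eq:JCST2022:OMD_second_updating}, which shows the problem is equivalent to projecting the unconstrained step $f'_{t-1,i}-\lambda_i\nabla_{t,i}$ onto the norm ball $\mathbb{H}_i=\{f:\Vert f\Vert_{\mathcal{H}_i}\leq U\}$. Since $\mathbb{H}_i$ is a closed convex ball in a Hilbert space, metric projection is the radial retraction $h\mapsto \min\{1,U/\Vert h\Vert\}\cdot h$, yielding the stated closed form for $f'_{t,i}$. The same reasoning applied to \eqref{eq:JCST2022:smooth_loss:mirror_descent} with $\tilde{\nabla}_{t,i}$ replacing $\nabla_{t,i}$ and $f_{t,i}$ replacing $f'_{t-1,i}$ gives the formula for $f_{t+1,i}$.

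For (d), using (a) I rewrite $\mathcal{B}_{\psi_i}(f_{t+1,i},f_{t,i})=\frac{1}{2\lambda_i}\Vert f_{t+1,i}-f_{t,i}\Vert_{\mathcal{H}_i}^2$, then apply Young's inequality $\langle u,v\rangle\leq \frac{1}{2\lambda_i}\Vert u\Vert_{\mathcal{H}_i}^2+\frac{\lambda_i}{2}\Vert v\Vert_{\mathcal{H}_i}^2$ with $u=f_{t,i}-f_{t+1,i}$ and $v=\tilde{\nabla}_{t,i}$; the quadratic terms in $\Vert f_{t,i}-f_{t+1,i}\Vert_{\mathcal{H}_i}^2$ cancel exactly against $\mathcal{B}_{\psi_i}(f_{t+1,i},f_{t,i})$, leaving $\frac{\lambda_i}{2}\Vert\tilde{\nabla}_{t,i}\Vert_{\mathcal{H}_i}^2$. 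There is no substantive obstacle in this lemma; the only point needing a brief justification is that projection onto a Hilbert-space ball admits the radial closed form, which follows from a one-line KKT/Lagrange argument (the projection lies on the ball boundary when the unconstrained iterate has norm exceeding $U$, and is otherwise the iterate itself).
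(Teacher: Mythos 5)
Your proposal is correct and follows essentially the same route as the paper: closed-form Bregman divergence from the squared-norm regularizer, unconstrained/projected gradient steps obtained by completing the square (the paper invokes the Lagrangian multiplier method for the ball constraint, which is your KKT remark), and the final inequality via completing the square, which is equivalent to your application of Young's inequality. No gaps.
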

    \begin{proof}
        We can solve \eqref{eq:JCST2022:OMD_first_updating}
        and \eqref{eq:JCST2022:OMD_second_updating} by Lagrangian multiplier method.
        Next we prove the last inequality.
        \begin{align*}
            \langle f_{t,i}-f_{t+1,i},\tilde{\nabla}_{t,i}\rangle
            -\mathcal{B}_{\psi_i}(f_{t+1,i},f_{t,i})
            =&\left\langle f_{t,i}-f_{t+1,i},\tilde{\nabla}_{t,i}\right\rangle
            -\frac{\Vert f_{t+1,i}-f_{t,i}\Vert^2_{\mathcal{H}_i}}{2\lambda_i}\\
            =&\frac{\lambda_i}{2}\left\Vert\tilde{\nabla}_{t,i}\right\Vert^2_{\mathcal{H}_i}
           -\frac{1}{2\lambda_i}\left\Vert f_{t+1,i}-f_{t,i} -\lambda_i \hat{\nabla}_i\right\Vert^2_{\mathcal{H}_i}\\
           \leq&\frac{\lambda_i}{2}\left\Vert\tilde{\nabla}_{t,i}\right\Vert^2_{\mathcal{H}_i},
        \end{align*}
        which concludes the proof.
    \end{proof}


\end{document}